\documentclass[11pt]{article}

\usepackage[margin=1in]{geometry}

\usepackage[utf8]{inputenc}
\usepackage[T1]{fontenc}
\usepackage{microtype}
\usepackage{array}
\usepackage{tabularx}
\usepackage{amsmath,amssymb,amsfonts,amsthm,mathtools}
\usepackage{url}
\usepackage{booktabs}
\usepackage{nicefrac}
\usepackage{algorithm2e}
\usepackage{enumitem}
\usepackage{bbm}
\usepackage[authoryear,round]{natbib}
\usepackage{subcaption}
\usepackage{multirow}
\usepackage[dvipsnames]{xcolor}
\usepackage[hidelinks]{hyperref}
\hypersetup{
    colorlinks=true,
    linkcolor=MidnightBlue,
    citecolor=MidnightBlue,
    urlcolor=MidnightBlue
}
\usepackage{tikz}
\usepackage{amsmath,amssymb}
\usepackage{subcaption}    
\usetikzlibrary{calc,arrows.meta}

\newtheorem{theorem}{Theorem}[section]
\newtheorem{proposition}[theorem]{Proposition}
\newtheorem{corollary}[theorem]{Corollary}
\newtheorem{lemma}[theorem]{Lemma}

\theoremstyle{definition}
\newtheorem{definition}[theorem]{Definition}
\newtheorem{assumption}[theorem]{Assumption}
\newtheorem{example}[theorem]{Example}
\newtheorem{principle}[theorem]{Principle}

\DeclareMathOperator{\Proj}{Proj}
\DeclareMathOperator{\dist}{dist}

\DeclareMathOperator*{\argmin}{arg\,min}

\newcommand{\R}{\mathbb{R}}
\newcommand{\N}{\mathbb{N}}

\newcommand{\Y}{\mathcal{Y}}
\newcommand{\F}{\mathcal{F}}

\newcommand{\tarimpgap}{\Delta_T^{\text{gap}}}
\newcommand{\baseregret}{\Delta_T^{\text{base}}}
\newcommand{\tarmvment}{\Delta_T^{\text{mv}}}
\newcommand{\ip}[2]{\langle #1, #2 \rangle}
\newcommand{\norm}[1]{\| #1 \|}
\newcommand{\dnorm}[1]{\| #1 \|_*}
\newcommand{\distnorm}{\dist_{\norm{\cdot}}}
\newcommand{\Projnorm}[2]{\Proj^{\norm{\cdot}}_{#1}\!\left(#2\right)}
\newcommand{\abs}[1]{| #1 |}
\newcommand{\1}{\mathbbm{1}}
\newcommand{\0}{\mathbf{0}}
\newcommand{\FigureGraphic}[2][]{%
  \IfFileExists{figures/#2}{\includegraphics[#1]{figures/#2}}{%
    \IfFileExists{#2}{\includegraphics[#1]{#2}}{\includegraphics[#1]{figures/#2}}%
  }%
}
\newcommand{\QueueInput}[1]{%
  \IfFileExists{figures/queue_diagram/#1}{\input{figures/queue_diagram/#1}}{%
    \IfFileExists{#1}{\input{#1}}{\input{figures/queue_diagram/#1}}%
  }%
}

\SetKwInput{KwInput}{Input}
\RestyleAlgo{ruled}
\allowdisplaybreaks

\title{Optimal Hidden-Target Learning for Online Inventory Optimization on General Convex Sets}
\author{
  Anthony Pineci \\
  UIUC \\
  \texttt{apineci2@illinois.edu}
  \and
  Yunzong Xu \\
  UIUC \\
  \texttt{xyz@illinois.edu}
}
\date{}

\begin{document}
\addtocontents{toc}{\protect\setcounter{tocdepth}{-1}}

\maketitle

\begin{abstract}
Online inventory optimization (OIO) is online convex optimization with physical memory: inventory carryover makes the feasible action set depend on the past. A natural principle, used in stochastic inventory learning and recently in OIO under a single linear capacity constraint, is to maintain a hidden target chosen by an online learner and implement its projection onto the currently feasible order-up-to set. We prove that this simple principle is optimal for OIO on arbitrary bounded convex capacity sets. With online gradient descent as the base learner, the method improves the best known regret guarantee for OIO on general convex sets from inverse to inverse-square-root dependence on the common-demand probability, and we prove a matching lower bound. The same principle gives the first polylogarithmic regret guarantee for strongly convex losses and the first dynamic regret guarantee adapting to Euclidean path variation on general convex capacity sets.

The analysis introduces a norm alignment principle: the right state variable is the distance from the hidden target to the feasible set, measured in the same norm as the projection. Under norm alignment, this distance evolves pathwise as a scalar queue, with target movement as arrival and common demand as service. This reduction to one-dimensional queue control resolves the state dependence and extends the guarantees to general convex capacity sets, beyond the reach of prior productwise approaches. Experiments on synthetic and real-world inventory data corroborate the theory.
\end{abstract}

\section{Introduction}
\label{sec:introduction}

Inventory control is a foundational problem in operations research and stochastic control~\citep{arrow1951optimal,arrow1958studies,dvoretzky1952inventory,dvoretzky1952inventory2,bellman1955optimal,porteus2002foundations}; classical inventory models were among the early motivations for dynamic programming.\footnote{According to \cite{bellman1958review}, ``Two of the most interesting classes of dynamic programming processes, viewed from the vantage points of both analysis and application, are those of inventory control and production smoothing.''} This paper studies its canonical periodic-review formulation: in each period, a retailer, warehouse, or fulfillment system selects target stock levels for multiple products, subject to general capacity constraints, while demand depletes inventory and costs are incurred over time. Recent work has brought this classical model into contact with online learning; see the survey by \citet{chao2023online}. From an online-learning viewpoint, each period in inventory control resembles a round in online convex optimization (OCO): the learner chooses an action, observes local cost information, and is evaluated against a comparator. The analogy is incomplete, however, because it misses the central physical feature of inventory control: stock carries over. Inventory can be replenished componentwise up to a chosen order-up-to vector, but it can decrease only when demand consumes it. Thus a gradient step may point toward a lower stock vector, while the inventory already on the shelf makes that target physically infeasible.

The online inventory optimization (OIO) model of \citet*{hihat2023online} is a recent formalization at the interface of inventory control and adversarial online learning. In OIO, the learner chooses an order-up-to vector in a capacity set $\Y\subseteq\R_+^n$, but the chosen vector must also dominate the current inventory state. If the inventory state were always zero, OIO would reduce to standard OCO on $\Y$~\citep{zinkevich2003online,shalev2012online,hazan2016introduction,orabona2019modern}. Otherwise, inventory carryover makes feasibility state-dependent: past actions constrain future choices. Compared with classical periodic-review inventory models~\citep{snyder_stochastic_2019}, OIO does not require a known stochastic demand model. Compared with data-driven inventory learning under censored demand~\citep{huh_nonparametric_2009,shi_nonparametric_nodate,zhang_technical_2018,lyu_minibatch_2024,guo2026online}, OIO broadens the learning formulation in two directions: losses and demand may vary adversarially over time, and capacity constraints may be arbitrary bounded convex sets.

The first general-convex OIO algorithm, MaxCOSD~\citep{hihat2023online}, handles the state-dependent feasibility through adaptive cycles: it waits for moments when a proposed update to the implemented order level is compatible with the inventory state. This idea is robust enough to work for arbitrary bounded convex capacity sets and adversarial losses. Under uniformly probably positive demand (UPPD), a non-degeneracy condition on the common demand (i.e., minimum demand over all products) with probability parameter $\mu$, its high-probability static regret is $\widetilde O(\sqrt T/\mu)$. The $1/\mu$ factor controls how long the system may wait before all products receive enough demand to reduce excess stock, and in large or intermittent-demand systems this waiting time can dominate the guarantee.

A different algorithmic principle has long been attractive in inventory learning. Rather than tracking the implemented level directly, a \textit{base learner} maintains a hidden desired level and projects it onto what the current state permits. This \textit{hidden-target} idea appears in queueing analyses for single-product learning~\citep{huh_nonparametric_2009}, in multiproduct algorithms under a linear warehouse constraint~\citep{shi_nonparametric_nodate}, and most explicitly in the recent hidden-target OIO method of \citet*{ichikawa2026nonstationary}. The principle is also empirically appealing: \citet{hihat2023online} reports that this idea can improve MaxCOSD's empirical performance in multi-product, multi-constraint settings, although no guarantee was known at MaxCOSD's level of generality. However, extending existing analytical techniques to the general capacity setting faces a methodological barrier. These analyses rely on the linear warehouse constraint and productwise sell-out structure, but Appendix~\ref{app:geometry} shows why such productwise control cannot be extended to general convex capacity sets. The reliance on productwise control also leads to additional polynomial dependence on the number of products $n$, which is undesirable from the norm-based viewpoint standard in OCO.

\begin{table}[htbp]
\centering
\small
\setlength{\tabcolsep}{3pt}
\renewcommand{\arraystretch}{1.0}
\begin{tabularx}{\linewidth}{
>{\raggedright\arraybackslash}p{0.27\linewidth}
>{\raggedright\arraybackslash}p{0.10\linewidth}
>{\raggedright\arraybackslash}p{0.15\linewidth}
>{\raggedright\arraybackslash}p{0.13\linewidth}
>{\raggedright\arraybackslash}X}
\toprule
Method (Reference) & Capacity Set & Demand Primitive & Regret type & Bound \\
\midrule
MaxCOSD~\citep{hihat2023online} & general & UPPD $\mu$ & static & $\widetilde{O}\!\bigl(\sqrt{T}/\mu\bigr)$ \\
\cmidrule(l){4-5}
\multirow{2}{=}{HT~\citep{ichikawa2026nonstationary}} & \multirow{2}{=}{linear} & \multirow{2}{=}{sell-out period $L_{\max}$} & static & $\widetilde O\!\bigl(n^{1/4}\sqrt{L_{\max}T}\bigr)$ \\
& & & dynamic & $\widetilde O\!\bigl(n^{1/4}\sqrt{L_{\max}(1+P_{T,1})T}\bigr)$ \\
\cmidrule(l){4-5}
\multirow{3}{=}{\textbf{HT~(Ours)}} & \multirow{3}{=}{general} & \multirow{3}{=}{UPPD $\mu$} & static (cvx) & \begin{tabular}[t]{@{}l@{}}$\widetilde O\!\bigl(\sqrt{T/\mu}\bigr)\,/\,\Omega\!\bigl(\sqrt{T/\mu}\bigr)$\\with further norm refinement\\via mirror descent\end{tabular} \\
 & & & static (s.c.) & $O(\log(T/\delta)\log (T) / \mu)$ \\
 & & & dynamic & $\widetilde O\!\bigl(\sqrt{(1+P_{T,2})T/\mu}\bigr)$ \\
\bottomrule
\end{tabularx}
\caption{Closest adversarial-demand OIO guarantees, suppressing problem-dependent constants. HT stands for hidden-target methods. For \citet{ichikawa2026nonstationary}, the $n^{1/4}$ term appears explicitly in intermediate bounds and is absorbed into the big-O notation of their final stated regret. Here $P_{T,1}$ and $P_{T,2}$ denote comparator path variation in $\ell_1$ and $\ell_2$, respectively. For \emph{(Ours)}, the static convex row lists the upper bound and matching lower bound; the mirror descent refinement keeps the same $\widetilde O(\sqrt{T/\mu})$ rate while exposing sharper norm-based parameters; the strongly convex (s.c.) and dynamic bounds appear on separate rows. Appendix~\ref{app:lmax} relates UPPD and $L_{\max}$.}
\label{tab:comparison}
\end{table}

\subsection{Our contributions} This paper shows that the simple and natural hidden-target projection method is already powerful enough for the full general-convex OIO model. The missing ingredient is not a more complicated algorithm, but a better geometric analysis. At each round, the base learner produces hidden targets $z_t \in \Y$. Given the current implementable set $J_t$, the algorithm implements the projection $y_t=\Projnorm{J_t}{z_t}$ measured in a given norm $\|\cdot\|$. We track the target-implementation gap $q_t=\distnorm(z_t,J_t)=\norm{z_t-y_t}$ in the \emph{same} norm $\norm{\cdot}$.
Letting $d_t^{\min}:=\min_i d_{t,i}$ denote common demand, we establish that $q_t$ is governed by a standard Lindley queue recursion:
\[
  q_{t+1}\le \bigl[q_t+\|z_{t+1}-z_t\|-{d}_t^{\min}\bigr]^+.
\]
This inequality is \emph{pathwise}: it holds for every realization of the inventory dynamics, with no averaging or distributional assumption. 
Its proof is purely geometric, using only the convexity of $\Y$, the inventory dynamics, and the norm alignment principle (Section~\ref{sec:norm_alignment}), rather than any special structure of the capacity set. Thus the target-implementation gap evolves as a one-dimensional queue: moving the learning target creates arrival while common demand supplies service.

At this level of generality, the queue reduction is surprising. Queueing ideas entered inventory learning through \citet{huh_nonparametric_2009} and \citet{shi_nonparametric_nodate}, who used them to track \emph{productwise} feasibility errors. Such queues naturally handle a single product or linear constraint, but extending them to general capacity constraints encounters genuine obstructions; see Appendix~\ref{app:geometry} and \cite{lyu_minibatch_2024}. Recent work has therefore replaced queueing with alternative devices, including minibatching, cyclic bookkeeping, or overshooting-loss control \citep{hihat2023online,lyu_minibatch_2024,ichikawa2026nonstationary,guo2026online}. Our reduction revives the queueing route by changing the state variable: the queue is the \emph{norm-aligned} distance from the hidden target to the implementable set. This geometric Lindley queue holds independent of specialized capacity structure, so its distribution-free control reduces OIO regret to the base learner's OCO regret plus a switching-cost term under the UPPD condition.

The reduction yields several state-of-the-art consequences, summarized in Table~\ref{tab:comparison}.
\begin{enumerate}
    \item Under an instantiation of the hidden-target method with a simple online gradient descent (OGD) base learner, our static regret results improve the $\mu$ dependence of the best known general-convex OIO guarantee from $\mu^{-1}$ to $\mu^{-1/2}$ and establish optimality via a new lower bound. Even under a single linear capacity constraint, our queue-based analysis removes the explicit $n^{1/4}$ factor exposed by the hidden-target analysis of \citet{ichikawa2026nonstationary}.
    \item Under strongly convex losses, the same algorithm obtains polylogarithmic regret, addressing an open direction raised by \citet{hihat2023online}.
    \item In dynamic environments, the bound depends on the Euclidean path variation $P_{T,2}$ rather than the larger $\ell_1$ variation $P_{T,1}$, sharpening \citet{ichikawa2026nonstationary}'s result on more general geometry. We instantiate this with OGD when $P_{T, 2}$ is known a priori and smoothed OGD (SOGD) \citep{zhang2022soco} when it is not; both achieve the same regret rate (up to polylogarithmic factors).
    \item The reduction is not tied to Euclidean projection. If the base learner is online mirror descent (OMD) with a strongly convex regularizer $R$, then a new HT-OMD algorithm obtains static regret $\widetilde O\!\bigl(G_*\sqrt{\mathcal B_R T/(\sigma\mu)}\bigr)$, where $G_*$ is the corresponding dual-gradient bound, $\sigma$ is the strong-convexity modulus of $R$, and $\mathcal B_R$ is the relevant Bregman diameter. Thus arbitrary bounded convex capacity sets admit geometry-sensitive refinements once paired with a suitable regularizer. This extends the recent use of OMD in stochastic inventory learning \citep{guo2026online} to adversarial OIO and  general convex capacity sets.
\end{enumerate} 

All our guarantees are \emph{dimension-free} in the standard norm-based sense: dimension enters through problem parameters such as the capacity set diameter, gradient norm, and $\mu$, but not through an additional explicit factor of dimension $n$. While the parameter $\mu$ can still deteriorate with dimension, our result isolates this difficulty in the demand primitive instead of adding a separate algorithmic dimension penalty.

\subsection{Relation to prior work}
\label{sec:related_work}

\paragraph{Online convex optimization.}
Projected online gradient descent gives the classical $O(\sqrt T)$ regret guarantee for Lipschitz convex losses~\citep{zinkevich2003online,shalev2012online,hazan2016introduction,orabona2019modern}, and logarithmic regret under curvature appears in the strongly convex and exp-concave OCO literature~\citep{hazan2007logarithmic}. Dynamic regret compares against a changing comparator sequence and is controlled by path variation~\citep{hall2013dynamical,jadbabaie2015online}. Our OGD and strongly-convex analyses build on standard OCO arguments; the contribution is to show that the additional complexity of carryover effects in inventory control can be reduced to a switching-cost term via hidden-target learning. For unknown path variation, we use the smoothed OCO meta-learner of \citet{zhang2022soco} as a base learner.

\paragraph{Stochastic inventory learning.}
Stochastic inventory learning asks how to make inventory decisions when demand distributions are unknown, often under censoring, lost sales, or capacity constraints; see \cite{chao2023online} for a survey. A recurring obstacle is implementability: a learning rule may propose an inventory target that is not feasible from the current state. One influential response is queueing-style analysis, from the single-product recursion of \citet{huh_nonparametric_2009} to the multiproduct  analysis of \citet{shi_nonparametric_nodate}. This queueing line has continued in several structured inventory-learning models
\citep{chen2020random_capacity,yuan2021fixed_costs,ding2024feature_based,yang_huh2024multiechelon}, but its arguments remain tied to a single product or linear capacity constraint. A complementary line has moved toward alternative algorithmic and analytical devices for refined guarantees or richer stochastic models. \citet{guo2026online} used cyclic online mirror descent to improve the product-dimension dependence in the linear-capacity setting of \citet{shi_nonparametric_nodate}; \citet{zhang_technical_2018} used cycle subgradients and a bridging construction for perishable inventory; and \citet{lyu_minibatch_2024} developed a minibatch-SGD metapolicy for several stochastic inventory systems, including multiproduct systems with multiple linear constraints and multiechelon serial systems.

Our contribution is complementary to both lines. We show that queueing structure persists in far more general capacity geometry once the right state variable is used. The hidden target can be generated by a standard online convex optimization method, while a geometric queue-clearing argument enforces implementability. Thus, in our setting, feasibility does not require cyclic schedules, bridging constructions, or minibatching. The result gives guarantees for adversarial loss sequences on general convex capacity sets.

\paragraph{Online inventory optimization.} To generalize stochastic inventory learning beyond i.i.d. demand, fixed dynamics, and newsvendor-specific losses, \citet{hihat2023online} introduced the adversarial OIO model, which contains OCO as a special case. Their MaxCOSD algorithm works for general convex capacity sets but controls learning through cycle boundaries, leading to $\mu^{-1}$ dependence under the UPPD parameter $\mu$. \citet{ichikawa2026nonstationary} analyzed hidden-target projection and obtained dynamic regret guarantees in nonstationary environments, but their proof crucially relies on the linear capacity set and specialized productwise sell-out analysis (see Appendix~\ref{app:geometry}). The present paper keeps the hidden-target principle and replaces the sell-out analysis by a Euclidean/norm-aligned queue, giving optimal $\mu$ dependence on general convex sets.

\paragraph{Learning-based inventory control.}
Deep reinforcement learning and simulation-based methods have recently become effective for large inventory systems with rich dynamics~\citep{madeka_deep_2022,alvo_deep_2025,maggiar_structure-informed_2025,bloem_simulated-based_2025}. These methods emphasize empirical performance, differentiable simulation, network structure, or supply-chain-scale policies. They are complementary to the present work, whose goal is to establish sharp adversarial regret guarantees for simple and lightweight online-learning methods.

\subsection{Notation and Organization} We denote $\N$ as the set of positive integers and write $[m] := \{1, \dots, m\}$ for any $m \in \N$. Let $\R^n$ denote $n$-dimensional Euclidean space, and let $\R^n_+$ and $\R^n_{++}$ denote its nonnegative and strictly positive orthants, respectively. For a scalar $a$, write $a^+:=\max\{a,0\}$; for a vector
$v\in\R^n$, $v^+$ denotes the coordinatewise nonnegative part,
$(v^+)_i:=\max\{v_i,0\}$. We use $a\succeq b$ for coordinatewise inequality. Throughout the framework, $\norm{\cdot}$ is an admissible norm with dual $\dnorm{\cdot}$ (see Section~\ref{sec:admissible_norms}), while instantiations in Section~\ref{sec:instantiations} make further specifications on the exact norm that is used.  Probabilistic statements are with respect to a filtration $(\F_t)_{t\ge0}$. The learner's action at round $t$ is $\F_{t-1}$-measurable, whereas the demand and loss chosen after the action are $\F_t$-measurable. For nonnegative quantities $f$ and $g$, we write $f=O(g)$ and $f=\Omega(g)$ to mean that $f\le Cg$ and $f\ge cg$, respectively, for constants $C,c>0$ independent of $T,\mu,L_{\max},n$. We write $f=\Theta(g)$ if both bounds hold. We use $\widetilde O(\cdot)$ to suppress polylogarithmic factors in $T$.

The rest of the paper is organized as follows. Section~\ref{sec:model} introduces the OIO model, the hidden-target projection algorithm, and the norm-alignment principle. Section~\ref{sec:queue_reduction} proves the geometric Lindley recursion and the general hidden-target reduction. Section~\ref{sec:instantiations} instantiates the reduction with OGD, SOGD, and OMD to obtain static, strongly convex, dynamic, and mirror-descent regret guarantees. Section~\ref{sec:experiments} presents numerical experiments, and Section~\ref{sec:conclusion} concludes. The appendix contains experiment details, the obstruction to productwise queue analyses on general convex sets, a discussion of UPPD, and all omitted proofs.

\section{Model and hidden-target projection}
\label{sec:model}

\subsection{The OIO model}

We work in the model of \citet{hihat2023online}, generalized to adaptive adversarial settings for the environment's choice of demand and loss function. The environment first sets the initial inventory state to zero, i.e. $x_1 = \0$, then for every round $t=1, \dots, T$:

\begin{enumerate}
    \item The learner observes an inventory state $x_t \in \R_+^n$ and must choose an order level $y_t \in \Y$ that obeys the \textit{inventory feasibility constraint}:
        \begin{equation}
        \label{eq:inv_feasibility_constraint}
          y_t \succeq x_t
        \end{equation}
        The inventory is instantly replenished to $y_t$. 
    \item The environment observes the learner's action and chooses a demand $d_t \in \R^n_+$, then a loss function $\ell_t : \Y \to \R$. The next inventory state $x_{t+1}\in \R_+^n$ is updated so that it adheres to the \textit{inventory dynamics constraint}:
        \begin{equation}
        \label{eq:inv_dynamics_constraint}
          x_{t+1} \preceq (y_t-d_t)^+
        \end{equation}
    \item The learner incurs a loss $\ell_t(y_t)$ and observes a subgradient $g_t \in \partial \ell_t(y_t)$ that may help them make a choice in the next round.
\end{enumerate}

We use the same convexity and boundedness assumptions as \citet{hihat2023online}:

\begin{assumption}[Convex and bounded OIO problem]
\label{ass:convex_and_bounded}
Fix an admissible norm $\norm{\cdot}$ on $\R^n$ (Definition~\ref{def:admissible_norm}) with dual norm $\dnorm{\cdot}$. The capacity set $\Y\subseteq\R_+^n$ is nonempty, closed, convex, and bounded with diameter $D:=\sup_{u,v\in\Y}\norm{u-v}$ measured in this norm. Each loss $\ell_t:\Y\to\R$ is convex, and every observed subgradient $g_t\in\partial\ell_t(y_t)$ satisfies $\dnorm{g_t}\le G_*$.
\end{assumption}

Equality in \eqref{eq:inv_dynamics_constraint} gives the usual lost-sales dynamics. The inequality allows for alternative dynamics incorporating additional depletion or perishability, as in \citet{hihat2023online}. If the state never binds, the only constraint is $y_t\in\Y$ and the model is ordinary OCO on $\Y$. The difficulty is precisely that the feasible action set at time $t$ is shaped by earlier order-up-to decisions.

Since the demand is chosen prior to the loss function, this framework accommodates the standard newsvendor loss used in the inventory literature:
\begin{equation}
\label{eq:newsvendor_loss}
  \ell_t(y;d_t)
  =
  \sum_{i=1}^n
  \Bigl(h_{t,i}(y_i-d_{t,i})^+ + p_{t,i}(d_{t,i}-y_i)^+\Bigr).
\end{equation}
Here $h_t, p_t \in \R_+^n$ correspond to the holding cost (overage cost) and lost-sales penalty (underage cost) respectively. Following the OIO framework \citep{hihat2023online}, we do not permit the loss function $\ell_t$ to depend on the current inventory $x_t$. However, under a lost-sales dynamic, newsvendor loss \eqref{eq:newsvendor_loss} can naturally accommodate purchasing costs using a standard cost transformation \citep[Paragraph~4.3.2.4]{snyder_stochastic_2019}.

The framework's requirement for subgradient observability does not require full demand observation. A realistic assumption is that the learner may only observe sales $s_t=\min\{y_t,d_t\}$ taken coordinatewise. In this case, the vector with coordinates given by 
\[
  h_{t,i}\1\{y_{t,i}>s_{t,i}\}-p_{t,i}\1\{y_{t,i}=s_{t,i}\}
\]
is an observable subgradient of \eqref{eq:newsvendor_loss} at $y_t$. This shows that our subgradient requirement is sufficient for the canonical inventory loss, while the analysis permits arbitrary convex losses with local subgradient feedback.

We measure algorithm performance using \textit{Regret}. For a comparator sequence $u_{1:T}\in\Y^T$, dynamic regret is
\[
  R_T(u_{1:T}) := \sum_{t=1}^T\bigl(\ell_t(y_t)-\ell_t(u_t)\bigr).
\]
Static regret is the special case $u_t\equiv u$.

\subsection{Hidden-target projection and the regret decomposition}

At time $t$, define the implementable set
\(
  J_t:=\Y\cap\{y\in\R^n\mid y\succeq x_t\}.
\)
This set is always nonempty: if $y_t\in J_t$, then \eqref{eq:inv_dynamics_constraint} gives $x_{t+1}\preceq y_t$, so the same point $y_t$ belongs to $J_{t+1}$. We work in the standard projection-oracle model of OCO, assuming nearest-point projections in the norm $\norm{\cdot}$ onto $\Y$ and onto $J_t$ can be computed.

Hidden-target projection lets an online learner move in the fixed set $\Y$ while a separate inventory layer enforces physical feasibility. When the base learner proposes $z_t\in\Y$, the implemented order-up-to level is
\begin{equation}
\label{eq:hidden-target-projection}
  y_t=\Projnorm{J_t}{z_t}\in\argmin_{y\in J_t}\norm{z_t-y}.
\end{equation}
This is the algorithmic principle behind projected base-stock methods such as DDM~\citep{shi_nonparametric_nodate} and the hidden-target OIO method of \citet{ichikawa2026nonstationary}. After implementing $y_t$, hidden-target projection observes only a subgradient $g_t \in \partial \ell_t(y_t)$ at the implemented point. The linear loss $f_t(z) = \ip{g_t}{z}$ is therefore the most informative function we can construct from this feedback, and so we take the base learner to be an online linear optimizer (OLO). The implementation is formalized in Algorithm~\ref{alg:hidden_target_meta}.

\begin{algorithm}[H]
\caption{Hidden-Target Projection Meta-Algorithm}
\label{alg:hidden_target_meta}
\KwInput{Capacity set $\Y$, base learner (OLO) on $\Y$, initial target $z_1\in J_1$}
\For{$t=1,2,\ldots,T$}{
Observe inventory state $x_t$ and set $J_t=\Y\cap\{y\in\R^n\mid y\succeq x_t\}$\;
Implement $y_t=\Projnorm{J_t}{z_t}$\;
Observe $g_t\in\partial\ell_t(y_t)$ and the next state $x_{t+1}$\;
Feed the linearized loss $f_t(z)=\ip{g_t}{z}$ to the base learner and receive $z_{t+1}\in\Y$\;
}
\end{algorithm}

The regret analysis begins with a decomposition inherited from earlier hidden-target arguments, now measured in the general norm:
\[
  \baseregret(u_{1:T}) := \sum_{t=1}^T \ip{g_t}{z_t-u_t},
  \qquad
  \tarimpgap := \sum_{t=1}^T \norm{z_t-y_t},
  \qquad
  \tarmvment := \sum_{t=1}^{T-1}\norm{z_{t+1}-z_t}.
\]
The first quantity is the ordinary OCO regret of the base learner. The second is the cost of physical implementation of the target. The third tracks target movement, which the queue reduction will charge as a switching cost on the base learner.

\begin{lemma}[Target regret plus implementation error]
\label{lem:regret_decomp}
Let $(z_t)_{t=1}^T\in\Y^T$ be any hidden-target sequence, and let $(y_t)_{t=1}^T$ be defined by \eqref{eq:hidden-target-projection}. Then for every comparator sequence $u_{1:T}\in\Y^T$,
\begin{equation}
\label{eq:decomp}
  R_T(u_{1:T})\le \baseregret(u_{1:T})+G_*\tarimpgap.
\end{equation}
\end{lemma}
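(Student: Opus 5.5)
The plan is a per-round argument that is then summed. For each $t$, convexity of $\ell_t$ and the choice $g_t\in\partial\ell_t(y_t)$ give the standard linearization
\[
  \ell_t(y_t)-\ell_t(u_t)\le \ip{g_t}{y_t-u_t}.
\]
This uses $u_t\in\Y$ and $y_t\in J_t\subseteq\Y$, so both points lie in the domain of $\ell_t$ and the subgradient inequality applies.

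Next we insert the hidden target:
\[
  \ip{g_t}{y_t-u_t}=\ip{g_t}{z_t-u_t}+\ip{g_t}{y_t-z_t}.
\]
The first term is exactly the $t$-th summand of $\baseregret(u_{1:T})$. For the second term we use the generalized Cauchy--Schwarz (Hölder) inequality for the dual pair $(\norm{\cdot},\dnorm{\cdot})$. This gives $\ip{g_t}{y_t-z_t}\le\dnorm{g_t}\,\norm{z_t-y_t}$. By Assumption~\ref{ass:convex_and_bounded}, $\dnorm{g_t}\le G_*$, so the second term is at most $G_*\norm{z_t-y_t}$.

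Summing over $t=1,\dots,T$ yields \eqref{eq:decomp}, since $\sum_t\norm{z_t-y_t}=\tarimpgap$.

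No property of the projection \eqref{eq:hidden-target-projection} is needed beyond $y_t\in J_t\subseteq\Y$, so the bound holds for any hidden-target sequence in $\Y^T$, as the lemma states. The only point needing care is that the inner-product pairing matches the norm used to define $G_*$; this is exactly the dual norm, so it holds by definition of $\dnorm{\cdot}$. I do not expect any genuine obstacle.
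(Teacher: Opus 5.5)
Your proposal is correct and follows the paper's proof: the subgradient inequality at $y_t$, the split $\ip{g_t}{y_t-u_t}=\ip{g_t}{z_t-u_t}+\ip{g_t}{y_t-z_t}$, the dual-norm H\"older bound $\ip{g_t}{y_t-z_t}\le G_*\norm{y_t-z_t}$, and a sum over $t$. Your remark that only $y_t\in J_t\subseteq\Y$ is used, and no optimality property of the projection, is accurate.
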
 

The lemma reduces regret to a base learner's OCO regret plus the cumulative implementation gap $\tarimpgap$. However, controlling this gap is challenging because past actions carry over into the current implementable set $J_t$. Instead, we relate $\tarimpgap$ to the base learner's target movement $\tarmvment$ through a queue recursion (Section~\ref{sec:queue_reduction}). At every round, this queue's length is the gap $q_t = \norm{z_t - y_t}$, with target movement $\norm{z_{t+1} - z_t}$ as arrival and common demand $d_t^{\min} = \min_{i} d_{t, i}$ as service. If additionally the base learner's cumulative target movement is bounded over every clearing window of length $B_\delta = \widetilde O (\mu^{-1})$, then UPPD ensures with high probability that the common demand clears the queue, giving a bound $\tarimpgap \leq B_\delta \tarmvment$. Thus, regret from the implementation gap becomes a switching cost weighted by the length of the clearing window.

For a single linear warehouse constraint, prior work~\citep{shi_nonparametric_nodate,ichikawa2026nonstationary} tracks the implementation term product by product, relying on the linear budget $\sum_i y_i \le D$ to tie the productwise bounds together. Curved convex sets have no such budget. Without it, a projection that drives one product up to meet inventory can force unrelated products down by arbitrarily larger amounts, breaking the per-product correspondence. Appendix~\ref{app:geometry} formalizes why no per-product analysis extends to general convex sets. The next section develops a different approach which controls the implementation term through a single geometric distance without using any per-product accounting.

\subsection{Admissible norms}
\label{sec:admissible_norms}

The queue recursion driving the analysis (Proposition~\ref{prop:target_implementation_gap_queue} below) holds for a broad class of norms, characterized by a single compatibility condition with the coordinatewise structure of the inventory model. We call this class \emph{admissible}.

\begin{definition}[Admissible norm]
\label{def:admissible_norm}
A norm $\norm{\cdot}$ on $\R^n$ is \emph{admissible} for OIO if it dominates the $\ell_\infty$ norm:
\[
  \norm{x}_\infty \le \norm{x} \qquad \text{for every } x\in\R^n.
\]
Equivalently, $\abs{x_i}\le\norm{x}$ for every $x\in\R^n$ and every $i\in[n]$.
\end{definition}

Geometrically, a norm is admissible if its unit ball is contained in the $\ell_\infty$ unit cube. This condition is precisely what is needed to convert movement in the norm $\norm{\cdot}$ to a coordinatewise bound, which is in turn what allows the inventory dynamics constraint \eqref{eq:inv_dynamics_constraint} to be satisfied through measuring the movement of order levels.

\begin{example}
\label{ex:admissible_norms}
Every $\ell_p$ norm with $p\in[1,\infty]$ is admissible. A diagonally weighted Euclidean norm $\norm{x}_W:=\sqrt{x^\top W x}$ with $W=\mathrm{diag}(w_1,\dots,w_n)$ is admissible if and only if $w_i\ge 1$ for every $i$.
\end{example}

\subsection{The norm alignment principle}
\label{sec:norm_alignment}

The queue recursion driving our analysis depends on a single methodological commitment that we now state explicitly. Throughout the paper, we require the projection, the queue state, the target movement, and every lemma and proposition in the queue analysis to be expressed in the \emph{same} admissible norm $\norm{\cdot}$. We call this requirement the \emph{norm alignment principle}:

\begin{principle}[Norm alignment]
\label{prin:norm_alignment}
A hidden-target analysis is \emph{norm-aligned} if a single admissible norm $\norm{\cdot}$ serves simultaneously as:
\begin{enumerate}[leftmargin=1.7em,itemsep=1pt,topsep=2pt]
  \item The projection norm, $y_t = \Projnorm{J_t}{z_t}$,
  \item The queue norm, $q_t = \norm{z_t - y_t}$,
  \item The movement norm, $\norm{z_{t+1} - z_t}$,
  \item The norm in which the queueing recursion (Proposition~\ref{prop:target_implementation_gap_queue}) and its supporting lemmas are stated and proved.
\end{enumerate}
\end{principle}

We introduce this principle because it is what makes the queue recursion close without any multiplicative factors. Under norm alignment, the queue recursion follows geometrically using only convexity of $\Y$, the norm axioms, and optimality of the nearest-point projection. The requirement is fundamental, not just a way to optimize constants or dimension factors. If the algorithm projects in one norm but the queue is measured in another, norm equivalence gives at best a multiplicative constant inside the queue recursion. Such a factor compounds over a clearing window, yielding an exponentially looser bound. Norm conversion is therefore safe only after the aligned queue recursion has been established. 

Hidden-target analyses that mix norms across these roles, such as the $\ell_1$ switching cost with $\ell_2$ projection used by \citet{shi_nonparametric_nodate} and \citet{ichikawa2026nonstationary}, must compensate with constraint-specific structure to bypass the geometric queuing argument presented in this paper. For more details on why previous analytical approaches did not generalize to arbitrary convex sets, see Appendix~\ref{app:geometry}.

\section{The target-implementation queue}
\label{sec:queue_reduction}

\begin{figure}
  \centering
\IfFileExists{figures/queue_diagram/tikz_common.tex}{%
  \input{figures/queue_diagram/tikz_common.tex}%
}{%
  \input{tikz_common.tex}%
}%
\begin{tikzpicture}[%
  >=Stealth,%
  every node/.style={font=\small},%
  scale=1.0%
]%

\pgfmathsetmacro{\boxlx}{\ytx - \dtmin}
\pgfmathsetmacro{\boxly}{\yty - \dtmin}
\pgfmathsetmacro{\ytmdtx}{\ytx - \dtx}
\pgfmathsetmacro{\ytmdty}{\yty - \dty}
\pgfmathsetmacro{\JtopY}{sqrt(\Yrad*\Yrad - \xtx*\xtx)}%
\pgfmathsetmacro{\JrightY}{sqrt(\Yrad*\Yrad - \xty*\xty)}%
\pgfmathsetmacro{\JarcStart}{asin(\xty/\Yrad)}%
\pgfmathsetmacro{\JarcEnd}{acos(\xtx/\Yrad)}%
\pgfmathsetmacro{\CntopY}{sqrt(\Yrad*\Yrad - \boxlx*\boxlx)}%
\pgfmathsetmacro{\CnrightY}{sqrt(\Yrad*\Yrad - \boxly*\boxly)}%
\pgfmathsetmacro{\CnarcStart}{asin(\boxly/\Yrad)}%
\pgfmathsetmacro{\CnarcEnd}{acos(\boxlx/\Yrad)}%

\fill[Yfill] (0,0) -- (\Yrad, 0) arc (0:90:\Yrad) -- cycle;%
\draw[Ycol, very thick] (\Yrad, 0) arc (0:90:\Yrad);%

\begin{scope}%
  \clip (0,0) rectangle (\Yrad, \Yrad);%
  \fill[Jcol, opacity=0.12]%
    (\xtx, \xty) -- (\JrightY, \xty)%
    arc (\JarcStart:\JarcEnd:\Yrad)%
    -- (\xtx, \JtopY) -- cycle;%
\end{scope}%
\draw[Jcol, thick] (\xtx, \xty) -- (\xtx, \JtopY);%
\draw[Jcol, thick] (\xtx, \xty) -- (\JrightY, \xty);%

\fill[possfill, opacity=0.5] (0, 0) rectangle (\boxlx, \boxly);%
\draw[posscolor, thick, densely dashed] (\boxlx, 0) -- (\boxlx, \boxly);%
\draw[posscolor, thick, densely dashed] (0, \boxly) -- (\boxlx, \boxly);%

\begin{scope}%
  \clip (\boxlx, \boxly) rectangle (\Yrad, \Yrad);%
  \fill[containfill, opacity=0.8]%
    (\boxlx, \boxly) -- (\CnrightY, \boxly)%
    arc (\CnarcStart:\CnarcEnd:\Yrad)%
    -- (\boxlx, \CntopY) -- cycle;%
\end{scope}%
\draw[containcol, thick] (\boxlx, \boxly) -- (\boxlx, \CntopY);%
\draw[containcol, thick] (\boxlx, \boxly) -- (\CnrightY, \boxly);%

\draw[inftycolor, dashed, thick]%
  (\boxlx, \boxly) rectangle (\ytx + \dtmin, \yty + \dtmin);%

\fill[ballfill, opacity=0.6] (\ytx, \yty) circle (\dtmin);%
\draw[ballcol, thick] (\ytx, \yty) circle (\dtmin);%

\draw[ztcol, thick] (\ztx, \zty) -- (\ytx, \yty);%

\filldraw[posscolor] (\ytmdtx, \ytmdty) circle (1.6pt);%

\filldraw[ztcol] (\ztx, \zty) circle (2pt);%
\node[above left, ztcol, font=\normalsize, inner sep=2pt] at (\ztx, \zty) {$z_t$};%

\filldraw[ytcol] (\ytx, \yty) circle (2pt);%
\node[above right, inner sep=2pt] at (\ytx, \yty) {$y_t$};%

\filldraw[xtcol] (\xtx, \xty) circle (2pt);%
\node[below left, xtcol, font=\normalsize, inner sep=2pt] at (\xtx, \xty) {$x_t$};%

\filldraw[cornercolor] (\boxlx, \boxly) circle (1.6pt);%

\draw[->, thick] (-0.3, 0) -- (\Yrad + 0.3, 0);%
\node[font=\small, anchor=north east, inner sep=2pt] at (\Yrad + 0.3, -0.04) {$y^{(1)}$};%
\draw[->, thick] (0, -0.3) -- (0, \Yrad + 0.3);%
\node[font=\small, above, inner sep=2pt] at (0, \Yrad + 0.3) {$y^{(2)}$};%


\path (\ztx, \zty) -- (\ytx, \yty)%
  node[ztcol, font=\footnotesize, pos=0.5, above] {$q_t$};%

\pgfmathsetmacro{\dtmar}{0.1*\dtmin}%
\draw[ballcol, thin, <->,
      shorten >=\dtmar cm,
      shorten <=\dtmar cm]%
  (\ytx, \yty) -- node[below, font=\scriptsize, sloped] {$d_t^-$}%
  ++(330:\dtmin);%


\node[Ycol, font=\Large] at (\Ycalx, \Ycaly) {$\mathcal{Y}$};%

\node[Jcol, font=\normalsize] at (\Jtx, \Jty) {$J_t$};%

\node[inftycolor, font=\normalsize, anchor=west]%
  at (\ytx + \dtmin*0.2, \yty + \dtmin*1.4)%
  {$B_{\infty}\!(y_t,\, d_t^-)$};%

\node[containcol, font=\normalsize, align=center] (containlabel)%
  at (5.45, 4.4)%
  {contained\\ in $J_{t+1}$};%
\draw[containcol, thin, -] (containlabel.south west) + (0.25, 0.2) -- (4.3, 3.6);%

\node[cornercolor, font=\scriptsize, anchor=south east] (cornerlabel)%
  at (\boxlx - 0.2, \boxly + 0.2)%
  {$y_t {-} d_t^-\mathbf{1}$};%
\draw[cornercolor, thin, ->] (cornerlabel.south east) + (-0.1, 0.1) -- (\boxlx - 0.06, \boxly + 0.06);%

\node[posscolor, font=\footnotesize, align=center]%
  at (\boxlx/2, \boxly/2 - 0.2)%
  {possible $x_{t+1}$};%

\node[posscolor, font=\scriptsize, anchor=east] (ydlabel)%
  at (\ytmdtx - 0.3, \ytmdty - 0.4)%
  {$(y_t {-} d_t)^+$};%
\draw[posscolor, thin, ->] (ydlabel.east) + (-0.1, 0.1) -- (\ytmdtx - 0.06, \ytmdty - 0.06);%

\ifshowDebug
  \draw[red, very thick, dashed]%
    (current bounding box.south west)%
    rectangle%
    (current bounding box.north east);%
  \draw[blue!30, very thin]%
    (current bounding box.south west) grid (current bounding box.north east);%
\fi

\end{tikzpicture}%
  \caption{A geometric illustration of the short move lemma. It shows that any $y' \in B_{\norm{\cdot}}(y_t, d_t^{\min})$ satisfies $y' \succeq x_{t+1}$, so if $y'$ is also in $\Y$, then $y' \in J_{t+1}$.}
  \label{fig:short-move}
\end{figure}

\begin{figure}[t]
  \centering
  \hfill
  \begin{subfigure}[c]{0.46\textwidth}
    \centering
    \resizebox{\linewidth}{!}{%
\IfFileExists{figures/queue_diagram/tikz_common.tex}{%
  \input{figures/queue_diagram/tikz_common.tex}%
}{%
  \input{tikz_common.tex}%
}%
\begin{tikzpicture}[%
  >=Stealth,%
  every node/.style={font=\small},%
  scale=1.0%
]%

\ifcropToSection
  \clip (-0.3, \xtny - 0.4) rectangle (\Yrad-1.65, \Yrad+1);%
\fi

\pgfmathsetmacro{\seglen}{veclen(\ztnx-\ytx, \ztny-\yty)}%
\pgfmathsetmacro{\alphat}{\dtmin / \seglen}%
\pgfmathsetmacro{\wtx}{(1-\alphat)*\ytx + \alphat*\ztnx}%
\pgfmathsetmacro{\wty}{(1-\alphat)*\yty + \alphat*\ztny}%
\pgfmathsetmacro{\JtopY}{sqrt(\Yrad*\Yrad - \xtx*\xtx)}%
\pgfmathsetmacro{\JrightY}{sqrt(\Yrad*\Yrad - \xty*\xty)}%
\pgfmathsetmacro{\arcStart}{asin(\xty/\Yrad)}%
\pgfmathsetmacro{\arcEnd}{acos(\xtx/\Yrad)}%
\pgfmathsetmacro{\JntopY}{sqrt(\Yrad*\Yrad - \xtnx*\xtnx)}%
\pgfmathsetmacro{\JnrightY}{sqrt(\Yrad*\Yrad - \xtny*\xtny)}%
\pgfmathsetmacro{\JnarcStart}{asin(\xtny/\Yrad)}%
\pgfmathsetmacro{\JnarcEnd}{acos(\xtnx/\Yrad)}%

\fill[Yfill] (0,0) -- (\Yrad, 0) arc (0:90:\Yrad) -- cycle;%
\draw[Ycol, very thick] (\Yrad, 0) arc (0:90:\Yrad);%

\begin{scope}%
  \clip (0,0) rectangle (\Yrad, \Yrad);%
  \fill[Jncol, opacity=0.10]%
    (\xtnx, \xtny) -- (\JnrightY, \xtny)%
    arc (\JnarcStart:\JnarcEnd:\Yrad)%
    -- (\xtnx, \JntopY) -- cycle;%
\end{scope}%
\draw[Jncol, thick, densely dashed] (\xtnx, \xtny) -- (\xtnx, \JntopY);%
\draw[Jncol, thick, densely dashed] (\xtnx, \xtny) -- (\JnrightY, \xtny);%

\begin{scope}%
  \clip (0,0) rectangle (\Yrad, \Yrad);%
  \fill[Jcol, opacity=0.12]%
    (\xtx, \xty) -- (\JrightY, \xty)%
    arc (\arcStart:\arcEnd:\Yrad)%
    -- (\xtx, \JtopY) -- cycle;%
\end{scope}%
\draw[Jcol, thick] (\xtx, \xty) -- (\xtx, \JtopY);%
\draw[Jcol, thick] (\xtx, \xty) -- (\JrightY, \xty);%

\draw[->, thick] (-0.3, 0) -- (\Yrad + 0.3, 0);%
\node[font=\small, anchor=north east, inner sep=2pt] at (\Yrad + 0.3, -0.04) {$y^{(1)}$};%
\draw[->, thick] (0, -0.3) -- (0, \Yrad + 0.3);%
\node[font=\small, above, inner sep=2pt] at (0, \Yrad + 0.3) {$y^{(2)}$};%


\draw[ztcol, thick] (\ztx, \zty) -- (\ytx, \yty);%

\fill[ballcol, opacity=0.08] (\ytx, \yty) circle (\dtmin);%
\draw[ballcol, thick] (\ytx, \yty) circle (\dtmin);%

\draw[ballcol, thick] (\ytx, \yty) -- (\wtx, \wty);%


\draw[wtcol, thick] (\wtx, \wty) -- (\ztnx, \ztny);%

\draw[ytncolor, thick] (\ztnx, \ztny) -- (\ytnx, \ytny);%


\path (\ztx, \zty) -- (\ytx, \yty)%
  node[ztcol, font=\footnotesize, pos=0.5, below] {$q_t$};%

\path (\ytx, \yty) -- (\wtx, \wty)%
  node[ballcol, font=\footnotesize, pos=0.4, sloped, inner sep=0pt, above=1pt] {$d_t^-$};%

\path (\ztnx, \ztny) -- (\ytnx, \ytny)%
  node[ytncolor, font=\footnotesize, pos=0.5, above] {$q_{t+1}$};%

\filldraw[ztcol] (\ztx, \zty) circle (2pt);%
\node[above left, ztcol, font=\normalsize, inner sep=2pt] at (\ztx, \zty) {$z_t$};%

\filldraw[ytcol] (\ytx, \yty) circle (2pt);%
\node[above right, ytcol, font=\normalsize, inner sep=2pt] at (\ytx, \yty) {$y_t$};%

\filldraw[wtcol] (\wtx, \wty) circle (2pt);%
\node[above, wtcol, font=\normalsize, inner sep=2pt] at (\wtx - 0.15, \wty+0.075) {$w_t$};%

\filldraw[ztncol] (\ztnx, \ztny) circle (2pt);%
\node[below, ztncol, font=\normalsize, inner sep=2pt] at (\ztnx, \ztny) {$z_{t+1}$};%

\filldraw[xtcol] (\xtx, \xty) circle (2pt);%
\node[below left, xtcol, font=\normalsize, inner sep=2pt] at (\xtx, \xty) {$x_t$};%

\filldraw[xtncolor] (\xtnx, \xtny) circle (2pt);%
\node[below left, xtncolor, font=\normalsize, inner sep=2pt] at (\xtnx, \xtny) {$x_{t+1}$};%

\filldraw[ytncolor] (\ytnx, \ytny) circle (2pt);%
\node[above right, ytncolor, font=\normalsize, inner sep=2pt] at (\ytnx, \ytny) {$y_{t+1}$};%

\node[Ycol, font=\Large] at (\Ycalx, \Ycaly) {$\mathcal{Y}$};%
\node[Jcol, font=\normalsize] at (\Jtx, \Jty) {$J_t$};%
\node[Jncol, font=\normalsize] at (\Jtpx, \Jtpy) {$J_{t+1}$};%

\ifshowDebug
  \draw[red, very thick, dashed]%
    (current bounding box.south west)%
    rectangle%
    (current bounding box.north east);%
  \draw[blue!30, very thin]%
    (current bounding box.south west) grid (current bounding box.north east);%
\fi

\end{tikzpicture}%
%
    }
    \caption{Setup \& interpolation}
    \label{fig:queue-setup}
  \end{subfigure}
  \hfill
  \begin{subfigure}[c]{0.46\textwidth}
    \centering
    \resizebox{\linewidth}{!}{%
\IfFileExists{figures/queue_diagram/tikz_common.tex}{%
  \input{figures/queue_diagram/tikz_common.tex}%
}{%
  \input{tikz_common.tex}%
}%
\usetikzlibrary{decorations.pathreplacing}%
\begin{tikzpicture}[%
  >=Stealth,%
  every node/.style={font=\small},%
  scale=1.0%
]%

\ifcropToSection
  \clip (-0.35, \xtny + 0.5) rectangle (\Yrad-2.2, \Yrad);%
\fi

\pgfmathsetmacro{\seglen}{veclen(\ztnx-\ytx, \ztny-\yty)}%
\pgfmathsetmacro{\alphat}{\dtmin / \seglen}%
\pgfmathsetmacro{\wtx}{(1-\alphat)*\ytx + \alphat*\ztnx}%
\pgfmathsetmacro{\wty}{(1-\alphat)*\yty + \alphat*\ztny}%
\pgfmathsetmacro{\JntopY}{sqrt(\Yrad*\Yrad - \xtnx*\xtnx)}%
\pgfmathsetmacro{\JnrightY}{sqrt(\Yrad*\Yrad - \xtny*\xtny)}%
\pgfmathsetmacro{\JnarcStart}{asin(\xtny/\Yrad)}%
\pgfmathsetmacro{\JnarcEnd}{acos(\xtnx/\Yrad)}%

\ifremoveBackground\else
\fill[Yfill] (0,0) -- (\Yrad, 0) arc (0:90:\Yrad) -- cycle;%
\draw[Ycol, very thick] (\Yrad, 0) arc (0:90:\Yrad);%
\fi

\ifremoveBackground\else
\begin{scope}%
  \clip (0,0) rectangle (\Yrad, \Yrad);%
  \fill[Jncol, opacity=0.10]%
    (\xtnx, \xtny) -- (\JnrightY, \xtny)%
    arc (\JnarcStart:\JnarcEnd:\Yrad)%
    -- (\xtnx, \JntopY) -- cycle;%
\end{scope}%
\draw[Jncol, thick, densely dashed] (\xtnx, \xtny) -- (\xtnx, \JntopY);%
\draw[Jncol, thick, densely dashed] (\xtnx, \xtny) -- (\JnrightY, \xtny);%
\fi

\ifremoveBackground\else
\draw[->, thick] (-0.3, 0) -- (\Yrad + 0.3, 0);%
\node[font=\small, anchor=north east, inner sep=2pt] at (\Yrad + 0.3, -0.04) {$y^{(1)}$};%
\draw[->, thick] (0, -0.3) -- (0, \Yrad + 0.3);%
\node[font=\small, above, inner sep=2pt] at (0, \Yrad + 0.3) {$y^{(2)}$};%
\fi

\draw[ztcol, thick] (\ztx, \zty) -- (\ytx, \yty);%

\draw[movecol, thick, densely dashed] (\ztx, \zty) -- (\ztnx, \ztny);%
\draw[movecol, thick, decorate, decoration={brace, amplitude=4pt, raise=3pt}] (\ztx, \zty) -- (\ztnx, \ztny) coordinate[midway] (bracetip);%

\newcommand{\bracetipcoord}[5]{%
  \coordinate (#1) at ($($#2!0.5!#3$) !#4! #5:#3$);%
}%
\bracetipcoord{bracetip}{(\ztx, \zty)}{(\ztnx, \ztny)}{7pt}{90}%


\ifremoveBackground\else
  \fill[ballcol, opacity=0.08] (\ytx, \yty) circle (\dtmin);%
  \draw[ballcol, thick] (\ytx, \yty) circle (\dtmin);%
\fi

\draw[ballcol, thick] (\ytx, \yty) -- (\wtx, \wty);%

\draw[wtcol, thick] (\wtx, \wty) -- (\ztnx, \ztny);%

\ifremoveBackground\else
\draw[ytncolor, thick] (\ztnx, \ztny) -- (\ytnx, \ytny);%
\fi


\path (\ztx, \zty) -- (\ytx, \yty)%
  node[ztcol, font=\footnotesize, pos=0.5, below] {$q_t$};%



\node[movecol, font=\footnotesize, anchor=north] (movelabel)%
  at ([xshift=-5pt, yshift=-20pt]bracetip)%
  {$\|z_{t\!+\!1} {-} z_t\|$};%
\draw[movecol, thick, -, shorten >=1pt] (movelabel.north) -- (bracetip);%

\path (\ytx, \yty) -- (\wtx, \wty)%
  node[ballcol, font=\footnotesize, pos=0.4, sloped, inner sep=0pt, above=2pt] {$d_t^-$};%

\path (\wtx, \wty) -- (\ztnx, \ztny)%
  node[wtcol, font=\footnotesize, pos=0.5, sloped, above=2pt, inner sep=0pt] {$\|z_{t\!+\!1} {-} w_t\|$};%

\ifremoveBackground\else
\path (\ztnx, \ztny) -- (\ytnx, \ytny)%
  node[ytncolor, font=\footnotesize, pos=0.5, above] {$q_{t+1}$};%
\fi

\filldraw[ztcol] (\ztx, \zty) circle (2pt);%
\node[below right, ztcol, font=\normalsize, inner sep=2pt] at (\ztx, \zty) {$z_t$};%

\filldraw[ytcol] (\ytx, \yty) circle (2pt);%
\node[above right, ytcol, font=\normalsize, inner sep=2pt] at (\ytx, \yty) {$y_t$};%

\filldraw[wtcol] (\wtx, \wty) circle (2pt);%
\ifremoveBackground\else
\node[above, wtcol, font=\normalsize, inner sep=2pt] at (\wtx - 0.15, \wty+0.075) {$w_t$};%
\fi

\filldraw[ztncol] (\ztnx, \ztny) circle (2pt);%
\node[above left, ztncol, font=\normalsize, inner sep=2pt] at (\ztnx, \ztny) {$z_{t+1}$};%

\filldraw[xtncolor] (\xtnx, \xtny) circle (2pt);%
\node[below left, xtncolor, font=\normalsize, inner sep=2pt] at (\xtnx, \xtny) {$x_{t+1}$};%

\ifremoveBackground\else
\filldraw[ytncolor] (\ytnx, \ytny) circle (2pt);%
\node[above right, ytncolor, font=\normalsize, inner sep=2pt] at (\ytnx, \ytny) {$y_{t+1}$};%
\fi

\ifremoveBackground\else
\node[Ycol, font=\Large] at (\Ycalx, \Ycaly) {$\mathcal{Y}$};%
\node[Jncol, font=\normalsize] at (\Jtpx, \Jtpy) {$J_{t+1}$};%
\fi


\ifshowDebug
  \draw[red, very thick, dashed]%
    (current bounding box.south west)%
    rectangle%
    (current bounding box.north east);%
  \draw[blue!30, very thin]%
    (current bounding box.south west) grid (current bounding box.north east);%
\fi

\end{tikzpicture}%
%
    }
    \caption{Triangle inequality}
    \label{fig:queue-recursion}
  \end{subfigure}
  \caption{Geometric proof of the queue recursion
    $q_{t+1} \leq (q_t + \|z_{t+1} - z_t\| - d_t^{\min})^+$ for the case where $\norm{z_{t+1} - y_t} > d_t^{\min}$, building on the setup in Figure~\ref{fig:short-move}. 
    \textbf{(a)}~For any $z_{t+1} \in \Y$, the interpolation point
    $w_t$ a distance $d_t^{\min}$ away from $y_t$ on the line
    from $y_t$ toward $z_{t+1}$ must be feasible, providing a segment whose length bounds $q_{t+1}$.
    \textbf{(b)}~Isolating the relevant points, the triangle inequality gives the bound $\norm{z_{t+1} - w_t} + d_t^{\min} \leq q_t + \norm{z_{t+1} - z_t}$, which combines with the prior diagram to give the final recursion.}
  \label{fig:queue-geometry}
\end{figure}

The inventory-specific part of our analysis only depends on the path of the targets and inventory levels. No stationarity or independence of the demand realizations is needed until we convert the pathwise recursion into a high-probability clearing statement.

With the admissible projection norm $\norm{\cdot}$ from Section~\ref{sec:admissible_norms} fixed, let
\begin{equation}
\label{eq:hidden_projection_distance}
  q_t:=\distnorm(z_t,J_t)=\norm{z_t-y_t},
  \qquad
  d_t^{\min}:=\min_{i\in[n]}d_{t,i}.
\end{equation}

\begin{proposition}[Target-implementation queue]
\label{prop:target_implementation_gap_queue}
For the aligned implementation $y_t=\Projnorm{J_t}{z_t}$, under the inventory dynamics constraint \eqref{eq:inv_dynamics_constraint}, for every $t\in[T-1]$,
\begin{equation}
\label{eq:target_implementation_gap_queue}
  q_{t+1}
  \le
  \bigl[q_t+\norm{z_{t+1}-z_t}-d_t^{\min}\bigr]^+.
\end{equation}
\end{proposition}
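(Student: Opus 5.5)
The plan follows the geometry of Figures~\ref{fig:short-move} and~\ref{fig:queue-geometry}. We first prove a \emph{short move lemma}: if $y\in\Y$ and $\norm{y-y_t}\le d_t^{\min}$, then $y\in J_{t+1}$. Since the norm is admissible (Definition~\ref{def:admissible_norm}), each coordinate satisfies $|y_i-y_{t,i}|\le\norm{y-y_t}\le d_t^{\min}\le d_{t,i}$. Hence $y_i\ge y_{t,i}-d_{t,i}$. Also $y_i\ge 0$ because $\Y\subseteq\R^n_+$. Together these give $y\succeq (y_t-d_t)^+\succeq x_{t+1}$ by the dynamics constraint \eqref{eq:inv_dynamics_constraint}, so $y\in\Y\cap\{y\succeq x_{t+1}\}=J_{t+1}$.

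Next we split into two cases according to $\norm{z_{t+1}-y_t}$.

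\textbf{Case $\norm{z_{t+1}-y_t}\le d_t^{\min}$.} Since $z_{t+1}\in\Y$, the short move lemma gives $z_{t+1}\in J_{t+1}$. Then $q_{t+1}=0$, and the claimed bound holds trivially.

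\textbf{Case $\norm{z_{t+1}-y_t}> d_t^{\min}$.} Define the interpolation point
\[
  w_t=y_t+\alpha(z_{t+1}-y_t),\qquad \alpha=d_t^{\min}/\norm{z_{t+1}-y_t}\in[0,1).
\]
This point lies in $\Y$ because $y_t,z_{t+1}\in\Y$ and $\Y$ is convex. It satisfies $\norm{w_t-y_t}=d_t^{\min}$, so the short move lemma gives $w_t\in J_{t+1}$. Because $y_{t+1}$ is the nearest point of $J_{t+1}$ to $z_{t+1}$ in the same norm (this is where norm alignment is used), we get
\[
  q_{t+1}\le\norm{z_{t+1}-w_t}=(1-\alpha)\norm{z_{t+1}-y_t}=\norm{z_{t+1}-y_t}-d_t^{\min}.
\]
The triangle inequality gives $\norm{z_{t+1}-y_t}\le\norm{z_{t+1}-z_t}+\norm{z_t-y_t}=\norm{z_{t+1}-z_t}+q_t$. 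Combining the two inequalities yields $q_{t+1}\le q_t+\norm{z_{t+1}-z_t}-d_t^{\min}$. This quantity is at most its positive part, and $q_{t+1}\ge 0$, so \eqref{eq:target_implementation_gap_queue} follows.

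The main obstacle is the short move lemma, specifically handling the positive part $(y_t-d_t)^+$. The lower bound $y_i\ge y_{t,i}-d_{t,i}$ alone does not give domination of the positive part. Nonnegativity of $\Y$ supplies the missing half. Admissibility is what converts the single scalar budget $d_t^{\min}$ into a per-coordinate budget, and this is exactly where the norm's domination of $\ell_\infty$ is essential. The remaining steps use only convexity of $\Y$, optimality of the projection, and the triangle inequality, all taken in the same norm.
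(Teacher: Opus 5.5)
Your proof is correct and follows the paper's argument essentially step for step. Both use the admissibility-based short move lemma, with nonnegativity of $\Y$ supplying the positive part, and both split into two cases according to whether $\norm{z_{t+1}-y_t}$ exceeds $d_t^{\min}$. The remaining steps also coincide: the interpolated witness $w_t\in J_{t+1}$, the same-norm projection certificate $q_{t+1}\le\norm{z_{t+1}-w_t}$, and the triangle inequality.
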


Figures~\ref{fig:short-move}~and~\ref{fig:queue-geometry} visualize the geometric proof with specific realizations of the demand and choices for the hidden target from the base learner. Throughout the figure, distances are shown using the Euclidean norm, but the geometric intuition remains the same as long as the norm is admissible. The proof relies on the following lemma to ensure that if a point moves from the previous order level by a sufficiently small amount, then it remains feasible.

\begin{lemma}[A short move remains feasible after demand]
\label{lem:short_move}
Let $y,y'\in\R_+^n$ and $d\in\R_+^n$. If $\norm{y'-y}\le d^{\min}:=\min_i d_i$, then $y'\succeq (y-d)^+$.
\end{lemma}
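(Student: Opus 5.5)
The plan is to prove the claim coordinatewise, since admissibility (Definition~\ref{def:admissible_norm}) converts the norm hypothesis directly into per-coordinate information. Fix $i\in[n]$. By admissibility, $\abs{y'_i-y_i}\le\norm{y'-y}_\infty\le\norm{y'-y}\le d^{\min}\le d_i$. The goal for this coordinate is $y'_i\ge\max\{y_i-d_i,0\}$, which splits into two inequalities.

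First, from the coordinate bound, $y'_i\ge y_i-\abs{y'_i-y_i}\ge y_i-d_i$. Second, $y'_i\ge 0$ holds because $y'\in\R_+^n$ by hypothesis. Taking the maximum of the two lower bounds gives $y'_i\ge(y_i-d_i)^+$. Since $i$ was arbitrary, this yields $y'\succeq(y-d)^+$.

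I do not expect a real obstacle here. The only point needing care is that the nonnegative part is handled by the assumption $y'\in\R_+^n$ and not by the norm bound, since the norm bound alone only gives $y'_i\ge y_i-d_i$, which may be negative. The key step is the use of admissibility to pass from $\norm{\cdot}$ to a uniform coordinate bound. This is exactly where a non-admissible norm would fail, since it would introduce a multiplicative constant. The way the lemma is later combined with the dynamics constraint \eqref{eq:inv_dynamics_constraint} to get $y'\succeq x_{t+1}$ is then immediate by transitivity of $\succeq$.
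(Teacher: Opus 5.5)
Your proposal is correct and matches the paper's proof essentially line for line. Admissibility gives $\abs{y'_i-y_i}\le\norm{y'-y}\le d^{\min}\le d_i$, hence $y'_i\ge y_i-d_i$, and the hypothesis $y'\in\R_+^n$ supplies $y'_i\ge 0$, so together they give $y'_i\ge(y_i-d_i)^+$ for every coordinate.
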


This result is shown visually in Figure~\ref{fig:short-move}. For an order level $y_t$, any possible $x_{t+1}$ must be coordinatewise smaller than any point in the $\ell_\infty$ ball of radius $d_t^{\min}$ centered at $y_t$. This in turn means that any point that is coordinatewise at least as large as the corner of the $\ell_\infty$ ball must satisfy the inventory feasibility constraint \eqref{eq:inv_feasibility_constraint}, including the ball itself. This explains our choice of an \emph{admissible} norm: the ball around $y_t$ of radius $d_t^{\min}$ induced by $\norm{\cdot}$ must be contained in the $\ell_\infty$ ball, and hence also satisfies the inventory feasibility constraint. Any point inside the intersection of this ball and $\Y$ must therefore be in the next feasible order-level set $J_{t+1}$. 

When the next point $z_{t+1}$ is chosen, there are two options. In the first case, $z_{t+1}$ lies within distance $d_t^{\min}$ of the implemented point $y_t$, but the short move lemma makes $z_{t+1}$ feasible at the next round so $y_{t+1} = z_{t+1}$ and the queue clears with $q_{t+1} = 0$. Otherwise, for the second case where $z_{t+1}$ lies outside the $d_t^{\min}$ ball around $y_t$, we turn to Figure~\ref{fig:queue-setup}. By the convexity of $\Y$, we interpolate between $y_t$ and $z_{t+1}$ and choose the point $w_t$ that is distance $d_t^{\min}$ away from $y_t$ to make it feasible. Crucially, the \emph{aligned projection certificate} (Lemma~\ref{lem:aligned_projection_certificate}), which states that projecting in $\norm{\cdot}$ minimizes the distance to any feasible witness, gives $q_{t+1} \leq \norm{z_{t+1} - w_t}$. Finally, Figure~\ref{fig:queue-recursion} uses the triangle inequality to bound this distance. Combining with the first case gives the queue recursion. 


To obtain high-probability regret, we use the non-degeneracy demand condition introduced by \citet{hihat2023online}.

\begin{assumption}[Uniformly probably positive demand]
\label{ass:uppd}
There exist $\mu\in(0,1]$ and $\rho>0$ such that, for every round $t$, almost surely
\[
  \mathbb P(d_t^{\min}\ge\rho\mid\F_{t-1})\ge\mu.
\]
\end{assumption}

UPPD says that, no matter the past, there is conditional probability at least $\mu$ of a common demand shock of size $\rho$. So over any sufficiently long window of rounds, at least one such shock occurs with high probability. To exploit this condition for queueing control, our analysis partitions $[T]$ into queue-clearing windows where the shocks can clear the accumulated queue, reducing the cumulative gap to a bound on the target's movement. If the base learner moves sufficiently slowly, by at most $\rho$ over each such window, the queue cannot persist across windows. Fixing a failure probability $\delta\in(0,1)$, we write $B_\delta:=\lceil\mu^{-1}\log(T/\delta)\rceil$ for the corresponding clearing-window length under UPPD.

Before moving to the regret reduction, we formally specify the conditions that the base learner must satisfy. We treat the base learner as a black box specified entirely by its inputs and outputs, with an additional requirement on the movement of its outputs. At each round the base learner receives the linearized loss $f_t(z)=\ip{g_t}{z}$ on $\Y$, where the gradient satisfies $\dnorm{g_t}\le G_*$, and produces a hidden target $z_{t+1}\in\Y$. The reduction depends on the target sequence only through the following two conditions, both measured in the norm $\norm{\cdot}$.

\begin{assumption}[Base learner conditions]
\label{ass:base_learner}
The base learner produces targets $(z_t)_{t=1}^T\subset\Y$ such that:
\begin{enumerate}[label=(\roman*)]
  \item \textit{Linearized regret bound.} There is a function $\mathcal{R}_T(u_{1:T})$ for which, for every comparator sequence $u_{1:T}\in\Y^T$,
  \[
    \baseregret(u_{1:T})\;\le\;\mathcal{R}_T(u_{1:T}).
  \]
  \item \textit{Windowed movement bound.} For every $t$ with $1\le t\le T-B_\delta$,
  \begin{equation}
  \label{eq:learner_window_motion_uppd}
    \sum_{r=t}^{t+B_\delta-1}\norm{z_{r+1}-z_r}\le \rho.
  \end{equation}
\end{enumerate}
\end{assumption}

The two parts of Assumption~\ref{ass:base_learner} are imposed on the target sequence, not on the internal mechanism of the base learner. Any specific algorithm, such as projected OGD, smoothed OGD, or mirror descent, defines a valid base learner once parts (i) and (ii) are verified in the projection norm. Section~\ref{sec:instantiations} carries the verification for these instantiations.


\begin{theorem}[Hidden-target reduction]
\label{thm:uppd_reduction}
Under Assumption~\ref{ass:convex_and_bounded} and UPPD (Assumption~\ref{ass:uppd}), suppose the base learner satisfies Assumption~\ref{ass:base_learner}(ii) and initializes with a feasible target $z_1 \in J_1$. Then, with probability at least $1-\delta$,
\begin{equation}
\label{eq:tc-uppd}
  \tarimpgap\le B_\delta\tarmvment,
  \qquad
  R_T(u_{1:T})\le \baseregret(u_{1:T})+G_* B_\delta\tarmvment
\end{equation}
for every comparator sequence $u_{1:T}\in\Y^T$. In particular, if Assumption~\ref{ass:base_learner}(i) additionally holds with regret function $\mathcal{R}$, then
\[
  R_T(u_{1:T})\le\mathcal{R}_T(u_{1:T})+G_*B_\delta\tarmvment
\]
\end{theorem}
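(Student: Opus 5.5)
We will derive the second inequality in \eqref{eq:tc-uppd} from the first using Lemma~\ref{lem:regret_decomp}. That lemma gives $R_T(u_{1:T})\le\baseregret(u_{1:T})+G_*\tarimpgap$ deterministically and for every comparator sequence. So on any event where $\tarimpgap\le B_\delta\tarmvment$, the regret bound holds simultaneously for all $u_{1:T}$. Substituting Assumption~\ref{ass:base_learner}(i) then gives the final display. The whole task is therefore to show $\tarimpgap\le B_\delta\tarmvment$ with probability at least $1-\delta$.

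\textbf{Step 1: a clearing event.} For each $t\le T$, let $E_t$ be the event that some $r\in\{t,\dots,t+B_\delta-1\}\cap[T]$ has $d_r^{\min}\ge\rho$. By UPPD and the tower property (conditioning successively on $\F_{r-1}$), a window of full length $B_\delta$ fails with probability at most $(1-\mu)^{B_\delta}\le e^{-\mu B_\delta}\le\delta/T$. A union bound over the at most $T$ window starts gives an event $E$ of probability at least $1-\delta$ on which every full window contains a shock of size $\rho$.

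\textbf{Step 2: pathwise unrolling of the queue.} Everything from here on is deterministic on $E$. We have $q_1=0$ because $z_1\in J_1$. Write $m_r:=\norm{z_{r+1}-z_r}$. Fix $t$ with $q_t>0$ and let $s<t$ be the last index with $d_s^{\min}\ge\rho$.

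Iterating Proposition~\ref{prop:target_implementation_gap_queue}, and dropping the nonnegative service terms except at $s$, gives
\[
q_t\le\Bigl[q_s+\sum_{r=s}^{t-1}m_r-\rho\Bigr]^+ + \dots
\]
More cleanly, the Lindley recursion $q_{t}\le\max_{k\le t}\sum_{r=k}^{t-1}(m_r-d_r^{\min})$ (with $q_1=0$) shows $q_t\le\sum_{r=k^*}^{t-1}m_r-\sum_{r=k^*}^{t-1}d_r^{\min}$ for some starting index $k^*$. Two cases arise:
\begin{itemize}
\item If the interval $[k^*,t-1]$ contains a shock, its movement is at most $\rho$ by Assumption~\ref{ass:base_learner}(ii) when its length is at most $B_\delta$. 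This makes $q_t\le 0$, a contradiction.
\item Hence $[k^*,t-1]$ must be shock-free, so on $E$ it has length less than $B_\delta$. This yields
\[
q_t\le\sum_{r=t-B_\delta}^{t-1}m_r ,
\]
with indices clipped at $1$.
\end{itemize}
The case where the interval has length greater than $B_\delta$ needs care. I would handle it by using the shock $s$ in the last window: restarting the maximization there and applying Assumption~\ref{ass:base_learner}(ii) to the window ending at $s$ shows that the contribution before $s$ is cancelled. This is the step I expect to be the main obstacle. The edge windows near $T$ with length below $B_\delta$ must also be checked, but they only shorten sums.

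\textbf{Step 3: summation.} Summing $q_t\le\sum_{r=\max(1,t-B_\delta)}^{t-1}m_r$ over $t\in[T]$, each $m_r$ appears for at most $B_\delta$ values of $t$. Therefore $\tarimpgap=\sum_t q_t\le B_\delta\tarmvment$ on $E$, which completes the proof.
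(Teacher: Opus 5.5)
Your route is viable and differs from the paper's, but there is a gap exactly at the step you flag, and the fix you sketch does not close it. The inference ``hence $[k^*,t-1]$ must be shock-free'' does not follow. Your first bullet only rules out shock-containing maximizing intervals of length at most $B_\delta$, and a long maximizing interval can contain shocks. Ruling out long intervals is the real content of the theorem: it is where one shows that backlog cannot accumulate across clearing windows.

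Applying Assumption~\ref{ass:base_learner}(ii) to the window ending at $s$ does not accomplish this. That one window has movement at most $\rho$, so it offsets only the service at $s$. The part of $[k^*,s]$ before that window is still unaccounted for. If you keep peeling off full windows from the right, you are left with an initial segment of length less than $B_\delta$ that can be shock-free and contribute positively. Restarting the Lindley recursion at $s+1$ instead requires $q_{s+1}=0$, i.e.\ $q_s+m_s\le\rho$. That needs control of the backlog $q_s$ itself, which amounts to an induction showing the previous shock already emptied the queue. You neither state nor prove this.

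The clean repair is to anchor the decomposition at the left end. If $t-k>B_\delta$, split $[k,t-1]$ into consecutive full windows of length $B_\delta$ starting at $k$, plus a terminal remainder $[k',t-1]$ with $1\le t-k'\le B_\delta$. Each full window lies in $[1,T-2]$ and starts at or before $T-1-B_\delta$. So on $E$ it contains a shock, and by (ii) its movement is at most $\rho$; hence it contributes at most $0$. Therefore
\[
\sum_{r=k}^{t-1}(m_r-d_r^{\min})\le\sum_{r=k'}^{t-1}(m_r-d_r^{\min})\le\sum_{r=\max(1,t-B_\delta)}^{t-1}m_r .
\]
For $t-k\le B_\delta$ no shock case analysis is needed: just drop the demand terms. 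This gives $q_t\le\sum_{r=\max(1,t-B_\delta)}^{t-1}m_r$ for every $t$, and your Step~3 counting finishes the proof. It also covers $B_\delta\ge T$ automatically, since long intervals then do not exist.

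For comparison, the paper argues by induction over zero times. From any $\tau\le T-B_\delta$ with $q_\tau=0$, a shock $s\in[\tau,\tau+B_\delta-1]$ gives $q_{s+1}\le[\sum_{r=\tau}^{s}m_r-\rho]^+=0$. So zeros recur within $B_\delta$ rounds, and each excursion's queue sum is at most $B_\delta$ times its own movement. Your repaired version replaces that induction with a pointwise sliding-window bound on $q_t$, read off the closed-form Lindley maximum. Both give the same constant $B_\delta$.
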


Theorem~\ref{thm:uppd_reduction} is a reduction from OIO to OCO with switching cost. The base learner must control two ordinary quantities, linearized regret and cumulative movement, while the inventory layer contributes only the clearing-window factor $B_\delta$. This modularity is the advantage of the norm-aligned queue construction: the inventory proof is agnostic to how $z_t$ was generated and to the specific admissible norm chosen, and the base learner never reasons about products or sell-out events. The lack of extra multiplicative factors in the queue recursion \eqref{eq:target_implementation_gap_queue} is essential for our construction. Using norm conversion would yield a recursion of the form $q_{t+1}\le C[q_t+a_t-d_t^{\min}]^+$ with $C>1$, but this does not support the same queue clearing argument. The constant multiplies the residual backlog before the next service opportunity, whereas our approach leverages norm alignment to ensure the queue clears regularly.

\paragraph{Relation to sell-out reductions.}
The linear-capacity hidden-target analysis of \citet{ichikawa2026nonstationary} tracks productwise sell-out quantities and controls switching in $\ell_1$. In contrast, our reduction uses the state variable $\distnorm(z_t,J_t)$, which makes it applicable to arbitrary convex feasible sets $\Y$. This change also eliminates the explicit $n^{1/4}$ factor arising from the $\ell_1$ movement control in \citet{ichikawa2026nonstationary}, and replaces $P_{T,1}$ in dynamic regret by the smaller Euclidean path variation $P_{T,2}$. This comparison should not be read as saying that the wrong norm merely costs powers of $n$. Rather, a productwise or misaligned queue needs a conservation identity that is absent on curved convex capacity sets. Appendix~\ref{app:geometry} provides a two-dimensional example to illustrate this obstruction. The tradeoff is the local movement condition of Assumption~\ref{ass:base_learner}(ii) on the base learner, which is not required in \citet{ichikawa2026nonstationary}. For the OGD and SOGD base learners used in Section~\ref{sec:instantiations}, however, this condition is automatically satisfied once the stepsize or switching parameter is chosen at the clearing-window scale. Thus, for these algorithms, the additional requirement needed to handle general geometry imposes no further loss in the final regret guarantees.

\section{Optimal and adaptive guarantees}
\label{sec:instantiations}

We now instantiate the reduction. Except in Section~\ref{sec:beyond_euclidean}, the instantiations in this section use gradient-descent base learners with the Euclidean norm \(\norm{\cdot}_2\) as the projection norm. Section~\ref{sec:beyond_euclidean} then turns to mirror-descent base learners and non-Euclidean projection geometries.
 We also write $D_2$ and $G_2$ for the capacity-set diameter and dual gradient bound respectively from Assumption~\ref{ass:convex_and_bounded} specialized to the $\ell_2$ norm. Taken projected OGD as the base learner, the Hidden-Target OGD is given in Algorithm~\ref{alg:ht_ogd}. Its update is
\begin{equation}
\label{eq:ht_ogd_update}
  y_t=\Proj_{J_t}^{\norm{\cdot}_2}(z_t),
  \qquad
  z_{t+1}=\Proj_{\Y}^{\norm{\cdot}_2}(z_t-\eta_t g_t).
\end{equation}

\begin{algorithm}[H]
\caption{Hidden-Target OGD}
\label{alg:ht_ogd}
\KwInput{Capacity set $\Y$, initial target $z_1\in J_1$, stepsizes $(\eta_t)$}
\For{$t=1,2,\ldots,T$}{
Observe inventory state $x_t$ and set $J_t=\Y\cap\{y\in\R^n\mid y\succeq x_t\}$\;
Implement $y_t=\Proj_{J_t}^{\norm{\cdot}_2}(z_t)$\;
Observe $g_t\in\partial\ell_t(y_t)$ and the next state $x_{t+1}$\;
Update $z_{t+1}=\Proj_{\Y}^{\norm{\cdot}_2}(z_t-\eta_t g_t)$\;
}
\end{algorithm}

\subsection{Convex losses and the optimal dependence on \texorpdfstring{$\mu$}{mu}}

\begin{theorem}[Hidden-Target OGD under UPPD]
\label{thm:ht_ogd_convex_uppd}
Assume Assumption~\ref{ass:convex_and_bounded} and UPPD (Assumption~\ref{ass:uppd}). Fix $\delta\in(0,1)$ and set $B_\delta=\lceil\mu^{-1}\log(T/\delta)\rceil$. Run HT-OGD (Algorithm~\ref{alg:ht_ogd}) from $z_1\in J_1$ with stepsize $\eta_t=\gamma D_2/(G_2\sqrt t)$, where $0<\gamma\le \rho/(2D_2\sqrt{B_\delta})$. Then, with probability at least $1-\delta$, simultaneously for all $u\in\Y$,
\[
  R_T(u)
  \le
  D_2 G_2\Bigl(\frac{1}{2\gamma}+\gamma+2\gamma B_\delta\Bigr)\sqrt T.
\]
In particular, choosing $\gamma = \rho/(2D_2\sqrt{B_\delta})$ gives
\[
  R_T(u)
  \le D_2 G_2\left(\frac{D_2}{\rho} + \frac{3\rho}{2D_2}\right)\sqrt{B_\delta T}
  =\widetilde O\!\left(D_2 G_2\sqrt{T/\mu}\right).
\]
Moreover, taking $\delta=1/T$ and $\gamma=\rho/(2D_2\sqrt{B_{1/T}})$ gives
\[
  \mathbb E[R_T(u)]
  \le
  D_2 G_2\Bigl(\frac{D_2}{\rho}+\frac{3\rho}{2D_2}\Bigr)\sqrt{B_{1/T}T}+D_2 G_2.
\]
\end{theorem}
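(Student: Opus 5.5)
The plan is to instantiate Theorem~\ref{thm:uppd_reduction} with the projected OGD base learner in the Euclidean norm. That requires verifying the two parts of Assumption~\ref{ass:base_learner} and bounding $\tarmvment$ explicitly.

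\emph{Step 1: linearized regret.} The standard projected-OGD analysis with decreasing stepsizes applies to the linear losses $f_t(z)=\ip{g_t}{z}$ over $\Y$. Using nonexpansiveness of $\Proj_\Y^{\norm{\cdot}_2}$, one has $\norm{z_{t+1}-u}_2^2\le\norm{z_t-u}_2^2-2\eta_t\ip{g_t}{z_t-u}+\eta_t^2G_2^2$. Summing with $\eta_t$ nonincreasing and telescoping against $D_2$ gives
\[
\baseregret(u)\le \frac{D_2^2}{2\eta_T}+\frac{G_2^2}{2}\sum_{t=1}^T\eta_t\le \frac{D_2G_2\sqrt T}{2\gamma}+\gamma D_2G_2\sqrt T,
\]
using $\sum_t t^{-1/2}\le 2\sqrt T$. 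This bound holds deterministically and for all $u\in\Y$ simultaneously.

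\emph{Step 2: movement.} Since $z_t\in\Y$, nonexpansiveness gives $\norm{z_{r+1}-z_r}_2\le\eta_r G_2=\gamma D_2/\sqrt r$. Two consequences follow.
\begin{itemize}
\item[(a)] The total movement satisfies $\tarmvment\le\gamma D_2\sum_{r}r^{-1/2}\le 2\gamma D_2\sqrt T$. Hence $G_2B_\delta\tarmvment\le 2\gamma B_\delta D_2G_2\sqrt T$, which produces the third term of the bound.
\item[(b)] Any window of $B_\delta$ consecutive steps moves at most $\gamma D_2\sum_{r=1}^{B_\delta}r^{-1/2}\le 2\gamma D_2\sqrt{B_\delta}\le\rho$, since the worst window starts at $r=1$. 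The condition $\gamma\le\rho/(2D_2\sqrt{B_\delta})$ is exactly what is needed here, so Assumption~\ref{ass:base_learner}(ii) holds.
\end{itemize}

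\emph{Step 3: assembly.} Theorem~\ref{thm:uppd_reduction} applies with $z_1\in J_1$. On its probability-$(1-\delta)$ event, the inequality $\tarimpgap\le B_\delta\tarmvment$ holds. That event does not depend on $u$, so it gives the stated bound for all $u$ simultaneously. Substituting $\gamma=\rho/(2D_2\sqrt{B_\delta})$, the terms become $\frac{1}{2\gamma}=\frac{D_2\sqrt{B_\delta}}{\rho}$ and $\gamma(1+2B_\delta)\le 3\gamma B_\delta=\frac{3\rho\sqrt{B_\delta}}{2D_2}$, using $B_\delta\ge1$. This yields the displayed closed form, and $B_\delta=\widetilde O(1/\mu)$ gives the $\widetilde O$ rate.

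\emph{Step 4: expectation.} For the expectation bound, take $\delta=1/T$. On the failure event, use the crude bound $R_T(u)\le\sum_t\ip{g_t}{y_t-u}\le G_2D_2T$ by convexity. The failure event has probability at most $1/T$, so it contributes at most $D_2G_2$ to the expectation. There is also a subtlety here: the chosen $\gamma$ depends on $T$ through $B_{1/T}$, so the stepsizes are set with knowledge of $T$, which is permitted.

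The main obstacle I expect is Step~2(b). It requires the window movement bound uniformly over all windows, with the correct constant $2\sqrt{B_\delta}$, together with matching the window convention of Assumption~\ref{ass:base_learner}(ii), namely a sum over $r=t,\dots,t+B_\delta-1$. Everything else is standard OGD bookkeeping.
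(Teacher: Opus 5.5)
Your proposal is correct and follows the paper's proof essentially step for step. The paper likewise verifies Assumption~\ref{ass:base_learner} for OGD (Proposition~\ref{prop:ogd_base_learner}) via three ingredients: the decreasing-stepsize OGD telescoping bound, the per-step estimate $\norm{z_{r+1}-z_r}_2\le\gamma D_2/\sqrt r$, and the observation that the worst window starts at $r=1$. It then invokes Theorem~\ref{thm:uppd_reduction}, uses $B_\delta\ge1$ for the boundary choice of $\gamma$, and obtains the expectation bound from the crude $D_2G_2T$ regret on the failure event of probability $1/T$.
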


The theorem balances two costs. OGD contributes $O(D_2 G_2\sqrt T/\gamma)$ target regret, while the queue reduction charges $O(D_2 G_2\gamma B_\delta\sqrt T)$ for target movement. The clearing-scale choice $\gamma = \Theta(B_\delta^{-1/2})$ gives the optimal $\widetilde O(D_2 G_2\sqrt{T/\mu})$ rate.

\begin{theorem}[A $\sqrt{T/\mu}$ lower bound under UPPD]
\label{thm:convex_uppd_lower_bound}
There exists a universal constant $c>0$ such that, for every $\mu\in(0,1]$ and every $T\ge\mu^{-1}$, there is a two-product OIO instance with feasible set
\begin{equation}
\label{eq:convex_uppd_lower_capacity_set}
  \Y_L:=\{y\in\R_+^2\mid y_1+y_2=1\},
\end{equation}
gradient bound $G_2=1$, and UPPD parameter $\rho=1$ for which every possibly randomized inventory algorithm satisfies
\[
  \mathbb E\!\left[
    \sum_{t=1}^T\ell_t(y_t)-\min_{u\in\Y_L}\sum_{t=1}^T\ell_t(u)
  \right]
  \ge
  c\sqrt{\frac{T}{\mu}}.
\]
The instance uses only the two linear losses $\ell_t(y)=y_1$ and $\ell_t(y)=y_2$.
\end{theorem}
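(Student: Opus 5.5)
The plan is to build a hard instance in which the inventory constraint freezes the learner's action for random stretches of geometric length. Within each stretch an adversarial random sign decides which coordinate is penalized. The learner then effectively plays a single prediction per stretch against a fresh fair coin. The best fixed comparator gains a deviation of order $\sqrt{\sum_k L_k^2}$, where the $L_k$ are the stretch lengths, and this is of order $\sqrt{T/\mu}$.

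\textbf{Construction.} Take $\Y_L$ as in \eqref{eq:convex_uppd_lower_capacity_set} with lost-sales dynamics, $x_{t+1}=(y_t-d_t)^+$, and $x_1=\0$. At each round, independently of everything else, set $d_t=(1,1)$ with probability $\mu$ and $d_t=(0,0)$ otherwise. Then $\mathbb P(d_t^{\min}\ge 1\mid\F_{t-1})=\mu$, so UPPD holds with $\rho=1$.

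\textbf{Freezing.} If $d_t=(1,1)$, then $x_{t+1}=(y_t-\1)^+=\0$ because $y_t\preceq \1$ on $\Y_L$, so every point of $\Y_L$ is implementable at the next round. If $d_t=\0$, then $x_{t+1}=y_t$, and any $y\in\Y_L$ with $y\succeq y_t$ must equal $y_t$ because both sum to $1$. Hence the rounds split into epochs: a new epoch starts at $t=1$ and right after each round with unit demand, and within an epoch the action is frozen at the value chosen in its first round.

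\textbf{Losses.} For each epoch $k$ draw an independent fair sign $\varepsilon_k$, and throughout the epoch use the loss $\ell_t(y)=y_1$ if $\varepsilon_k=+1$ and $\ell_t(y)=y_2$ otherwise. These losses have gradients $e_1$ or $e_2$, so $G_2=1$. The environment can implement this adaptively, since whether a new epoch begins at round $t+1$ is determined by $d_t$ before the learner acts. The epoch's first action is $\F$-measurable before $\varepsilon_k$ is revealed, and the action stays frozen afterwards. Therefore, conditional on the epoch lengths $L_1,\dots,L_K$ (truncated at $T$), the learner's expected loss on epoch $k$ is exactly $L_k/2$, and its total expected loss is $T/2$.

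\textbf{Comparator.} Let $A=\sum_{k:\varepsilon_k=+1}L_k$ and $B=T-A$. Then
\[
\min_{u\in\Y_L}\sum_t \ell_t(u)=\min(A,B)=\frac{T}{2}-\frac12\Bigl|\sum_k\varepsilon_kL_k\Bigr|.
\]
So the expected regret equals $\tfrac12\,\mathbb E\bigl|\sum_k\varepsilon_kL_k\bigr|$.

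\textbf{Main step: the $\sqrt{T/\mu}$ rate.} Conditioning on the lengths, Khintchine's inequality gives
\[
\mathbb E\Bigl[\,\Bigl|\sum_k\varepsilon_kL_k\Bigr|\;\Big|\;L\Bigr]\ge \tfrac{1}{\sqrt2}\Bigl(\sum_k L_k^2\Bigr)^{1/2}.
\]
Cauchy--Schwarz gives $\sum_k L_k^2\ge T^2/K$, so the regret is at least $\tfrac{1}{2\sqrt2}\,T\,\mathbb E[K^{-1/2}]$. The number of epochs satisfies $K=1+\#\{t\le T-1: d_t=(1,1)\}$, so $\mathbb E K\le 1+\mu T\le 2\mu T$, using $T\ge\mu^{-1}$. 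Since $x\mapsto x^{-1/2}$ is convex, Jensen gives $\mathbb E[K^{-1/2}]\ge (2\mu T)^{-1/2}$. Combining these, the expected regret is at least $\tfrac14\sqrt{T/\mu}$, so $c=1/4$ works.

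\textbf{Randomized algorithms.} The argument conditions only on the past, so it covers randomized algorithms, whose internal randomness is independent of the future signs. I expect the main obstacles to be the measurability bookkeeping, namely that the sign of each epoch must be independent of the learner's frozen choice, and checking that truncating the last epoch at $T$ does not break the Cauchy--Schwarz and Jensen step. Both appear to go through, since the Cauchy--Schwarz step uses only $\sum_k L_k=T$.
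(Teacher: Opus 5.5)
Your proposal is correct and follows the paper's proof essentially step for step. It uses the same i.i.d.\ Bernoulli common-demand shocks that freeze the action on $\Y_L$ between clearings, the same independent per-block signs, and the same identity that expected regret equals $\tfrac12\mathbb E|S|$. The only difference is the last step: you lower-bound $\mathbb E|S|$ with Khintchine's inequality and Jensen's inequality for the convex map $K\mapsto K^{-1/2}$, while the paper uses Paley--Zygmund with a fourth-moment bound and a Markov restriction to the event $\{K\le 3\mu T\}$; your route is shorter and gives the better constant $c=1/4$ instead of $1/(48\sqrt6)$.
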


\textbf{Proof idea.} The construction uses Bernoulli common-demand shocks. After a no-demand round, inventory feasibility and the equality constraint $y_1 + y_2 = 1$ pin the next order-up-to level to the previous one, so the learner plays the same action between any two shocks. This decomposes the period into random blocks of constant action. Each block then receives an independent random loss direction. The learner cannot exploit the direction within the block, while the best static comparator chooses the endpoint favored by the aggregate signed block length. Since the squared block lengths are typically of order $T/\mu$, this aggregate has an imbalance of order $\sqrt{T/\mu}$.

This lower bound is different from the $L_{\max}$-based lower bound of \citet{ichikawa2026nonstationary}. Their construction uses deterministic cycles with long zero-demand stretches, so it is not a UPPD lower bound with a positive $\mu$. Our instance itself satisfies UPPD with parameter $\mu$, and the randomness of the clearing times is exactly what produces the $1/\sqrt{\mu}$ factor. Together, Theorems~\ref{thm:ht_ogd_convex_uppd} and~\ref{thm:convex_uppd_lower_bound} characterize the minimax complexity of general-convex OIO under the UPPD condition.

\subsection{Strongly convex losses}

The same algorithm gives a fast rate when the losses have curvature. The argument follows the same reasoning as the convex case, with one structural difference: strongly convex losses require a different interface condition on the base learner. With strongly convex loss, the loss difference can be controlled not only by the linearized regret $\sum_t \ip{g_t}{z_t - u}$ of Assumption~\ref{ass:base_learner}(i) but also by a quadratic term $\norm{z_t - u}^2$. We therefore replace this assumption with a curvature-augmented analog.

The base learner satisfies the \emph{strongly convex linearized regret bound} with curvature parameter $\alpha$ if there exists a function $\mathcal{R}_T^{\textnormal{sc}}(u_{1:T}; \alpha)$ such that for every comparator sequence $u_{1:T} \in \Y^T$,
\begin{equation}
\label{eq:sc_regret_condition}
  \sum_{t=1}^T \Bigl(\ip{g_t}{z_t - u_t} - \tfrac{\alpha}{4}\norm{z_t - u_t}^2\Bigr) \;\le\; \mathcal{R}_T^{\textnormal{sc}}(u_{1:T}; \alpha).
\end{equation}
This generalizes Assumption~\ref{ass:base_learner}(i) to the curvature-aware setting, and recovers the original assumption when no strong convexity is assumed, i.e. $\alpha = 0$. The loss's extra curvature allows the base learner to control a smaller quantity, and therefore attain lower overall regret bounds. This is formalized in the following generalization of the hidden target reduction in Theorem~\ref{thm:uppd_reduction}:

\begin{corollary}[Strongly convex hidden-target reduction]
\label{cor:strong_convex_reduction}
Under Assumption~\ref{ass:convex_and_bounded} and UPPD (Assumption~\ref{ass:uppd}), suppose each $\ell_t$ is $\alpha$-strongly convex on $\Y$ with respect to $\norm{\cdot}$. If the base learner produces targets $(z_t)_{t=1}^T \subset \Y$ initialized at $z_1 \in J_1$ which satisfy Assumption~\ref{ass:base_learner}(ii) and the strongly convex linearized regret bound~\eqref{eq:sc_regret_condition}, then with probability at least $1-\delta$, for every comparator sequence $u_{1:T} \in \Y^T$,
\begin{equation}
\label{eq:sc_reduction_bound}
  R_T(u_{1:T}) \;\le\; \mathcal{R}_T^{\textnormal{sc}}(u_{1:T}; \alpha) + \Bigl(G_* + \frac{\alpha D}{2}\Bigr) B_\delta\, \tarmvment.
\end{equation}
\end{corollary}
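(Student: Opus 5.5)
The plan is to redo the decomposition of Lemma~\ref{lem:regret_decomp}, this time keeping the curvature term, and then reuse the pathwise queue bound of Theorem~\ref{thm:uppd_reduction} unchanged. That theorem shows, using only Assumption~\ref{ass:base_learner}(ii), UPPD and $z_1\in J_1$, that on an event of probability at least $1-\delta$ we have $\tarimpgap\le B_\delta\tarmvment$. This event involves only the target/inventory path and not the comparator, so it holds simultaneously for all $u_{1:T}$. It therefore suffices to show deterministically that
\[
  R_T(u_{1:T})\le \mathcal{R}_T^{\textnormal{sc}}(u_{1:T};\alpha)+\Bigl(G_*+\tfrac{\alpha D}{2}\Bigr)\tarimpgap .
\]

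For each $t$, strong convexity at $y_t$ with $g_t\in\partial\ell_t(y_t)$ gives
\[
  \ell_t(y_t)-\ell_t(u_t)\le \ip{g_t}{y_t-u_t}-\tfrac{\alpha}{2}\norm{y_t-u_t}^2 .
\]
Split $\ip{g_t}{y_t-u_t}=\ip{g_t}{z_t-u_t}+\ip{g_t}{y_t-z_t}$ and bound the second term by $G_*\norm{z_t-y_t}$ using the dual-norm inequality. The remaining step is to pass from $-\tfrac{\alpha}{2}\norm{y_t-u_t}^2$ to $-\tfrac{\alpha}{4}\norm{z_t-u_t}^2$ at a cost linear in $q_t=\norm{z_t-y_t}$.

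\textbf{Main obstacle.} A Young-type inequality $\norm{z-u}^2\le 2\norm{y-u}^2+2\norm{z-y}^2$ produces a quadratic $q_t^2$ rather than a linear term, and the factor $2$ there is what motivates the $\alpha/4$. Instead, use the difference of squares with boundedness. Since $z_t,y_t,u_t\in\Y$, every pairwise distance is at most $D$, so
\[
  \norm{z_t-u_t}^2-\norm{y_t-u_t}^2=(\norm{z_t-u_t}-\norm{y_t-u_t})(\norm{z_t-u_t}+\norm{y_t-u_t})\le 2Dq_t .
\]
Hence $-\tfrac{\alpha}{2}\norm{y_t-u_t}^2\le -\tfrac{\alpha}{2}\norm{z_t-u_t}^2+\alpha D q_t$. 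This gives $\alpha D$ instead of the claimed $\alpha D/2$, so I need to exploit the slack between $\alpha/2$ and $\alpha/4$. Write
\[
  -\tfrac{\alpha}{2}\norm{y_t-u_t}^2=-\tfrac{\alpha}{4}\norm{y_t-u_t}^2-\tfrac{\alpha}{4}\norm{y_t-u_t}^2 .
\]
Convert only the first half via the difference-of-squares bound, which costs $\tfrac{\alpha}{4}\cdot 2Dq_t=\tfrac{\alpha D}{2}q_t$, and drop the second half since it is nonpositive. This yields exactly
\[
  \ell_t(y_t)-\ell_t(u_t)\le \ip{g_t}{z_t-u_t}-\tfrac{\alpha}{4}\norm{z_t-u_t}^2+\Bigl(G_*+\tfrac{\alpha D}{2}\Bigr)q_t .
\]

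Summing over $t$ and applying \eqref{eq:sc_regret_condition} gives the deterministic inequality. On the high-probability event, substituting $\tarimpgap\le B_\delta\tarmvment$ yields \eqref{eq:sc_reduction_bound}.
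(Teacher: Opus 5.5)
Your proof is correct and follows the paper's argument. Both apply strong convexity at $y_t$, bound $\ip{g_t}{y_t-z_t}$ by H\"older, trade $-\tfrac{\alpha}{2}\norm{y_t-u_t}^2$ for $-\tfrac{\alpha}{4}\norm{z_t-u_t}^2$ plus a term linear in $q_t$, sum against \eqref{eq:sc_regret_condition}, and finish with the comparator-free event $\tarimpgap\le B_\delta\tarmvment$ from Theorem~\ref{thm:uppd_reduction}. The only difference is cosmetic: the paper uses $\norm{z_t-u_t}^2\le 2\norm{y_t-u_t}^2+2q_t^2$ and then $q_t^2\le Dq_t$, whereas your difference-of-squares split reaches the same $\tfrac{\alpha D}{2}q_t$ cost directly.
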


This corollary provides an interface to use HT-OGD with a curvature-tuned stepsize to achieve polylogarithmic regret in the strongly convex setting.

\begin{theorem}[Strongly convex losses]
\label{thm:ht_ogd_strong_uppd}
Assume Assumption~\ref{ass:convex_and_bounded} and UPPD (Assumption~\ref{ass:uppd}). Take the projection norm to be Euclidean, $\norm{\cdot}=\norm{\cdot}_2$. Suppose each $\ell_t$ is $\alpha$-strongly convex on $\Y$. Fix $\delta\in(0,1)$ and define
\[
  B_\delta:=\Bigl\lceil\frac{1}{\mu}\log\!\Bigl(\frac{T}{\delta}\Bigr)\Bigr\rceil,
  \qquad
  s_\delta:=\max\!\left\{1,\left\lceil\frac{B_\delta}{e^{\alpha\rho/(2G_2)}-1}\right\rceil\right\}.
\]
Run HT-OGD (Algorithm~\ref{alg:ht_ogd}) from $z_1\in J_1$ with $\eta_t=2/(\alpha(t+s_\delta))$. Then, with probability at least $1-\delta$, for all $u\in\Y$,
\[
  R_T(u)=O\!\left(\frac{\log(T/\delta)\log T}{\mu}\right),
\]
where the hidden constant depends only on $\alpha,\rho,D_2,$ and $G_2$.
\end{theorem}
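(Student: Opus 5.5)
The plan is to apply Corollary~\ref{cor:strong_convex_reduction} with the Euclidean norm and the HT-OGD base learner. Two things must be checked: the strongly convex linearized regret bound \eqref{eq:sc_regret_condition} for OGD with the shifted stepsize $\eta_t=2/(\alpha(t+s_\delta))$, and the windowed movement condition Assumption~\ref{ass:base_learner}(ii). Once both hold, the corollary gives, with probability at least $1-\delta$,
\[
R_T(u)\le \mathcal{R}^{\mathrm{sc}}_T(u;\alpha)+\Bigl(G_2+\tfrac{\alpha D_2}{2}\Bigr)B_\delta\,\tarmvment .
\]
It then suffices to show that each term is $O(\log(T/\delta)\log T/\mu)$.

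\textbf{Linearized regret.} I will use the standard OGD potential argument. Nonexpansiveness of $\Proj_\Y$ gives
\[
\ip{g_t}{z_t-u}\le \frac{\norm{z_t-u}_2^2-\norm{z_{t+1}-u}_2^2}{2\eta_t}+\frac{\eta_t G_2^2}{2}.
\]
Summing by parts, the coefficient of $\norm{z_t-u}^2$ is $\tfrac12(1/\eta_t-1/\eta_{t-1})=\alpha/4$, which is exactly cancelled by the $-\tfrac{\alpha}{4}\norm{z_t-u}^2$ term in \eqref{eq:sc_regret_condition}. The first term contributes $\norm{z_1-u}^2/(2\eta_1)\le \alpha(1+s_\delta)D_2^2/4$, and the gradient terms contribute $\sum_t \eta_tG_2^2/2\le (G_2^2/\alpha)\log(T+s_\delta)$. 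Hence
\[
\mathcal{R}^{\mathrm{sc}}_T=O\bigl(\alpha s_\delta D_2^2+G_2^2\log T/\alpha\bigr).
\]
Since $s_\delta=O(B_\delta)=O(\log(T/\delta)/\mu)$, this term is within the target rate.

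\textbf{Movement bound.} Each step satisfies $\norm{z_{r+1}-z_r}_2\le \eta_rG_2$. Over a window $r=t,\dots,t+B_\delta-1$ the sum is therefore at most
\[
\frac{2G_2}{\alpha}\sum_{k=t+s_\delta}^{t+s_\delta+B_\delta-1}\frac1k\le \frac{2G_2}{\alpha}\log\Bigl(1+\frac{B_\delta}{t+s_\delta-1}\Bigr)\le\frac{2G_2}{\alpha}\log\Bigl(1+\frac{B_\delta}{s_\delta}\Bigr),
\]
which is worst at $t=1$. The choice $s_\delta\ge B_\delta/(e^{\alpha\rho/(2G_2)}-1)$ makes $1+B_\delta/s_\delta\le e^{\alpha\rho/(2G_2)}$, so the window movement is at most $\rho$. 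This verifies (ii), and it explains the definition of $s_\delta$. This step carries the main risk: the integral-comparison estimate needs care when $t+s_\delta-1$ is small, and $s_\delta\ge1$ must be used. If the crude bound fails, I will instead bound the harmonic sum by $\int_{t+s_\delta-1}^{t+s_\delta+B_\delta-1}dx/x$.

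\textbf{Switching cost.} The total movement is
\[
\tarmvment\le \sum_t \eta_tG_2\le (2G_2/\alpha)\log(T+s_\delta)=O(\log T),
\]
so the switching term is $(G_2+\alpha D_2/2)B_\delta\cdot O(\log T)=O(\log(T/\delta)\log T/\mu)$. This is the dominant term and gives the stated rate.

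The uniformity over $u\in\Y$ comes from the fact that the high-probability event in Corollary~\ref{cor:strong_convex_reduction} is the queue-clearing event, which does not depend on $u$. The constants depend only on $\alpha,\rho,D_2,G_2$. Note that $s_\delta$ contains the factor $1/(e^{\alpha\rho/(2G_2)}-1)$, which is such a constant.
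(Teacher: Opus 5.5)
Your proposal is correct and follows the paper's proof essentially step for step: you apply Corollary~\ref{cor:strong_convex_reduction}, verify the strongly convex linearized regret bound by the OGD potential argument with the $\alpha/4$ telescoping identity, and verify the window condition by the harmonic bound $\log(1+B_\delta/s_\delta)\le \alpha\rho/(2G_2)$ that motivates $s_\delta$ (the integral comparison you were worried about is fine, since $s_\delta\ge 1$). One thing to tighten: bound $\sum_t 1/(t+s_\delta)$ by $\log(1+T/s_\delta)\le\log(1+T)$, as the paper does, rather than by $\log(T+s_\delta)$, because the latter is not $O(\log T)$ with constants independent of $\mu$ and $\delta$ when $s_\delta$ is large.
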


Thus strong convexity converts the ordinary $\sqrt T$ target regret into polylogarithmic target regret, and the inventory layer contributes only $O(B_\delta\log T)$. This gives the first polylogarithmic-in-$T$ adversarial-OIO guarantee for strongly convex losses on general convex capacity sets, resolving a major open direction raised by \citet{hihat2023online}. The result also complements the logarithmic rates known under more stochastic inventory assumptions~\citep{lyu_minibatch_2024}.

\subsection{Dynamic regret}

For a comparator sequence $u_{1:T}$, define
\begin{equation}
\label{eq:comparator_path_length}
  P_{T,1}:=\sum_{t=2}^T\norm{u_t-u_{t-1}}_1,
  \qquad
  P_{T,2}:=\sum_{t=2}^T\norm{u_t-u_{t-1}}_2.
\end{equation}
The Euclidean queue naturally controls $P_{T,2}$.

\begin{theorem}[Known path variation]
\label{thm:ht_ogd_dynamic_uppd}
Assume Assumption~\ref{ass:convex_and_bounded} and UPPD (Assumption~\ref{ass:uppd}). Fix $\delta\in(0,1)$ and set $B_\delta:=\lceil\mu^{-1}\log(T/\delta)\rceil$. For a comparator sequence $u_{1:T}\in\Y^T$ with known path variation $P_{T,2}$, let
\[
  \eta^*:=\frac{1}{G_2}\sqrt{\frac{D_2(D_2+2P_{T,2})}{(2B_\delta+1)T}}.
\]
If $\eta^*\le \rho/(B_\delta G_2)$ and HT-OGD (Algorithm~\ref{alg:ht_ogd}) is run with constant stepsize $\eta_t\equiv\eta^*$, then with probability at least $1-\delta$,
\[
  R_T(u_{1:T})
  \le
  G_2\sqrt{(2B_\delta+1)D_2(D_2+2P_{T,2})T}
  =
  \widetilde O\!\left(G_2\sqrt{\frac{D_2(D_2+P_{T,2})T}{\mu}}\right).
\]
\end{theorem}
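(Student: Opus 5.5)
The plan is to apply the hidden-target reduction (Theorem~\ref{thm:uppd_reduction}) with the Euclidean norm and constant-stepsize OGD as the base learner. Two things are needed: the windowed movement condition, Assumption~\ref{ass:base_learner}(ii), and a dynamic regret bound for the linearized losses. Combining the two yields the claimed rate.

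\emph{Movement.} Projection onto $\Y$ is nonexpansive in $\norm{\cdot}_2$, and $z_t\in\Y$. Hence
\[
\norm{z_{t+1}-z_t}_2=\norm{\Proj_\Y(z_t-\eta^* g_t)-\Proj_\Y(z_t)}_2\le \eta^*\norm{g_t}_2\le \eta^* G_2.
\]
Over any window of $B_\delta$ consecutive rounds, the total movement is therefore at most $B_\delta\eta^*G_2\le\rho$, using the hypothesis $\eta^*\le\rho/(B_\delta G_2)$. This verifies (ii). It also gives $\tarmvment\le \eta^* G_2 T$.

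\emph{Base dynamic regret.} I will use the standard dynamic OGD argument. Nonexpansiveness gives
\[
\norm{z_{t+1}-u_t}^2\le\norm{z_t-u_t}^2-2\eta^*\ip{g_t}{z_t-u_t}+\eta^{*2}G_2^2,
\]
so that
\[
\ip{g_t}{z_t-u_t}\le \frac{\norm{z_t-u_t}^2-\norm{z_{t+1}-u_t}^2}{2\eta^*}+\frac{\eta^*G_2^2}{2}.
\]
To telescope, write $\norm{z_{t+1}-u_t}^2\ge \norm{z_{t+1}-u_{t+1}}^2-2D_2\norm{u_{t+1}-u_t}_2$. This follows from $a^2-b^2=(a-b)(a+b)$, with $|a-b|\le\norm{u_{t+1}-u_t}$ and $a+b\le 2D_2$. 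Summing over $t$ gives
\[
\baseregret(u_{1:T})\le \frac{D_2^2+2D_2P_{T,2}}{2\eta^*}+\frac{\eta^* G_2^2T}{2}.
\]
The main delicacy is this telescoping step: the comparator shift must be bounded by $D_2$ times the path length rather than by something larger, and the boundary term handled correctly. The term $\norm{z_1-u_1}^2\le D_2^2$ enters only once.

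\emph{Combination.} By Theorem~\ref{thm:uppd_reduction}, with probability at least $1-\delta$,
\[
R_T(u_{1:T})\le \frac{D_2(D_2+2P_{T,2})}{2\eta^*}+\frac{\eta^*G_2^2T}{2}+G_2B_\delta\,\eta^*G_2T
=\frac{D_2(D_2+2P_{T,2})}{2\eta^*}+\frac{(2B_\delta+1)\eta^*G_2^2T}{2}.
\]
The choice $\eta^*=\frac1{G_2}\sqrt{D_2(D_2+2P_{T,2})/((2B_\delta+1)T)}$ equalizes the two terms. Each then equals $\tfrac12 G_2\sqrt{(2B_\delta+1)D_2(D_2+2P_{T,2})T}$, so their sum is exactly the stated bound. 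The $\widetilde O$ form follows from $B_\delta=O(\mu^{-1}\log(T/\delta))$.

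Finally, the reduction holds on a single event of probability at least $1-\delta$. That event concerns only the demand clearing, and the base regret bound is deterministic, so no union over comparators is needed.
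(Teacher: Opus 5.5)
Your proposal is correct and follows essentially the same route as the paper: you verify the windowed movement condition and $\tarmvment\le\eta^*G_2T$ by nonexpansiveness, and you derive the dynamic base bound $\frac{D_2^2+2D_2P_{T,2}}{2\eta^*}+\frac{\eta^*G_2^2T}{2}$ by the same comparator-shift telescoping. The paper handles the $t=T$ boundary term by setting $u_{T+1}:=u_T$, and then, as you do, plugs into Theorem~\ref{thm:uppd_reduction} and balances the two terms at $\eta^*$.
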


Note that the best known dynamic regret guarantee for OIO was achieved by \citet{ichikawa2026nonstationary} using the same algorithm, with path variation measured by $\ell_1$ rather than $\ell_2$, an additional $n^{1/4}$ factor, and the restriction on a linear capacity constraint. Compared with their $P_{T,1}$-based guarantee, our $P_{T,2}$-based regret rate can be sharper by a factor as large as $\sqrt n$ for dense comparator movement.

When $P_{T,2}$ is unknown, we use the switching-aware SOGD base learner of \citet{zhang2022soco}, following the hidden-target architecture of \citet{ichikawa2026nonstationary}. SOGD combines a grid of OGD experts with a smoothed meta-learner and satisfies a regret-plus-switching-cost guarantee. Appendix~\ref{sec:sogd_proof} gives the base learner details and proves that its movement is slow enough for the queue reduction.

For $\lambda\ge1$, define
\begin{equation}
\label{eq:sogd_native_regret}
\begin{split}
  \mathcal R_T^{\mathrm{SOGD}}(u_{1:T};\lambda)
  := {}& 2G_2 D_2\sqrt{(1+\lambda)T\Bigl(1+\frac{2P_{T,2}}{D_2}\Bigr)} \\
  &+120G_2 D_2\sqrt{\lambda T\Bigl(1+\frac{2P_{T,2}}{D_2}\Bigr)\log T} .
\end{split}
\end{equation}

\begin{theorem}[Unknown path variation]
\label{thm:ht_sogd_regret_uppd}
Assume Assumption~\ref{ass:convex_and_bounded} and UPPD (Assumption~\ref{ass:uppd}). Fix $\delta\in(0,1)$ and set
\[
  B_\delta:=\Bigl\lceil\frac{1}{\mu}\log\!\Bigl(\frac{T}{\delta}\Bigr)\Bigr\rceil,
  \qquad
  \lambda_\delta:=\Bigl\lceil B_\delta\max\!\Bigl\{1,\frac{2D_2}{\rho}\Bigr\}\Bigr\rceil .
\]
Assume $T\ge\max\{32\lambda_\delta\log T,e\}$. Run the hidden-target projection layer with the SOGD base learner and switching parameter $\lambda_\delta$. Then, with probability at least $1-\delta$, for every comparator sequence $u_{1:T}\in\Y^T$,
\begin{equation}
\label{eq:sogd_regret_bound_uppd}
  R_T(u_{1:T})\le \mathcal R_T^{\mathrm{SOGD}}(u_{1:T};\lambda_\delta).
\end{equation}
Since $\lambda_\delta\ge 1$ and $T\ge e$, this further implies the explicit bound
\begin{align}
\label{eq:sogd_regret_bound_uppd_explicit}
  R_T(u_{1:T})
  &\le
  500\,G_2D_2
  \sqrt{\Bigl(1+\frac{P_{T,2}}{D_2}\Bigr)
  \max\!\Bigl\{1,\frac{D_2}{\rho}\Bigr\}}
  \sqrt{\frac{T}{\mu}\log\!\Bigl(\frac{T}{\delta}\Bigr)\log T}.
\end{align}
\end{theorem}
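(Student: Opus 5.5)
The plan is to invoke Theorem~\ref{thm:uppd_reduction} with the SOGD target sequence. This requires two verifications. First, on every window of length $B_\delta$ the SOGD targets move by at most $\rho$ in $\norm{\cdot}_2$, which is Assumption~\ref{ass:base_learner}(ii). Second, SOGD run with switching parameter $\lambda$ guarantees a regret-plus-switching bound of the form
\[
  \baseregret(u_{1:T})+\lambda G_2\tarmvment\le \mathcal R_T^{\mathrm{SOGD}}(u_{1:T};\lambda),
\]
which is the native guarantee of \citet{zhang2022soco} in our normalization with $\norm{g_t}_2\le G_2$. Given both, with probability at least $1-\delta$ we have
\[
  R_T(u_{1:T})\le \baseregret(u_{1:T})+G_2B_\delta\tarmvment .
\]
Since $\lambda_\delta\ge B_\delta$, the switching term is absorbed: $G_2B_\delta\tarmvment\le \lambda_\delta G_2\tarmvment$. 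This yields \eqref{eq:sogd_regret_bound_uppd} directly, and it holds for all comparator sequences simultaneously because the reduction's high-probability event depends only on the demands and the targets.

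For the movement verification, I would use the structure of SOGD. Each expert is an OGD learner with step size $\eta_k$ from a geometric grid whose largest element is scaled by $1/\lambda$. The meta-learner is a smoothed exponential-weights/lazy-switching combination that moves its aggregate point by $O(D_2/\lambda)$ per round, more precisely $O(D_2\sqrt{\log T/(\lambda T)})$ amortized. The per-round movement $\norm{z_{t+1}-z_t}_2$ then splits into two parts: the convex combination of the experts' moves, each at most $\eta_k G_2$, plus the change in weights times $D_2$. With the grid capped so that $\eta_{\max}G_2\lesssim D_2/\lambda$, and the smoothing parameter chosen so that weight changes are at most $1/(2\lambda)$ per round, each round moves by at most roughly $D_2/\lambda$. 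Summing over a window of $B_\delta$ rounds gives movement at most $B_\delta D_2/\lambda_\delta\le \rho/2\le\rho$, since $\lambda_\delta\ge 2D_2B_\delta/\rho$. The hypothesis $T\ge 32\lambda_\delta\log T$ is what makes the grid and the meta-learner's learning rate fall in the regime where these per-round bounds are valid. In particular, the meta step size must not exceed $1/\lambda$-type constraints, and this is the condition I expect to be needed for the smoothed meta-learner's per-round weight drift bound.

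The main obstacle is this movement step. The bound must be per-round, or at least per-window, rather than cumulative, since \citet{zhang2022soco} naturally control only the total switching $\sum_t\norm{z_{t+1}-z_t}$. My first attempt would be to check directly from the SOGD update that each round's meta update moves the weights by at most $O(1/\lambda)$ in $\ell_1$, and that each expert's move is at most $\eta_{\max}G_2$. I would then combine these with the triangle inequality over the convex combination, using $\mathrm{diam}(\Y)=D_2$. If the native algorithm lacks a per-round cap, I would add a clip to the smoothing parameter, at no cost to the regret bound.

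Finally, the explicit bound \eqref{eq:sogd_regret_bound_uppd_explicit} is routine algebra. Since $\lambda_\delta\ge1$, we have $1+\lambda_\delta\le 2\lambda_\delta$. Next, $\lambda_\delta\le 2B_\delta\max\{1,2D_2/\rho\}\le 4B_\delta\max\{1,D_2/\rho\}$, and $B_\delta\le 2\mu^{-1}\log(T/\delta)$. Since $T\ge e$, we have $\log T\ge1$, so the first term of $\mathcal R_T^{\mathrm{SOGD}}$ is dominated by the $\sqrt{\log T}$ term. Also $1+2P_{T,2}/D_2\le 2(1+P_{T,2}/D_2)$. Collecting the constants gives at most $500$.
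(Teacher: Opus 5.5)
Your overall structure matches the paper's proof. You apply Theorem~\ref{thm:uppd_reduction} and import the native bound $\baseregret(u_{1:T})+\lambda G_2\tarmvment\le\mathcal R_T^{\mathrm{SOGD}}(u_{1:T};\lambda)$ from Theorem~4 of \citet{zhang2022soco}. You then absorb $G_2B_\delta\tarmvment$ using $\lambda_\delta\ge B_\delta$, and do the same constant-chasing for \eqref{eq:sogd_regret_bound_uppd_explicit}. One small omission: that theorem needs the shifted nonnegative loss $\widetilde f_t$ that Algorithm~\ref{alg:sogd} feeds, which you don't mention. The rest of this part is fine.

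The gap is the uniform per-round movement bound. It is the one ingredient here not available off the shelf, and you assert it (``roughly $D_2/\lambda$'') rather than derive it. The SOGD meta-learner is not an exponential-weights mixture with a smoothing parameter you can set so that the weights drift by at most $1/(2\lambda)$. It is a nested chain of two-expert discounted-normal-predictor combiners, $v^{(k)}_t=(1-\omega^{(k)}_t)v^{(k-1)}_t+\omega^{(k)}_t a^{(k)}_t$ with $z_t=v^{(K)}_t$. Its weight drift has to be imported from the analysis of \citet{zhang2022soco}.

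The paper uses their Eq.~(67), $\abs{\omega^{(k)}_{t+1}-\omega^{(k)}_t}\le\lambda^{-1/2}\bigl(\sqrt{\log T/n^{(k)}}+1/(4T)\bigr)$. It combines this with the expert bound $\eta^{(k)}G_2=D_2/\sqrt{(1+2\lambda)n^{(k)}}$ and the recursion $M_k\le M_{k-1}+m_k+D_2\kappa_k$. The hypothesis $T\ge 32\lambda_\delta\log T$ enters through the choice of $K$. With $\lambda=\lambda_\delta$, it guarantees $n^{(K)}\ge 32\lambda\log T$, hence $\sum_k (n^{(k)})^{-1/2}\le(2+\sqrt2)/\sqrt{n^{(K)}}\le 1/\sqrt{2\lambda\log T}$. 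This gives $\norm{z_{t+1}-z_t}_2\le D_2/(2\lambda)+D_2/\lambda\le 2D_2/\lambda$ for every $t$.

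Your flat decomposition (expert moves plus weight change times $D_2$) would also work once you pass to the effective weights. Their total-variation change is at most $\sum_k\kappa_k$, and the expert term even improves from $\sum_k m_k$ to $\max_k m_k$. But it still rests on Eq.~(67), which your sketch does not supply.

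Two further cautions. First, the constants matter. The factor $2$ in $\lambda_\delta=\lceil B_\delta\max\{1,2D_2/\rho\}\rceil$ is there to absorb exactly the per-round bound $2D_2/\lambda$. So the window condition needs an explicit per-round constant of at most $2$, and nothing in your sketch supports the $\rho/2$ slack you claim. Second, the fallback of clipping the smoothing parameter is not available. It changes the base learner, so Theorem~4 of \citet{zhang2022soco} would no longer apply as cited. ``No cost to the regret bound'' would then need its own proof, and you would be proving the result for a different algorithm than the SOGD learner in the statement.
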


\subsection{Geometry-sensitive refinement via mirror descent}
\label{sec:beyond_euclidean}

The hidden-target reduction in Theorem~\ref{thm:uppd_reduction} is not intrinsically Euclidean. It asks only that the target sequence have OCO regret and controlled local movement in the same norm that measures the target-implementation gap. Online mirror descent (OMD) supplies exactly this interface whenever its regularizer is strongly convex with respect to the chosen projection norm. Thus we can provide a single theorem specializing to OGD, \(p\)-norm OMD, entropic OMD, and other regularizer choices, with the regret constant adapting to the geometry of \(\Y\) and to the corresponding dual gradient norm.

Mirror descent has received much less attention than OGD in inventory learning. To our knowledge, \citet{guo2026online} first introduced OMD to stochastic multiproduct inventory learning, using cyclic updates under a linear warehouse constraint. Our reduction permits a broader use of OMD: it applies in the adversarial OIO model, allows arbitrary bounded convex capacity sets once a suitable regularizer is chosen, removes the need for cyclic updates, and gives guarantees for any strongly convex regularizer rather than only negative entropy.

Let $\norm{\cdot}$ be an admissible norm with dual $\dnorm{\cdot}$. Let $R$ be a proper lower-semicontinuous convex regularizer on $\Y$, and let $\mathcal D\subseteq\Y$ denote the set on which $R$ is finite and differentiable. We assume that $R$ is $\sigma$-strongly convex with respect to $\norm{\cdot}$ in the Bregman sense
\begin{equation*}
D_R(u,v)\ge \frac{\sigma}{2}\norm{u-v}^2\qquad\text{for every }u\in\Y\text{ and }v\in\mathcal D,
\end{equation*}
and that the mirror-descent update below is initialized at $z_1\in J_1\cap\mathcal D$ and returns points in $\mathcal D$. The OMD base learner produces hidden targets as
\begin{equation}
\label{eq:md_update}
  z_{t+1} = \arg\min_{z\in\Y}\bigl\{\eta_t\ip{g_t}{z} + D_R(z,z_t)\bigr\},
\end{equation}
where $D_R(u,v):=R(u)-R(v)-\ip{\nabla R(v)}{u-v}$ is the Bregman divergence of $R$ when $v\in\mathcal D$. Taking OMD as the base learner gives the HT-OMD algorithm in Algorithm~\ref{alg:ht_omd}.

\begin{algorithm}[htbp]
\caption{Hidden-Target OMD}
\label{alg:ht_omd}
\KwInput{Bounded $\Y\subseteq\R_+^n$, initial target $z_1\in J_1\cap\mathcal D$, constant stepsize $\eta>0$}
\For{$t=1,2,\dots,T$}{
  Observe inventory state $x_t$ and define $J_t=\Y\cap\{y\in\R^n\mid y\succeq x_t\}$\;
  Implement $y_t=\Projnorm{J_t}{z_t}$\;
  Observe $g_t\in\partial\ell_t(y_t)$ and the next state $x_{t+1}$\;
  Update $z_{t+1}=\arg\min_{z\in\Y}\bigl\{\eta\ip{g_t}{z}+D_R(z,z_t)\bigr\}$\;
}
\end{algorithm}

With an appropriate constant stepsize, the OMD base learner satisfies Assumption~\ref{ass:base_learner}. Combining that verification with Theorem~\ref{thm:uppd_reduction} and optimizing the stepsize yields the following regret guarantee.

\begin{theorem}[Hidden-Target OMD under UPPD]
\label{thm:md_oio_regret}
Assume Assumption~\ref{ass:convex_and_bounded} and UPPD (Assumption~\ref{ass:uppd}). Assume the OMD update returns points in $\mathcal D$. Fix $\delta\in(0,1)$, set $B_\delta:=\lceil\mu^{-1}\log(T/\delta)\rceil$, and write $G_*:=\sup_t\dnorm{g_t}$. Assume $\mathcal B_R:=\sup_{u\in\Y}D_R(u,z_1)<\infty$ for an initialization $z_1\in J_1\cap\mathcal D$ and $G_*>0$. Run HT-OMD (Algorithm~\ref{alg:ht_omd}) at the constant stepsize
\begin{equation}\label{eq:MD-lr}
\eta = \min\left\{\sqrt{\frac{\mathcal B_R\sigma}{G_*^2(1/2+B_\delta)T}},\;\frac{\rho\sigma}{B_\delta G_*}\right\}.
\end{equation}
Then with probability at least $1-\delta$, for every $u\in\Y$,
\begin{equation}
\label{eq:md_master_bound_constrained}
 R_T(u)\le \frac{\mathcal B_R}{\eta}+\frac{\eta T G_*^2}{\sigma}\Bigl(\tfrac12+B_\delta\Bigr).
\end{equation}
In particular, when \(T\) is sufficiently large such that the first term in the \(\min\) operator in \eqref{eq:MD-lr} is selected,
then 
\begin{equation}
\label{eq:md_master_rate}
R_T(u) \leq 2 G_*\sqrt{\frac{\mathcal{B}_RT(1/2+B_\delta)}{\sigma}} = \widetilde O\!\left(G_*\sqrt{\frac{\mathcal{B}_RT}{\sigma\mu}}\right).
\end{equation}
\end{theorem}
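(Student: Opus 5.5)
The plan is to check the two parts of Assumption~\ref{ass:base_learner} for the OMD iterates in the projection norm $\norm{\cdot}$, apply Theorem~\ref{thm:uppd_reduction}, and then tune $\eta$.

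\emph{Step 1: one-step movement.} The update~\eqref{eq:md_update} has the first-order optimality condition $\ip{\eta g_t+\nabla R(z_{t+1})-\nabla R(z_t)}{u-z_{t+1}}\ge 0$ for all $u\in\Y$; this is valid because $z_{t+1}\in\mathcal D$. Take $u=z_t$. Add the two strong-convexity inequalities $D_R(z_t,z_{t+1})\ge\frac{\sigma}{2}\norm{z_t-z_{t+1}}^2$ and $D_R(z_{t+1},z_t)\ge\frac{\sigma}{2}\norm{z_t-z_{t+1}}^2$; their sum is $\ip{\nabla R(z_t)-\nabla R(z_{t+1})}{z_t-z_{t+1}}$. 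This gives
\[
\sigma\norm{z_{t+1}-z_t}^2\le \eta\ip{g_t}{z_t-z_{t+1}}\le \eta G_*\norm{z_{t+1}-z_t}.
\]
Hence $\norm{z_{t+1}-z_t}\le \eta G_*/\sigma$. Over any window of length $B_\delta$ the total movement is at most $B_\delta\eta G_*/\sigma$, which is $\le\rho$ because $\eta\le \rho\sigma/(B_\delta G_*)$ by~\eqref{eq:MD-lr}. This verifies Assumption~\ref{ass:base_learner}(ii). It also gives $\tarmvment\le T\eta G_*/\sigma$.

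\emph{Step 2: linearized regret.} I will use the standard OMD three-point argument. The optimality condition at $u$ together with the three-point identity gives
\[
\eta\ip{g_t}{z_{t+1}-u}\le D_R(u,z_t)-D_R(u,z_{t+1})-D_R(z_{t+1},z_t).
\]
Then write $\eta\ip{g_t}{z_t-z_{t+1}}\le \eta G_*\norm{z_t-z_{t+1}}-\frac{\sigma}{2}\norm{z_t-z_{t+1}}^2\le \frac{\eta^2G_*^2}{2\sigma}$, using Young's inequality. Summing over $t$, telescoping, and using $D_R(u,z_1)\le\mathcal B_R$ gives
\[
\baseregret(u)\le \frac{\mathcal B_R}{\eta}+\frac{\eta TG_*^2}{2\sigma}.
\]
The initialization $z_1\in J_1$ is assumed.

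\emph{Step 3: combine.} Theorem~\ref{thm:uppd_reduction} gives, with probability $1-\delta$, simultaneously for all $u$,
\[
R_T(u)\le \frac{\mathcal B_R}{\eta}+\frac{\eta TG_*^2}{2\sigma}+G_*B_\delta\frac{T\eta G_*}{\sigma},
\]
which is exactly~\eqref{eq:md_master_bound_constrained}. When the first term of the $\min$ in~\eqref{eq:MD-lr} is active, substituting it balances the two terms of~\eqref{eq:md_master_bound_constrained}, each equal to $G_*\sqrt{\mathcal B_RT(1/2+B_\delta)/\sigma}$, so their sum is the factor-2 bound~\eqref{eq:md_master_rate}. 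Since $B_\delta=\widetilde O(1/\mu)$, this is the $\widetilde O\bigl(G_*\sqrt{\mathcal B_RT/(\sigma\mu)}\bigr)$ rate.

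\emph{Main obstacle.} The delicate step is Step 1, specifically obtaining the movement bound in the \emph{projection} norm with no norm conversion, as the norm alignment principle requires. This works only because $R$ is $\sigma$-strongly convex with respect to the same admissible $\norm{\cdot}$ and $G_*$ is measured in its dual norm. I also need to make sure the optimality condition is valid on the constrained set $\Y$ when $R$ is only differentiable on $\mathcal D$. That is exactly why the statement assumes the iterates stay in $\mathcal D$, so I would invoke that assumption explicitly at this step.
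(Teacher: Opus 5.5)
Your proposal is correct and follows essentially the same route as the paper: verify Assumption~\ref{ass:base_learner} for the OMD iterates via the three-point identity and Young's inequality, plug into Theorem~\ref{thm:uppd_reduction}, and balance the two terms. The one cosmetic difference is that you get $\norm{z_{t+1}-z_t}\le\eta G_*/\sigma$ from the symmetrized Bregman divergence plus optimality at $u=z_t$, whereas the paper plugs $u=z_t$ into the one-step inequality~\eqref{eq:md_one_step}; both give the same constant. One small fix in Step 2: the $-\frac{\sigma}{2}\norm{z_t-z_{t+1}}^2$ comes from the $-D_R(z_{t+1},z_t)$ term, so the displayed bound should read $\eta\ip{g_t}{z_t-z_{t+1}}-D_R(z_{t+1},z_t)\le\frac{\eta^2G_*^2}{2\sigma}$, not a bound on $\eta\ip{g_t}{z_t-z_{t+1}}$ alone.
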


Several familiar algorithms are recovered as instantiations of this constrained-stepsize bound \eqref{eq:md_master_rate}.

\paragraph{Specializations.}
With $R(x)=\tfrac{1}{2}\norm{x}_2^2$ paired with an $\ell_2$ norm giving $\sigma=1$ and $\mathcal{B}_R=\tfrac{1}{2}D_2^2$, the OMD update collapses to projected OGD. Then its regret bound \eqref{eq:md_master_rate} reproduces Theorem~\ref{thm:ht_ogd_convex_uppd}'s rate $\widetilde O(D_2 G_2\sqrt{T/\mu})$ with a constant stepsize rather than a decaying one. For $p\in(1,2]$, we may choose $R(x)=\tfrac{1}{2}\norm{x}_p^2$ and an $\ell_p$ norm giving $\sigma=p-1$ and $\mathcal{B}_R\le D_p^2/2$. Then one obtains the $p$-norm OMD algorithm achieving the rate $\widetilde O(D_p G_{p^*}\sqrt{T/((p-1)\mu)})$, where $p^*:=p/(p-1)$.

\paragraph{Entropic OMD on bounded $\R_+^n$.}
A qualitatively different instantiation uses the negative-entropy regularizer $R(x):=\sum_{i=1}^n x_i\log x_i$ \citep{guo2026online}. Choosing the projection norm as $\ell_1$ and writing $M:=\sup_{x\in\Y}\norm{x}_1$ for the outer $\ell_1$ radius of $\Y$, the regularizer $R$ is $(1/M)$-strongly convex with respect to $\ell_1$ on the restricted positive domain $\{x\in\R_+^n:\norm{x}_1\le M\}$, with the usual extended-value interpretation at the boundary. For the linear capacity constraint $\Y=\{x:\norm{x}_1\le C\}$, we naturally have $M=C$, and the same analysis applies on any general convex set $\Y$ that contains the strictly positive initialization used below.

\begin{theorem}[Entropic OMD under UPPD]
\label{thm:ht_entropic_md}Assume Assumption~\ref{ass:convex_and_bounded} and UPPD (Assumption~\ref{ass:uppd}). 
Use $\ell_1$ as the projection norm and fix $\delta\in(0,1)$. Write $B_\delta:=\lceil\mu^{-1}\log(T/\delta)\rceil$, $M:=\sup_{x\in\Y}\norm{x}_1$, and $G_\infty:=\sup_t\norm{g_t}_\infty$. Assume $M,G_\infty>0$ and that the uniform vector $z_1:=(M/n,\dots,M/n)\in \mathcal{Y}\cap\R_{++}^n$. Run HT-OMD (Algorithm~\ref{alg:ht_omd}) with negative-entropy regularizer $R(x):=\sum_{i=1}^n x_i\log x_i$ from the initialization $z_1$ (the regularizer keeps all iterates in $\mathcal{D}$ with $\mathcal{D}=\Y\cap\R_{++}^n$, as verified in Appendix~\ref{app:entropic_md}). Choose
\begin{equation}\label{eq:MD-lr2}
\eta=\min\left\{\frac{\sqrt{1+\log n}}{G_\infty\sqrt{(1/2+B_\delta)T}},\;\frac{\rho}{M B_\delta G_\infty}\right\}.
\end{equation}
Then with probability at least $1-\delta$, for every $u\in\Y$,
\begin{equation}
\label{eq:entropic_constrained_bound}
R_T(u)\le \frac{M(1+\log n)}{\eta}+\eta M G_\infty^2T\Bigl(\tfrac12+B_\delta\Bigr).
\end{equation}
In particular, when \(T\) is sufficiently large such that the first term in the \(\min\) operator in \eqref{eq:MD-lr2} is selected, then
\[
 R_T(u) \le 2 M G_\infty\sqrt{T(1+\log n)(1/2+B_\delta)} = \widetilde O\!\left(MG_\infty\sqrt{\frac{T\log n}{\mu}}\right).
\]
\end{theorem}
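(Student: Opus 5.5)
The plan is to obtain Theorem~\ref{thm:ht_entropic_md} as a direct specialization of Theorem~\ref{thm:md_oio_regret}. The projection norm is $\norm{\cdot}_1$, which is admissible by Example~\ref{ex:admissible_norms}. Its dual norm is $\norm{\cdot}_\infty$, so $G_*=G_\infty$. The regularizer is the negative entropy $R(x)=\sum_i x_i\log x_i$. Three ingredients need to be checked: the strong convexity modulus $\sigma=1/M$ in the Bregman sense on $\Y$, the Bregman diameter bound $\mathcal B_R\le M(1+\log n)$ from the uniform initialization, and the fact that the iterates stay in $\mathcal D=\Y\cap\R_{++}^n$. Once these are in place, substituting into \eqref{eq:MD-lr} and \eqref{eq:md_master_bound_constrained} gives the result.

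\textbf{Strong convexity.} For $u\in\Y$ and $v\in\mathcal D$, write $D_R(u,v)=\sum_i\bigl(u_i\log(u_i/v_i)-u_i+v_i\bigr)$, the generalized KL divergence. Let $a=\norm{u}_1$ and $b=\norm{v}_1$, both at most $M$. I would use the unnormalized Pinsker inequality
\[
\sum_i\Bigl(u_i\log\frac{u_i}{v_i}-u_i+v_i\Bigr)\ge \frac{\norm{u-v}_1^2}{2\max\{a,b\}}.
\]
To prove it, first reduce to the case $a=b$ by the log-sum inequality. Pinsker on the normalized vectors gives the main term, and the mass mismatch contributes $a\log(a/b)-a+b\ge (a-b)^2/(2\max\{a,b\})$. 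A cleaner alternative is the Hessian argument: along the segment from $v$ to $u$ the Hessian is $\mathrm{diag}(1/x_i)$, and Cauchy--Schwarz gives $h^\top\mathrm{diag}(1/x)h\ge \norm{h}_1^2/\norm{x}_1\ge\norm{h}_1^2/M$. This segment stays in $\{x\succeq 0:\norm{x}_1\le M\}$ by convexity of that set, not of $\Y$. Integrating the Taylor remainder yields $D_R(u,v)\ge \norm{u-v}_1^2/(2M)$, which is $\sigma=1/M$.

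\textbf{Bregman diameter.} With $z_1=(M/n)\1$, we have $D_R(u,z_1)=\sum_i u_i\log(nu_i/M)-\norm{u}_1+M$. Write $s=\norm{u}_1\le M$. Since $\sum_i (u_i/s)\log(u_i/s)\le 0$, this is at most $s\log(ns/M)-s+M\le s\log n+M\le M(1+\log n)$.

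\textbf{Interiority.} The iterates remain in $\R_{++}^n$ because the entropy's gradient blows up at the boundary. The minimizer of $\eta\ip{g}{z}+D_R(z,z_t)$ over $\Y$ cannot put any coordinate at zero whenever $\Y$ contains a strictly positive point, here $z_1$. Moving from a boundary point slightly toward $z_1$ decreases the objective, because the directional derivative of $x\log x$ at $0$ is $-\infty$. This is the extended-value caveat mentioned in the paper, and I would state it in a short appendix-style lemma.

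\textbf{Substitution.} With $\sigma=1/M$, $\mathcal B_R\le M(1+\log n)$, and $G_*=G_\infty$, the stepsize \eqref{eq:MD-lr} becomes
\[
\min\Bigl\{\sqrt{(1+\log n)/(G_\infty^2(1/2+B_\delta)T)},\;\rho/(MB_\delta G_\infty)\Bigr\},
\]
which matches \eqref{eq:MD-lr2}. The bound \eqref{eq:md_master_bound_constrained} becomes \eqref{eq:entropic_constrained_bound}. Using $\mathcal B_R$ through its upper bound is harmless, since the proof of Theorem~\ref{thm:md_oio_regret} only uses it as an upper bound on $D_R(u,z_1)$. When the first term of the min is active, evaluating gives $2MG_\infty\sqrt{T(1+\log n)(1/2+B_\delta)}$.

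The main obstacle is the strong convexity step with unnormalized masses, because standard Pinsker is for probability vectors. I would rely on the Hessian/Cauchy--Schwarz route, which handles varying mass directly.
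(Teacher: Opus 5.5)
Your proposal is correct and follows the paper's proof essentially step for step. Like the paper, you specialize Theorem~\ref{thm:md_oio_regret} with the $\ell_1/\ell_\infty$ pair and obtain $\sigma=1/M$ from the Hessian $\mathrm{diag}(1/x_i)$ together with Cauchy--Schwarz. You bound $D_R(u,z_1)\le M(1+\log n)$ by splitting off the entropy of $u/\norm{u}_1$, and you rule out zero coordinates by perturbing toward a strictly positive feasible point. The differences are cosmetic: you handle boundary comparators via the integrated Taylor remainder rather than lower semicontinuity, perturb toward $z_1$ rather than $z_t$, and bound $s\log(ns/M)-s+M$ linearly rather than by convexity. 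You should also state explicitly, as the paper does, that a minimizer of the OMD objective exists by compactness and that $z_1\in J_1$ because $x_1=\0$.
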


\paragraph{When does OMD outperform OGD?}
The general HT-OMD rate $\widetilde O\!\bigl(\sqrt{\mathcal{B}_R/\sigma}\,G_*\sqrt{T/\mu}\bigr)$ should be read as a geometry-sensitive refinement of the $\ell_2$-based HT-OGD bound in Theorem~\ref{thm:ht_ogd_convex_uppd}. The $\sqrt{T/\mu}$ dependence is the same for all regularizers; the algorithmic constant is the product of an \emph{effective Bregman diameter} $\sqrt{\mathcal{B}_R/\sigma}$ and the dual gradient bound $G_*$. For $\ell_2$-OGD this product is $D_2G_2/\sqrt 2$, while for entropic OMD it is $M\sqrt{1+\log n}\,G_\infty$. The reason that entropic OMD can improve over $\ell_2$-OGD is dual-norm sensitivity: since $G_\infty\le G_2\le\sqrt n\,G_\infty$, replacing an $\ell_2$ gradient bound by an $\ell_\infty$ bound can save as much as $\sqrt n$ when gradients are spread across many coordinates. 

For the linear warehouse constraint $\Y=\{x\in\R_+^n:\norm{x}_1\le C\}$, the effective Bregman diameters are comparable: $M=C$, $D_2=C\sqrt 2$, and entropic OMD pays an additional $\sqrt{\log n}$ KL-diameter factor --- this overhead is small relative to the possible dual-norm gain. If gradients are coordinate-dense, $G_2\approx\sqrt n\,G_\infty$, entropic OMD improves the $\ell_2$-OGD rate by a factor on the order of $\sqrt{n/\log n}$. If gradients are concentrated on a few coordinates, $G_\infty\approx G_2$, $\ell_2$-OGD is sharper by at most the mild $\sqrt{\log n}$ factor in this geometry. Hence, entropic OMD can be viewed as more suitable than $\ell_2$-OGD for the linear warehouse constraint, improving on \cite{ichikawa2026nonstationary}, since negative entropy is the natural regularizer on the simplex. The improvement extends the insights of \cite{guo2026online} to the adversarial OIO setting.

For general convex capacity sets, the same principle applies with the appropriate regularizer: HT-OMD lets the bound follow the geometry of $\Y$ and the dual norm in which gradients are small, rather than forcing every instance into Euclidean constants.

\section{Experiments}
\label{sec:experiments}

\begin{figure}[t]
    \centering
    \begin{subfigure}[b]{0.32\linewidth}
        \centering
        \FigureGraphic[width=\linewidth]{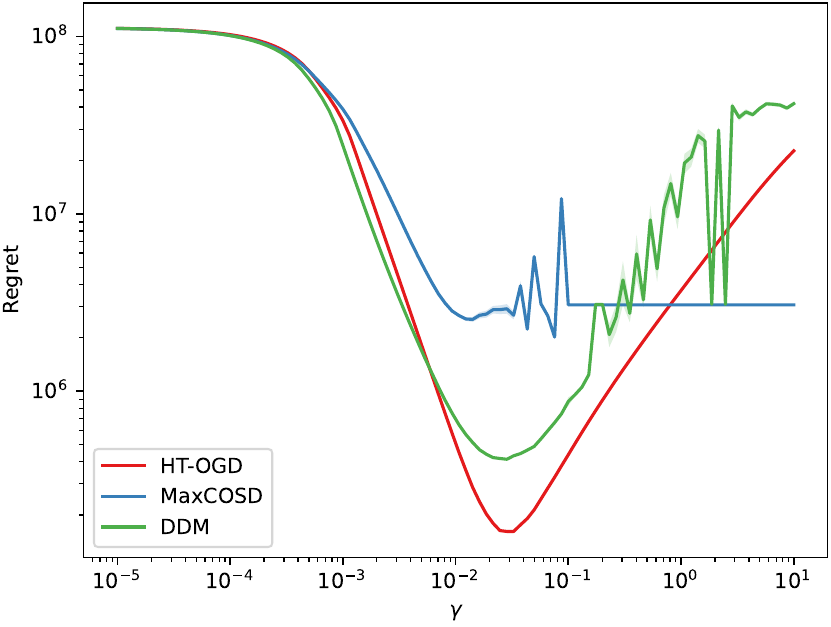}
        \caption{Learning-rate sweep.}
        \label{fig:l1_loss_lr_curve}
    \end{subfigure}
    \hfill
    \begin{subfigure}[b]{0.32\linewidth}
        \centering
        \FigureGraphic[width=\linewidth]{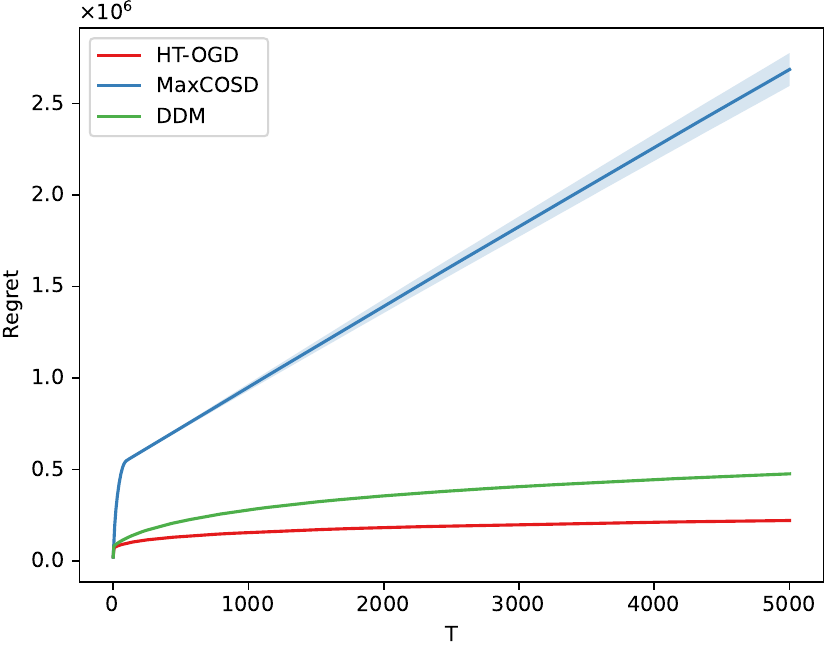}
        \caption{Regret over time.}
        \label{fig:l1_loss_t_curve}
    \end{subfigure}
    \hfill
    \begin{subfigure}[b]{0.32\linewidth}
        \centering
        \FigureGraphic[width=\linewidth]{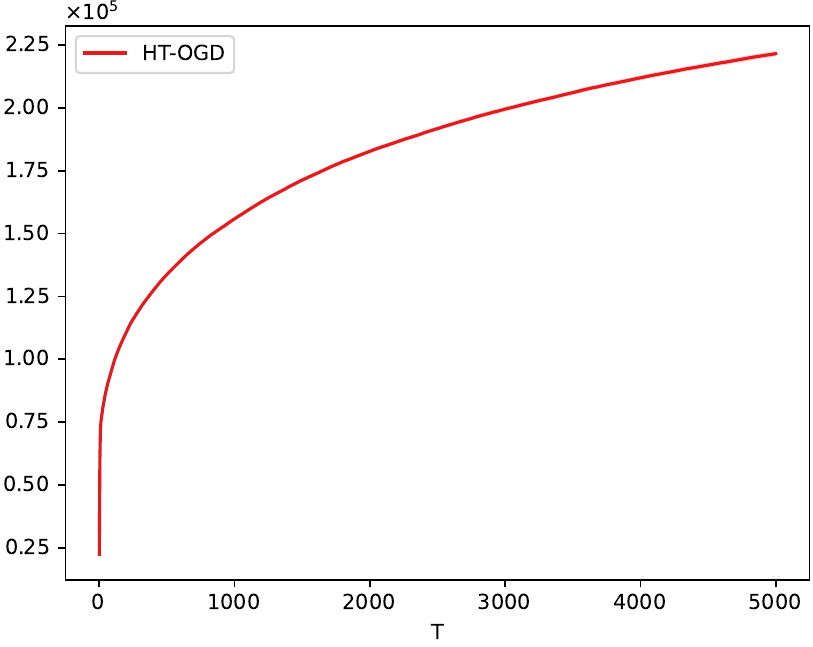}
        \caption{HT-OGD only.}
        \label{fig:l1_loss_t_ht_ogd_curve}
    \end{subfigure}
    \caption{Static regret on the linear warehouse capacity $\Y_C=\{y\in\R_+^n:\norm{y}_1\le C\}$ (Setting 3 of \citet{hihat2023online}). Panel (a) reports regret at $T=5\times10^3$ over logarithmically spaced learning-rate multipliers. Panel (b) uses the best multiplier for each algorithm and plots regret over time. Panel (c) isolates HT-OGD on its own axes since its low-magnitude trajectory can be hard to read in the previous panel.}
  \label{fig:l1_loss}
\end{figure}

The experiments compare the three algorithmic principles available in the literature. MaxCOSD waits for a new proposed target to become feasible before committing to it~\citep{hihat2023online}. DDM is a projected base-stock method designed for stochastic multiproduct systems with a linear warehouse constraint that only updates its target when unbiased gradient estimates become available~\citep{shi_nonparametric_nodate}. HT-OGD uses the simple every-round hidden-target projection principle. Since DDM was designed for a single linear warehouse constraint, most experiments use the common testing environment $\Y_C=\{y\in\R_+^n:\norm{y}_1\le C\}$ with lost-sales dynamics and newsvendor loss. However, we also test with non-linear capacity constraints to demonstrate the generalizability of our results. The closest hidden-target OIO experiments in \citet{ichikawa2026nonstationary} are single-item, whereas our algorithmic comparisons are multiproduct. In experiments where we compare different stepsizes for HT-OGD suggested by our theoretical results in Section~\ref{sec:instantiations}, we use a single product setting to establish a baseline for these new experimental settings. In this case, the linear feasible set $\Y_C$ reduces to an interval and we choose sufficiently large $C$ to be non-binding. All experiments show an error envelope of $\pm 2$ standard error, though this is sufficiently small in some plots to be difficult to observe.

Unless stated otherwise, $h_i=1$ and $p_i=200$ in newsvendor loss. We tune a scalar learning-rate multiplier $\gamma$ for each method, compute static comparators offline by optimizing over the realized demand sequence, and plot $\pm2$ standard-error envelopes over repeated runs. Appendix~\ref{app:experiment_details} gives the full parameter choices.

Figure~\ref{fig:l1_loss} reproduces Setting 3 of \citet{hihat2023online}: $n=100$ products, independent Poisson demands with rates drawn from $\mathrm{Uniform}[1,2]$, and capacity $C=175$. HT-OGD attains lower regret at its best tuning and is less sensitive to the learning-rate multiplier. In the time-series panel, MaxCOSD accumulates regret much faster on the plotted horizon, DDM behaves sublinearly, and HT-OGD remains the most stable. We additionally isolate the time-series regret plot for HT-OGD to see that its regret is still sublinear rather than flat, but at a significantly smaller scale than the algorithms. This stochastic instance is not meant to be a worst-case construction, it is a common benchmark showing that the hidden-target projection principle also behaves competitively in the capacitated multiproduct regime where cycle methods were originally tested.

\begin{figure}[t]
    \centering
    \begin{subfigure}[b]{0.32\linewidth}
        \centering
        \FigureGraphic[width=\linewidth]{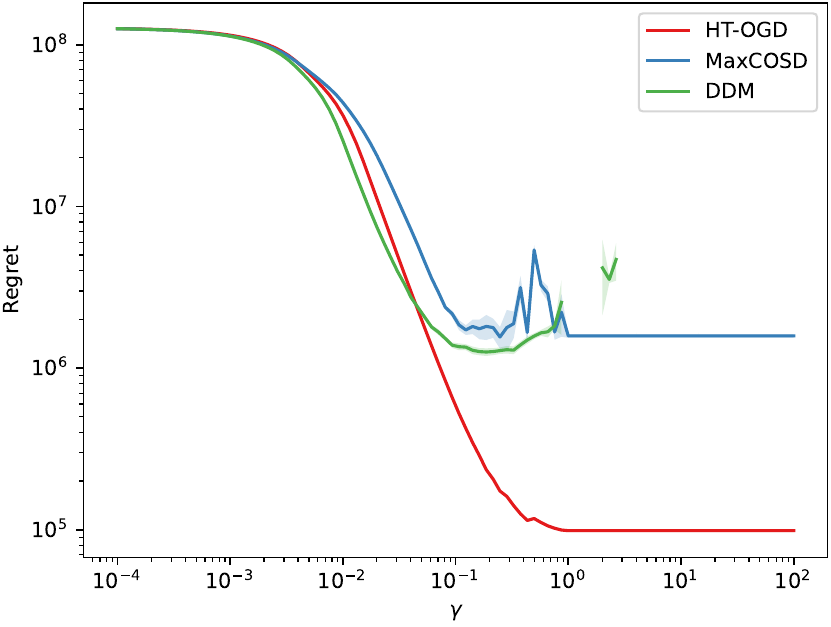}
        \caption{Learning-rate sweep.}
        \label{fig:l2_loss_lr_curve}
    \end{subfigure}
    \hfill
    \begin{subfigure}[b]{0.32\linewidth}
        \centering
        \FigureGraphic[width=\linewidth]{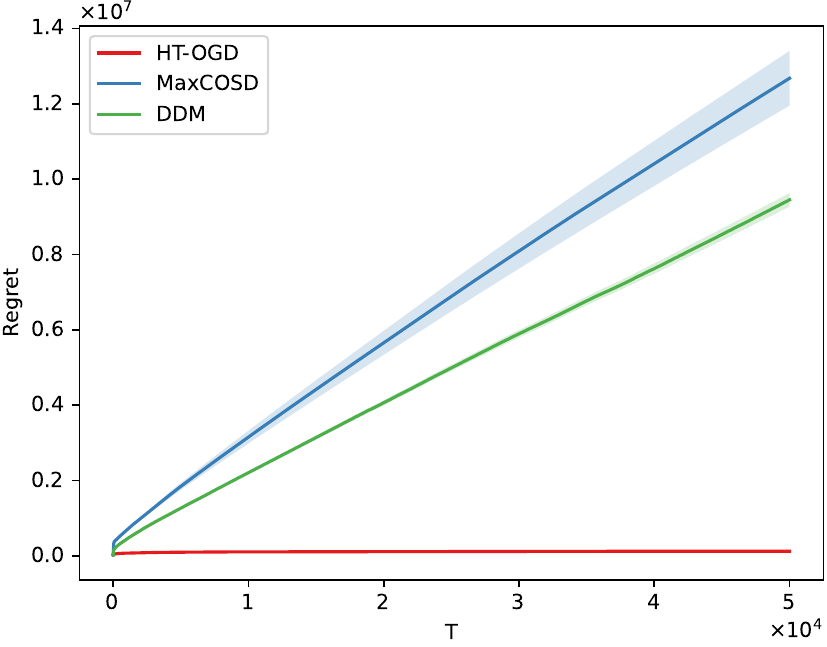}
        \caption{Regret over time.}
        \label{fig:l2_loss_t_curve}
    \end{subfigure}
    \hfill
    \begin{subfigure}[b]{0.32\linewidth}
        \centering
        \FigureGraphic[width=\linewidth]{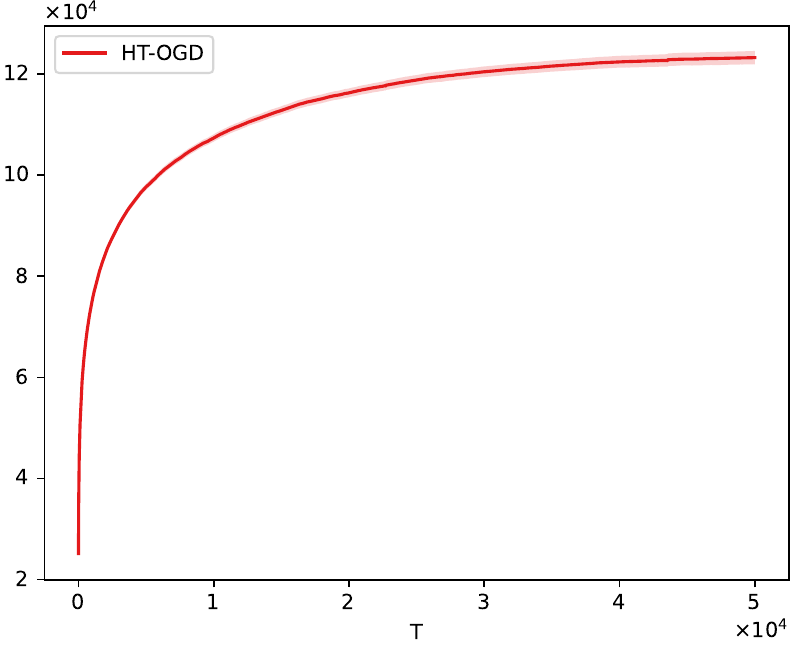}
        \caption{HT-OGD only.}
        \label{fig:l2_loss_t_ht_ogd_curve}
    \end{subfigure}
    \caption{Static regret on the same setting as Figure~\ref{fig:l1_loss}, but with a curved capacity set $\Y=\{y\in\R_+^n:\norm{y}_2\le R\}$. Panels (a)--(c) follow the same convention as in Figure~\ref{fig:l1_loss}.}
  \label{fig:l2_loss}
\end{figure}

Figure~\ref{fig:l2_loss} repeats the experiment on the curved capacity set $\Y=\{y\in\R_+^n:\norm{y}_2\le R\}$, a positive $\ell_2$ ball. While MaxCOSD and HT-OGD both have guarantees in this setting, the productwise sell-out arguments underlying DDM~\citep{shi_nonparametric_nodate} require the linear structure of a warehouse polytope and so DDM has no formal guarantees in this setting. Despite this, we find the same pattern as in the linear capacity setting: the performance of each algorithm ranks in the order of HT-OGD, DDM, then MaxCOSD. This demonstrates HT-OGD's efficacy is not limited to linear capacity sets and validates our theoretical findings.

\begin{figure}[t]
    \centering
    \begin{subfigure}[b]{0.32\linewidth}
        \centering
        \FigureGraphic[width=\linewidth]{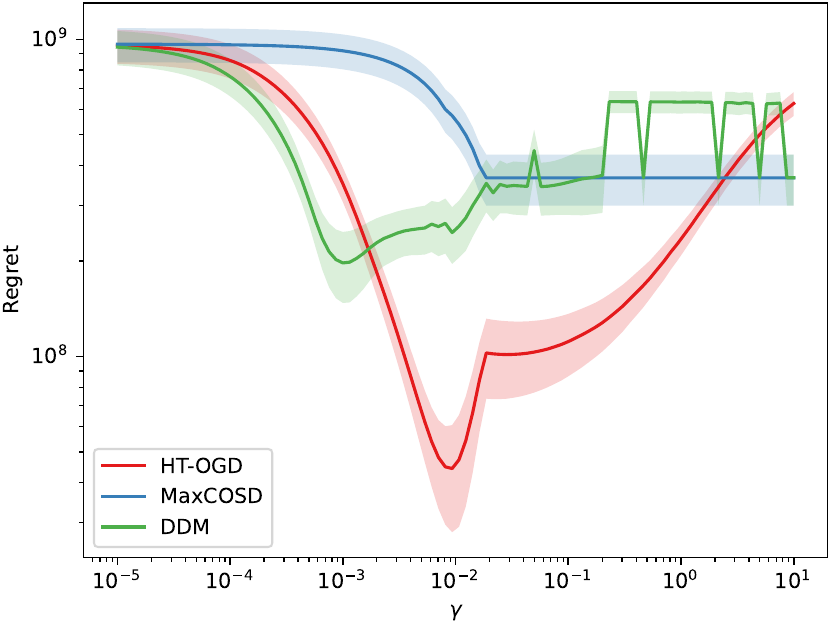}
        \caption{Learning-rate sweep.}
        \label{fig:m5_loss_lr_curve}
    \end{subfigure}
    \hfill
    \begin{subfigure}[b]{0.32\linewidth}
        \centering
        \FigureGraphic[width=\linewidth]{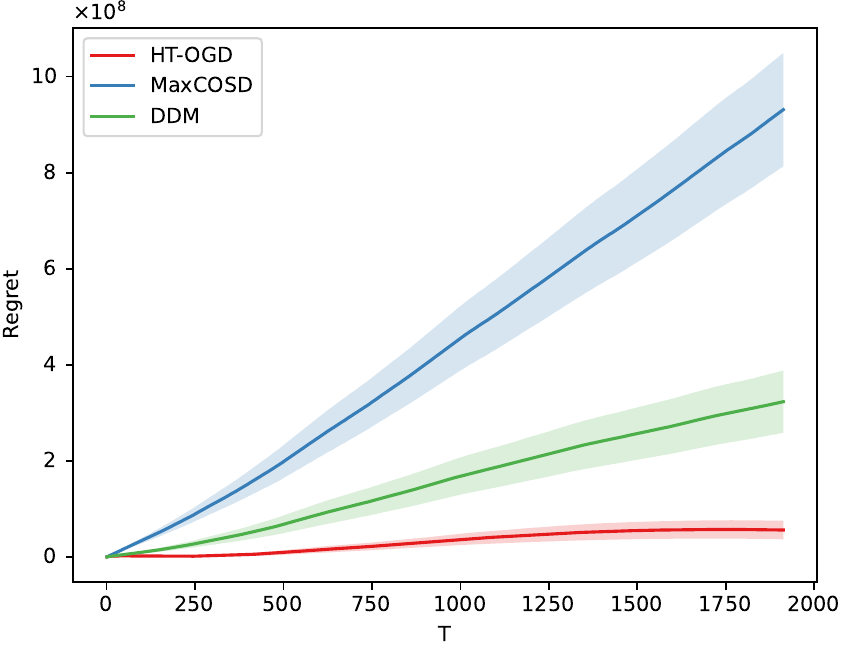}
        \caption{Regret over time.}
        \label{fig:m5_loss_t_curve}
    \end{subfigure}
    \hfill
    \begin{subfigure}[b]{0.32\linewidth}
        \centering
        \FigureGraphic[width=\linewidth]{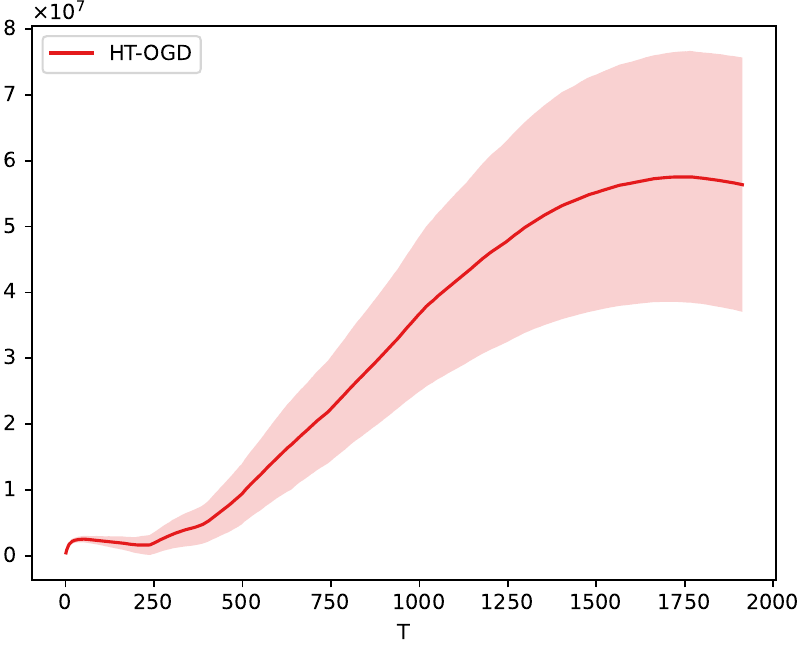}
        \caption{HT-OGD only.}
        \label{fig:m5_loss_t_ht_ogd_curve}
    \end{subfigure}
    \caption{Static regret on demand traces from the M5 forecasting competition, a real-world retail benchmark with $n=3049$ items. The dataset contains realizations from 10 stores which are treated as batches, resulting in larger error envelopes than other experiments. Panels (a)--(c) follow the convention of Figure~\ref{fig:l1_loss}.}
  \label{fig:m5_loss}
\end{figure}

Figure~\ref{fig:m5_loss} replaces the synthetic Poisson stream with demand traces from the M5 forecasting competition~\citep{makridakis_m5_2022}, a retail benchmark containing real demand traces from 10 Walmart stores with $n=3049$ items over a period of $T=1913$ days while using a linear capacity constraint. This probes behavior under non-stationary, non-i.i.d.\ demand and demonstrates the same story as the previous figures. Even without i.i.d. demand, HT-OGD achieves both improved theoretical guarantees and empirical performance.

The remaining experiments probe different aspects of our theoretical results. Figure~\ref{fig:dim_scaling_regret} uses a power-law demand experiment to test explicit dimension dependence while keeping aggregate demand and loss scale comparable. While regret may still grow based on other problem parameters like $D$ and $G$, the plot demonstrates the dimensional-scaling advantage our analysis predicts. The strongly convex loss experiment in Figure~\ref{fig:strong_convex_regret} distinguishes the performance of HT-OGD under the $1/t$ schedule from the general convex $1/\sqrt t$ schedule. The separation is consistent with our predicted polylogarithmic and sublinear regret rates for each corresponding stepsize. Lastly, the dynamic regret experiment in Figure~\ref{fig:comparator_adapted_dynamic_regret} shows the advantage of using a known-variation constant stepsize for HT-OGD when the comparator moves with $P_{T,2}=O(\sqrt T)$. The plot shows that this adapted stepsize is consistent with the $O(\sqrt{P_{T,2}T}) = O(T^{3/4})$ dynamic regret predicted by Theorem~\ref{thm:ht_ogd_dynamic_uppd}. 

\begin{figure}[t]
    \centering
    \begin{subfigure}[b]{0.32\linewidth}
        \centering
        \FigureGraphic[width=\linewidth]{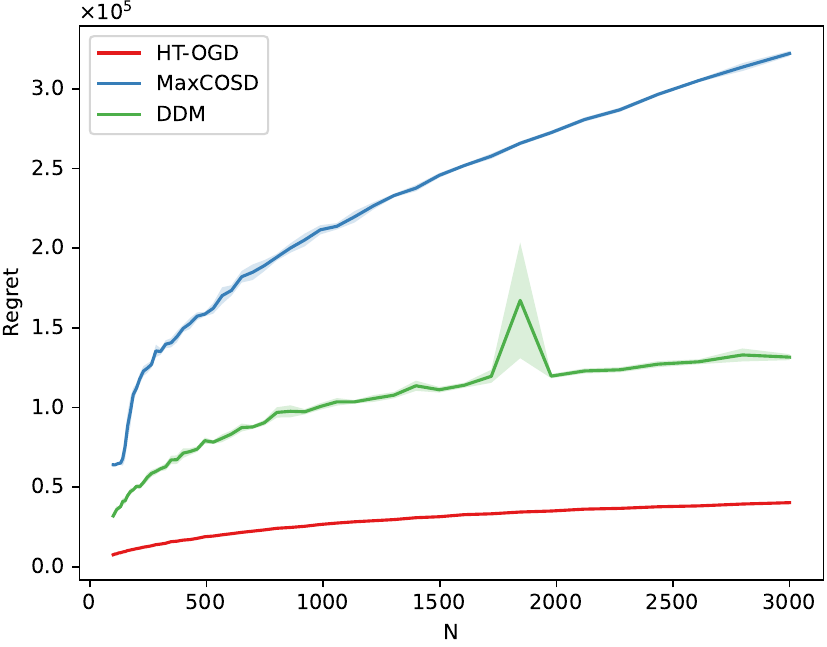}
        \caption{Dimension scaling.}
        \label{fig:dim_scaling_regret}
    \end{subfigure}
    \hfill
    \begin{subfigure}[b]{0.32\linewidth}
        \centering
        \FigureGraphic[width=\linewidth]{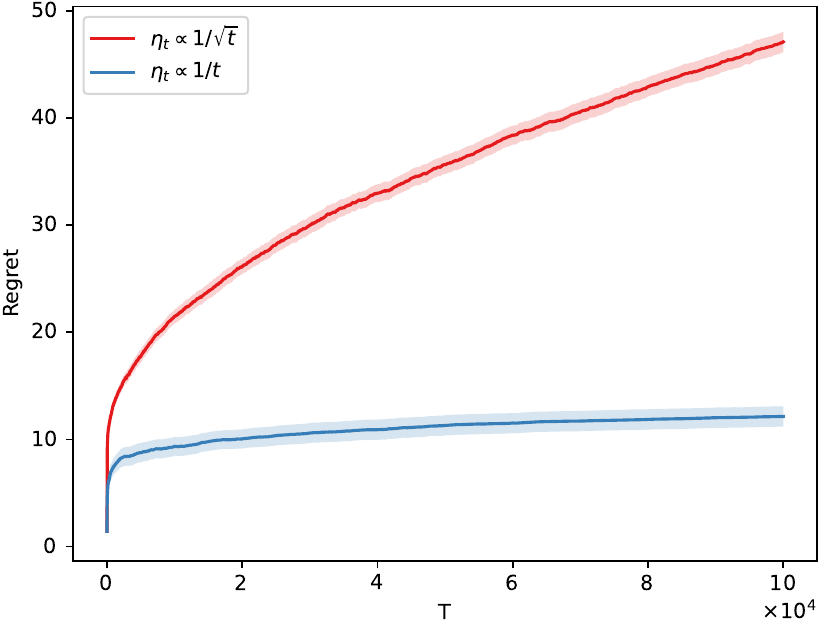}
        \caption{Strong convexity.}
        \label{fig:strong_convex_regret}
    \end{subfigure}
    \hfill
    \begin{subfigure}[b]{0.32\linewidth}
        \centering
        \FigureGraphic[width=\linewidth]{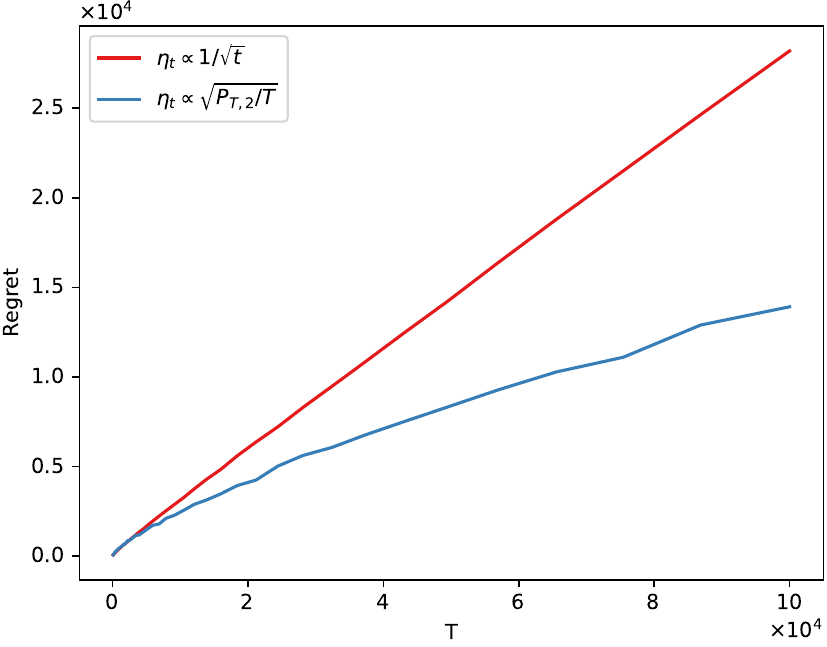}
        \caption{Dynamic regret.}
        \label{fig:comparator_adapted_dynamic_regret}
    \end{subfigure}
    \caption{Additional experiments. (a) Static regret as the number of products grows under power-law demand $d_{t,i} \propto i^{-1.1}$. (b) Regret performance for quadratic strongly convex loss, comparing $\eta_t\propto1/\sqrt t$ and $\eta_t\propto1/t$. (c) Dynamic regret for blockwise-constant demand with $P_{T,2}=O(\sqrt T)$, comparing the static schedule with the known-variation constant stepsize.}
  \label{fig:other_experiments}
\end{figure}

\section{Conclusion}\label{sec:conclusion}

This paper identifies the right geometric state variable for general-convex OIO: the norm-aligned distance from the hidden target to the implementable set. With norm alignment, this distance evolves as a queue with target movement as arrival and common demand as service, reducing general-convex OIO to OCO with switching cost. Under our analysis, the state-dependent feasibility cost which forced prior work into product-by-product accounting under linear capacity or into adaptive cycle control becomes an ordinary switching-cost penalty on the base learner's target movement.

Concretely, applying the reduction with a projected OGD base learner yields the optimal $\widetilde O (\sqrt{T/\mu})$ static regret on arbitrary bounded convex capacity sets, matched by a new lower bound under UPPD. The same reduction with a constant stepsize OGD base learner gives polylogarithmic regret under strongly convex losses, resolving an open direction raised by \citet{hihat2023online}. With an SOGD base learner, it gives Euclidean-path-variation dynamic regret on general convex sets, sharpening \citet{ichikawa2026nonstationary}'s $\ell_1$-path-variation bound. All three guarantees remove the explicit $n^{1/4}$ dimension factor that appeared in prior $\ell_1$-based analyses.

Beyond the regret bounds themselves, the norm alignment principle decouples the inventory analysis from the learning algorithm. Any base learner producing hidden targets with controlled movement in the aligned norm plugs into the same reduction, which we demonstrate concretely with OGD, SOGD, and mirror descent. The inventory layer never inspects how targets are generated, and the base learner never reasons about sell-out events or feasible sets. We hope this modular template proves useful for other online problems where state-dependent feasibility constraints have previously demanded problem-specific treatments.

This work suggests several directions for future research. The most immediate is to move beyond UPPD. In large-scale systems with thousands of products, the probability that the common demand exceeds a positive threshold becomes increasingly small for even simple demand sequences. A local clearing condition, requiring demand only where the implemented action exceeds the hidden target, would better match sparse demand while preserving the separation between learning and feasibility. Another direction is to develop adaptive guarantees that do not require advance knowledge of problem-dependent parameters such as $\mu$ or $\rho$. Our instantiations use these parameters in the choice of stepsize or clearing window that is passed to the base learner. The modular queue analysis is naturally compatible with parameter-free OCO methods \citep{orabona2019modern}, and recovering the same rates without offline tuning would make the framework more powerful.

\bibliography{arxiv_citations}

\clearpage
\appendix

\addtocontents{toc}{\protect\setcounter{tocdepth}{2}}
\renewcommand{\contentsname}{Appendix Contents}
\tableofcontents
\clearpage

\section{Detailed experiment setup}
\label{app:experiment_details}

Unless stated otherwise, experiments use the linear warehouse set $\Y_C=\{y\in\R_+^n:\|y\|_1\le C\}$ so that HT-OGD, MaxCOSD, and DDM can be compared on the same feasible region. The L2 capacity experiment (Figure~\ref{fig:l2_loss}) replaces $\Y_C$ with the positive $\ell_2$ ball $\{y\in\R_+^n:\|y\|_2\le R\}$, on which DDM has no formal guarantees. The inventory dynamics are lost sales, $x_{t+1}=(y_t-d_t)^+$, and the basic loss is the newsvendor loss \eqref{eq:newsvendor_loss}. A scalar multiplier $\gamma$ rescales the nominal learning rate in each method. For the convex static experiments, HT-OGD and DDM use $\eta_t\propto 1/\sqrt t$, while MaxCOSD uses the adaptive stepsize prescribed by \citet{hihat2023online}. Static regret is computed against an offline fixed comparator obtained by projected OGD on the realized demand sequence until convergence. When applicable, tests are rerun to reduce variance from stochastic demand generation and we display $\pm 2$ standard error bars in the plots.

All experimental results were computed on an Apple Mac Studio using a M4 Max chip with 48GB of memory. Each experiment finished execution in under 2 hours.

\paragraph{Warehouse capacity ($\ell_1$) setting.} For Figure~\ref{fig:l1_loss}, we use Setting 3 of \citet{hihat2023online}: $n=100$, independent demands $d_{t,i}\sim\mathrm{Poisson}(\lambda_i)$, independent rates $\lambda_i\sim\mathrm{Uniform}[1,2]$, and capacity $C=175$. While \citet{hihat2023online} did not report the value of $C$, we chose the value so that most demand realizations ($\approx97.5\%$) would fall within the capacity set. The learning-rate sweep evaluates at horizon $T=5\times 10^3$ and logarithmically spaced multipliers $\gamma\in[10^{-5},10]$ evaluating over 200 reruns. The time-series panel plots the regret up to the same horizon, using the best multiplier for each algorithm from the sweep.

\paragraph{L2 capacity setting.} The environment in Figure~\ref{fig:l2_loss} uses the same number of products, horizon, and demand generation as in the warehouse capacity setting, but replaces the linear capacity set with a positive $\ell_2$ ball $\Y=\{y\in\R_+^n:\|y\|_2\le R\}$ with $R = 22.5$, also yielding most demand realizations falling in the capacity set as in the linear warehouse experiment. The learning-rate sweep evaluates over logarithmically spaced multipliers $\gamma\in[10^{-4}, 10^2]$ across 20 reruns. The time-series panel uses the best multiplier for each algorithm from the sweep.

\paragraph{M5 forecasting traces.} For Figure~\ref{fig:m5_loss}, the demand traces are taken from the M5 forecasting competition dataset~\citep{makridakis_m5_2022}. This dataset consists of $n=3049$ products over a period of $T=1913$ days from 10 different Walmart stores in the states of California, Texas, and Wisconsin. While the dataset contains contextual information such as the product category, price, and day of the week, we only use the demand realizations in our experiments. We use a linear warehouse capacity set with capacity $C=6600$, which causes most demand realizations to fall within the capacity set as in the previous experiments. We treat the 10 different stores as independent demand traces, meaning that the results are averaged over the simulations from each store. The learning-rate sweep uses logarithmically spaced multipliers $\gamma\in [10^{-5}, 10]$.

\paragraph{Dimension scaling.} Demands are computed as $d_{t,i}=i^{-\alpha}u_{t,i}$ with $\alpha=1.1$ and independent $u_{t,i}\sim\mathrm{Uniform}[0,1]$. Since $\alpha>1$, the total demand magnitude remains comparable as $n$ grows. This experiment is a stress test for algorithmic dimension scaling rather than a direct verification of UPPD, because common clearing can become rare in sparse high-dimensional systems. At all dimensions, we use a capacity of $C = 2.75$. The plot evaluates at logarithmically spaced number of products $n \in [100, 3000]$ with 10 reruns.

\paragraph{Strong convexity.} For the strongly convex experiment, we use a single-product quadratic loss $\ell_t(y)=\frac12(y-d_t)^2$ with i.i.d. Poisson($\lambda$) demand with $\lambda=1.5$. The capacity $C=20$ is chosen to be sufficiently large to be non-binding. We compare HT-OGD with the general convex schedule $\eta_t\propto 1/\sqrt t$ and the strongly convex schedule $\eta_t\propto 1/t$, tuning the scalar multiplier on a shorter horizon of $T=10^4$ before evaluating at the plotted horizon up to $T=10^5$ over 20 reruns.

\paragraph{Dynamic regret.} In the dynamic-regret panel, the single-product demand sequence is a sequence of constant blocks, with block length $\lceil\sqrt T\rceil$ and demand values cycling through the list $\{2,3,4\}$. Under lost-sales, the comparator $u_t=d_t$ is optimal since it is feasible and has zero newsvendor loss, yielding a path variation of $P_{T,2}=O(\sqrt T)$. We compare to the static-regret schedule $\eta_t\propto1/\sqrt t$ with the constant stepsize suggested by Theorem~\ref{thm:ht_ogd_dynamic_uppd}. Note that since the constant stepsize depends on $T$, we rerun the algorithm at many different values of $T$ rather than plotting the cumulative regret from one run with large $T$. The underage cost is set to $p=2$ to avoid a degenerate myopic ordering rule. Since this is a deterministic demand sequence and the algorithm is also deterministic, there is no variance in the results so we do not display error bars.

\section{Why productwise dynamics fail on general convex sets}
\label{app:geometry}

Prior hidden-target analyses in inventory learning track an $\ell_1$ target-implementation gap, which differs from the norm-aligned one presented in this paper: both \citet{shi_nonparametric_nodate,ichikawa2026nonstationary} focus on $\norm{y_t-z_t}_1$ under Euclidean projection, rather than the norm-aligned $\norm{y_t-z_t}_2$ (see Section~\ref{sec:queue_reduction}). \citet{shi_nonparametric_nodate} derived a queue recursion on $\norm{y_t-z_t}_1$ by a coordinatewise update on $|y_t^{(i)} - z_t^{(i)}|$ (their Lemmas~4--6), and \citet{ichikawa2026nonstationary} conducted a per-product cycle analysis (their Lemmas~5--6) whose telescoping bound critically depends on the linear warehouse constraint $\sum_i z_t^{(i)} \le D$. Both strategies decompose the gap by coordinate and control each coordinate's evolution locally.

This appendix explains why the queue recursion in
general-convex OIO must be geometric rather than productwise. In Section~\ref{sec:queue_reduction},
the regret decomposition charges the target-implementation gap
\[
  q_t := \norm{z_t-y_t},
  \qquad
  \tarimpgap=\sum_{t=1}^T q_t .
\]
Thus a queue proof must show that common demand serves this distance.  A
productwise sell-out proof has a different native state variable.  For a target
$z\in\Y$ and inventory state $x$, define the coordinatewise feasibility
violation
\[
  \norm{(x-z)^+}_1 .
\]
This is the quantity on which the inventory dynamics acts directly: demand
reduces the coordinates where the current state exceeds the target, while
target movement can create new violation.

Prior hidden-target analyses \citep{shi_nonparametric_nodate,ichikawa2026nonstationary} under a single linear warehouse constraint effectively
bridge these two quantities by productwise accounting.  They aim to control the
$\ell_1$ implementation gap
\[
  q_t^{(1)}:=\norm{z_t-y_t}_1,
\]
but the local sell-out dynamics controls productwise feasibility violations $\norm{(x_t-z_t)^+}_1$.
For a linear warehouse set, the budget identity $\sum_i y_i\le D$ supplies a
conservation law: mass added to some coordinates must be balanced by mass
removed from others.  This conservation law lets productwise bounds telescope.
On a curved convex capacity set, there is no such identity.

The obstruction can be stated as the failure of the bridge one would need to
turn a productwise queue into a regret bound.  Write
\[
  J(x):=\Y\cap\{y'\in\R^n\mid y'\succeq x\},
  \qquad
  y=\Proj_{J(x)}^{\norm{\cdot}}(z).
\]
A productwise proof would need to compare the regret-relevant gap
$\norm{y-z}_1$ with the productwise obstruction $\norm{(x-z)^+}_1$, either
coordinate by coordinate or after summing over products.  The next proposition
shows that neither comparison holds uniformly.

\begin{proposition}[No productwise bridge to the implementation gap]
There is no finite constant $C$ with the following property: for every closed,
convex, and bounded set $\Y\subseteq\R_+^n$, every $z\in\Y$, every
$x\in\R_+^n$ for which $J(x)$ is nonempty, and every nearest point
\[
  y=\Proj_{J(x)}^{\norm{\cdot}}(z),
\]
one has the coordinatewise bound
\[
  \abs{y_i-z_i}\le C\,(x_i-z_i)^+
  \qquad\text{for all }i\in[n].
\]
Moreover, there is no finite constant $C$ for which the aggregate bound
\[
  \norm{y-z}_1\le C\,\norm{(x-z)^+}_1
\]
holds uniformly over the same class of instances.
\end{proposition}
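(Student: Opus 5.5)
The plan is to exhibit a single explicit two-dimensional family of instances in which the productwise obstruction $\norm{(x-z)^+}_1$ stays small while the projection moves some coordinate by an amount that is not small relative to it. Both claims will follow from the same family by taking a ratio to infinity. I will use the Euclidean projection norm (any admissible norm works the same way; Euclidean is the one used in prior analyses). For the coordinatewise claim, it even suffices to find one instance and one coordinate $i$ with $(x_i-z_i)^+=0$ but $y_i\neq z_i$. No finite $C$ can cover that, since the right-hand side vanishes.

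The concrete construction is a curved capacity set in $\R_+^2$: the quarter disk $\Y=\{y\in\R_+^2:\norm{y}_2\le 1\}$. Take a target on the boundary arc, $z=(\cos\theta,\sin\theta)$ with small $\theta>0$. Take the state $x=(0,\sin\theta+\epsilon)$, which exceeds $z$ only in coordinate $2$, by $\epsilon$.

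Then $\norm{(x-z)^+}_1=\epsilon$ and $(x_1-z_1)^+=0$. The set $J(x)$ is the part of the quarter disk with $y_2\ge\sin\theta+\epsilon$, which is nonempty for small $\epsilon$. Since $z\notin J(x)$ and $z$ lies on the arc, I expect the nearest point $y$ to be the arc point $(\sqrt{1-(\sin\theta+\epsilon)^2},\,\sin\theta+\epsilon)$. To verify this I will check the first-order optimality condition: $z-y$ must lie in the normal cone of $J(x)$ at $y$, which is the cone generated by the outward disk normal $y$ and $-e_2$. Alternatively, I can argue that on the set $\{y_2\ge s\}$ the disk is closest to an arc point at the arc endpoint, because chord distance along a circle is monotone in angle.

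With this $y$, we get $y_1-z_1=\sqrt{1-(\sin\theta+\epsilon)^2}-\cos\theta\approx-\epsilon\tan\theta$ for small $\epsilon$. This is nonzero, which already kills the coordinatewise bound at $i=1$.

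For the aggregate bound this gives only $\norm{y-z}_1\approx\epsilon(1+\tan\theta)$, so I need to push the ratio to infinity. The fix is to let $\theta\to\pi/2$, so that $z$ sits near the top of the arc where the circle is nearly horizontal. Equivalently, I will work directly with the ratio $(\cos\theta-\sqrt{1-(\sin\theta+\epsilon)^2})/\epsilon$, choosing $\epsilon$ just below $1-\sin\theta$ so that $y$ is pushed to near $(0,1)$. Then $\epsilon\approx 1-\sin\theta\approx(\pi/2-\theta)^2/2$, while $|y_1-z_1|\approx\cos\theta\approx\pi/2-\theta$. The ratio behaves like $2/(\pi/2-\theta)\to\infty$. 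This is the square-root-versus-quadratic gap of a curved boundary. It is exactly the missing conservation law: a linear budget would give ratio $O(1)$, since the lost mass in coordinate $1$ equals the gained mass in coordinate $2$.

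The main obstacle is rigorously identifying the projection $y$ in this regime, and making sure it is not some interior point of $J(x)$ with a smaller distance. I will handle it by the normal-cone check at the corner point $(\sqrt{1-s^2},s)$ with $s=\sin\theta+\epsilon$. There the normal cone is generated by $y$ and $-e_2$, and $z-y$ points down and to the right, so it lies in that cone. Actually, for the inequality I only need a lower bound on $\norm{y-z}_1$. So instead of computing $y$ exactly, I can bound below: every point of $J(x)$ has $y_1\le\sqrt{1-s^2}$, hence $\norm{y-z}_1\ge z_1-\sqrt{1-s^2}$. This avoids identifying the projection entirely, and holds for every nearest point. I will use this robust bound for both claims.
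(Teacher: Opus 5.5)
Your construction is correct and is essentially the paper's argument. The paper uses the curved set $\{(u,v)\in\R_+^2 : u+v^2\le 1\}$ with $z=(1-\delta^2,\delta)$ and $x=(1,0)$, so $J(x)$ is a single point and the $\delta^2$ violation in one coordinate is dwarfed by a $\delta$ displacement in the other, zero-violation coordinate; this is exactly your quarter-disk example at the limiting choice $\epsilon=1-\sin\theta$, where $J(x)=\{(0,1)\}$ and the ratio is at least $(1+\sin\theta)/\cos\theta\to\infty$. Your norm-free bound $y_1\le\sqrt{1-s^2}$ on all of $J(x)$ plays the same role as the paper's observation that the projection is forced in every norm.
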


\begin{proof}
Take
\[
  \Y=\{(u,v)\in\R_+^2\mid u+v^2\le1\}.
\]
This set is closed, bounded, and convex.  For $\delta\in(0,1]$, set
\[
  z^{(\delta)}=(1-\delta^2,\delta),
  \qquad
  x^{(\delta)}=(1,0).
\]
The set $J(x^{(\delta)})$ contains only $(1,0)$, so the projection is
$y=(1,0)$ in every norm.

The coordinatewise claim fails in the second coordinate:
\[
  \abs{y_2-z_2^{(\delta)}}=\delta,
  \qquad
  (x_2^{(\delta)}-z_2^{(\delta)})^+=0.
\]
Thus no finite $C$ can make
$\delta\le C\cdot 0$ hold uniformly.

The aggregate claim also fails, since
\[
  \norm{y-z^{(\delta)}}_1=\delta+\delta^2,
  \qquad
  \norm{(x^{(\delta)}-z^{(\delta)})^+}_1=\delta^2.
\]
The ratio is
\[
  \frac{\norm{y-z^{(\delta)}}_1}
       {\norm{(x^{(\delta)}-z^{(\delta)})^+}_1}
  =
  1+\delta^{-1},
\]
which diverges as $\delta\downarrow0$.
\end{proof}

The consequence is not merely a loss of constants.  The productwise queue
$\norm{(x-z)^+}_1$ can be of order $\delta^2$ while the target-implementation gap charged in regret is of order
$\delta$.  A proof that only follows the coordinatewise obstruction therefore
undercounts the implementation error.  In the example, the inventory lower
bound asks for only a $\delta^2$ increase in the first coordinate, but the
curved boundary $u+v^2=1$ forces a displacement of size $\delta$ in the second
coordinate.  The large part of the projection gap occurs in a coordinate with
no positive-part violation at all.

This is the precise point at which productwise reasoning stops.  Common demand
can serve the feasibility violation $(x-z)^+$, but regret has already charged
the projection distance $\norm{z-y}_1$.  Without a geometry-free comparison
between these two quantities, a recursion for the productwise obstruction does
not imply a recursion for the target-implementation gap:
\[
  \text{control of } \norm{(x_t-z_t)^+}_1
  \quad\not\Rightarrow\quad
  \text{control of } \norm{z_t-y_t}_1 .
\]
Thus one cannot conclude a bound of the form
\[
  \sum_{t=1}^T \norm{z_t-y_t}_1
  \;\lesssim\;
  B\sum_{t=1}^{T-1}\norm{z_{t+1}-z_t}_1
\]
from productwise sell-out accounting on general convex capacity sets.

The geometric queue recursion in Proposition~\ref{prop:target_implementation_gap_queue}
avoids this failed bridge.  It does not try to compare the implementation gap
with $\norm{(x_t-z_t)^+}_1$.  Instead, after observing demand, it constructs a
point $w_t\in J_{t+1}$ directly.  If $z_{t+1}$ lies within distance
$d_t^{\min}$ of the implemented point $y_t$, the short move lemma makes
$z_{t+1}$ feasible and the queue clears.  Otherwise, convexity lets us move
from $y_t$ toward $z_{t+1}$ by distance $d_t^{\min}$ and obtain a feasible
witness $w_t\in J_{t+1}$.

The decisive step is then the aligned projection certificate:
\[
  q_{t+1}
  =
  \distnorm(z_{t+1},J_{t+1})
  \le
  \norm{z_{t+1}-w_t}.
\]
This certificate is available only in the norm used to define the projection.
Once it is available, the triangle inequality gives
\[
  q_{t+1}
  \le
  \bigl[q_t+\norm{z_{t+1}-z_t}-d_t^{\min}\bigr]^+.
\]
Thus common demand becomes service for the regret-relevant queue itself.  The
proof has transformed coordinatewise demand into geometric queue service
without any productwise conservation identity.

This also clarifies why a norm-conversion workaround does not repair the productwise
argument.  Suppose the implementation is the Euclidean projection
\[
  y_{t+1}=\Proj_{J_{t+1}}^{\norm{\cdot}_2}(z_{t+1}),
\]
but we try to analyze the $\ell_1$ queue
\[
  q_{t+1}^{(1)}:=\norm{z_{t+1}-y_{t+1}}_1 .
\]
After constructing the witness $w_t\in J_{t+1}$, the geometric proof would need
\[
  q_{t+1}^{(1)}
  \le
  \norm{z_{t+1}-w_t}_1 .
\]
Euclidean projection does not certify this inequality.  It certifies only
\[
  \norm{z_{t+1}-y_{t+1}}_2
  \le
  \norm{z_{t+1}-w_t}_2 .
\]
Converting norms gives at best
\[
  q_{t+1}^{(1)}
  \le
  \sqrt n\,\norm{z_{t+1}-y_{t+1}}_2
  \le
  \sqrt n\,\norm{z_{t+1}-w_t}_2
  \le
  \sqrt n\,\norm{z_{t+1}-w_t}_1 .
\]
The factor $\sqrt n$ now appears inside the recursion formula, not at the end of
the argument.  Between clearing shocks, repeated substitution weights earlier
target movements by powers of $\sqrt{n}$, causing \emph{exponential} growth.  Therefore the window condition
\[
  \sum_{r=t}^{t+B-1}\norm{z_{r+1}-z_r}\le \rho
\]
no longer guarantees that one common-demand shock of size $\rho$ clears the
queue.

The key insight from the discussion is that, for general convex capacity sets, norm alignment is the mechanism that converts common demand into service for the queue to which regret is actually charged.

\section{Discussion on UPPD and \texorpdfstring{$L_{\max}$}{Lmax}}
\label{app:lmax}

\paragraph{How to read the parameter $\mu$.}
UPPD is best viewed as a common-clearing primitive. One practical model is a store-wide traffic or promotion shock: on ordinary days, demand may be sparse and product-specific, but with conditional probability $\mu$ a system-wide event brings at least $\rho$ demand to every product class. In such a model the expected clearing-window length is of order $1/\mu$. Improving a regret bound from $\sqrt T/\mu$ to $\sqrt{T/\mu}$ is therefore the difference between paying linearly for the waiting time and paying only its square root between common shocks. The improvement is most meaningful when common clearing is infrequent but not impossible; if $\mu$ is exponentially small in the number of products, all UPPD-based guarantees become conservative. This is why the main text states both sides: the $\mu^{-1/2}$ exponent is optimal under UPPD, while replacing UPPD by a more local clearing condition is the next modeling step.

\citet{ichikawa2026nonstationary} use the maximum sell-out window $L_{\max}$ as their demand primitive. Under UPPD, their stated high-probability estimate gives $L_{\max}=\widetilde O(1/\mu^2)$. The elementary argument below tightens this relation to $\widetilde O(1/\mu)$ for the linear-capacity model. This observation is useful for comparison; the main theorems avoid $L_{\max}$ altogether and work directly on arbitrary convex capacity sets.

\begin{proposition}
\label{prop:uppd_lmax}
Consider the linear-capacity model $\Y_{\mathrm{lin}} := \left\{y\in\R_+^n \,|\, \norm{y}_1 \le C\right\}$
where $C>0$ is the capacity parameter. Suppose Assumption~\ref{ass:uppd} holds with parameters $(\mu,\rho)$, and set $m := \left\lceil C/\rho \right\rceil$. Then for every item $i$, every start time $t$ with $t+L-1\le T$, and every $L\in\mathbb N$,
\begin{equation}
    \label{eq:uppd_lmax_demand_sum_bound}
    \mathbb P\!\left(\sum_{s=t}^{t+L-1} d_s^i < C\right)
  \le
  \exp\!\left(m-\frac{\mu L}{2}\right)
\end{equation}
Consequently, if
\[
  L_\delta
  :=
  \Biggl\lceil
    \frac{2}{\mu}
    \Bigl(
      \Bigl\lceil \frac{C}{\rho} \Bigr\rceil
      +
      \log\!\Bigl(\frac{nT}{\delta}\Bigr)
    \Bigr)
  \Biggr\rceil 
\]
then with probability at least $1-\delta$, every item receives at least $C$ units of demand over every length-$L_\delta$ window contained in $[T]$. Equivalently, in the high-probability sense of Remark~3 of \citet{ichikawa2026nonstationary}, we have $L_{\max}\le L_\delta$.
\end{proposition}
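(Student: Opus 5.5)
Fix an item $i$ and a window $\{t,\dots,t+L-1\}$. Let $N:=\sum_{s=t}^{t+L-1}\1\{d_s^{\min}\ge\rho\}$ count the common-demand shocks in the window. Since $d_s^i\ge d_s^{\min}$, each shock contributes at least $\rho$ to item $i$, so
\[
\sum_s d_s^i\ge\rho N .
\]
Hence the event $\{\sum_s d_s^i<C\}$ is contained in $\{N<C/\rho\}\subseteq\{N\le m-1\}$, using $m=\lceil C/\rho\rceil$. It therefore suffices to bound $\mathbb P(N\le m-1)$ by $\exp(m-\mu L/2)$.

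For this I will use an exponential-supermartingale (Chernoff) argument that handles the conditional, non-i.i.d.\ structure of UPPD. Write $\xi_s:=\1\{d_s^{\min}\ge\rho\}$. By Assumption~\ref{ass:uppd}, $\mathbb E[\xi_s\mid\F_{s-1}]\ge\mu$ almost surely, and $\xi_s$ is $\F_s$-measurable. For $\lambda>0$,
\[
\mathbb E[e^{-\lambda\xi_s}\mid\F_{s-1}]
=1-(1-e^{-\lambda})\mathbb P(\xi_s=1\mid\F_{s-1})
\le 1-\mu(1-e^{-\lambda})
\le\exp\bigl(-\mu(1-e^{-\lambda})\bigr).
\]
Iterating via the tower property gives
\[
\mathbb E[e^{-\lambda N}]\le\exp\bigl(-\mu L(1-e^{-\lambda})\bigr).
\]
Markov's inequality then yields
\[
\mathbb P(N\le m-1)\le e^{\lambda(m-1)}\exp\bigl(-\mu L(1-e^{-\lambda})\bigr).
\]
Taking $\lambda=1$ gives $1-e^{-1}\ge 1/2$ and $e^{m-1}\le e^m$. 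This produces exactly \eqref{eq:uppd_lmax_demand_sum_bound}, with a little slack.

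For the consequence, set $L=L_\delta$. Then
\[
\mu L_\delta/2\ge m+\log(nT/\delta),
\]
so each (item, window start) pair fails with probability at most $\delta/(nT)$. There are at most $n$ items and at most $T$ start times $t$ with $t+L_\delta-1\le T$, so a union bound gives total failure probability at most $\delta$. On the complementary event, every item accumulates demand at least $C$ over every length-$L_\delta$ window. Since $\|y\|_1\le C$ forces $y_i\le C$ for all $i$, any stock present at the window's start is sold out by its end under the dynamics \eqref{eq:inv_dynamics_constraint}. This is the high-probability sense of $L_{\max}\le L_\delta$. I will verify that this matches the sell-out definition in Remark~3 of \citet{ichikawa2026nonstationary}; if their definition counts window lengths slightly differently, an off-by-one adjustment is absorbed by the ceiling.

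The main obstacle I expect is only the conditional Chernoff step. The shock indicators are neither independent nor identically distributed, so the per-step bound has to be applied conditionally, and the product telescoped through the filtration rather than factored. The rest is bookkeeping: the inclusion $\{\sum d^i<C\}\subseteq\{N\le m-1\}$ and the union bound.
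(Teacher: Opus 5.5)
Your proposal is correct and follows essentially the same route as the paper: the same reduction $\{\sum_s d_s^i<C\}\subseteq\{N\le m-1\}$ via $d_s^i\ge d_s^{\min}$, the same conditional Chernoff bound iterated through the filtration with the tower property, and the same union bound over at most $nT$ item–window pairs. The only cosmetic difference is your choice $\lambda=1$ (using $1-e^{-1}\ge 1/2$ and $e^{m-1}\le e^m$), whereas the paper takes $\theta=\log 2$ and bounds $2^m(1-\mu/2)^L\le \exp(m-\mu L/2)$.
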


\begin{proof}
Fix a product $i$ and timestep $t$, and define the indicator
\[
  X_s := \1\{d_s^{\min} \ge \rho\},
  \qquad
  \text{for all }
  s=t,\dots,t+L-1.
\]
Since $d_s^i \ge d_s^{\min} \ge \rho X_s$, the event $\sum_{s=t}^{t+L-1} d_s^i < C$ implies $\sum_{s=t}^{t+L-1} X_s \le m-1$ because $\sum_s X_s$ takes integer values. For any $\theta>0$, a Chernoff bound gives
\[
  \mathbb P\!\left(\sum_{s=t}^{t+L-1} X_s \le m-1\right)
  \le
  e^{\theta m}
  \mathbb E\!\left[\exp\!\Bigl(-\theta\sum_{s=t}^{t+L-1} X_s\Bigr)\right]
\]
Write $p_s := \mathbb P(d_s^{\min} \ge \rho \mid \F_{s-1})$. Assumption~\ref{ass:uppd} gives $p_s\ge \mu$ almost surely, and therefore
\[
  \mathbb E\!\left(e^{-\theta X_s}\mid\F_{s-1}\right)
  =
  1-p_s+p_s e^{-\theta}
  \le
  1-\mu+\mu e^{-\theta}
\]
Iterating conditional expectations and using the tower property inductively yields
\[
  \mathbb E\!\left[\exp\!\Bigl(-\theta\sum_{s=t}^{t+L-1} X_s\Bigr)\right]
  \le
  (1-\mu+\mu e^{-\theta})^L
\]
Taking $\theta=\log 2$ gives
\[
  \mathbb P\!\left(\sum_{s=t}^{t+L-1} d_s^i < C\right)
  \le
  \mathbb P\!\left(\sum_{s=t}^{t+L-1} X_s \leq m-1\right) 
  \leq
  2^m(1-\mu/2)^L
  \le
  \exp\!\left(m-\frac{\mu L}{2}\right)
\]
where the last inequality used $2^m < e^m$ and $1-x \leq e^{-x}$ with $x = \mu/2$, and thus proves \eqref{eq:uppd_lmax_demand_sum_bound}.

For $L=L_\delta$, the right-hand side of \eqref{eq:uppd_lmax_demand_sum_bound} is at most $\delta/(nT)$. A union bound over at most $nT$ pairs of products and timesteps $(i,t)$ gives the claim. Thus with probability at least $1-\delta$, each product receives demand exceeding the warehouse capacity in at most $L_\delta$ rounds, so we must have $L_{\text{max}} \leq L_\delta$.
\end{proof}

\section{Omitted proofs from Section~\ref{sec:model} and Section~\ref{sec:queue_reduction}}
\label{sec:regret_decomp_proofs}

\subsection{Deterministic clearing and pathwise transfer}
\label{app:pathwise_transfer}

The main text works directly with UPPD. The proofs pass through the following deterministic clearing event.

\begin{assumption}[Windowed clearing]
\label{ass:window_clearing}
There exist $B\in\N$ and $\rho>0$ such that every interval of $B$ consecutive rounds contains a time $s$ with $d_s^{\min}\ge\rho$.
\end{assumption}

\begin{theorem}[Pathwise implementation bound]
\label{thm:target_implementation_gap_bound}
Suppose $q_1=0$ and Assumption~\ref{ass:window_clearing} holds. If $B<T$, assume that for every $t\in [T-B]$,
\begin{equation}
\label{eq:learner_window_motion}
  \sum_{r=t}^{t+B-1}\norm{z_{r+1}-z_r}\le \rho.
\end{equation}
(If $B\ge T$, \eqref{eq:learner_window_motion} is not needed.) Then
\begin{equation}
\label{eq:tc-pathwise}
  \tarimpgap
  =
  \sum_{t=1}^Tq_t
  \le
  B\sum_{t=1}^{T-1}\norm{z_{t+1}-z_t}
  =B\tarmvment.
\end{equation}
\end{theorem}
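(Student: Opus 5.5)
The plan is to unroll the queue recursion of Proposition~\ref{prop:target_implementation_gap_queue} between clearing shocks. Write $a_r:=\norm{z_{r+1}-z_r}$ for the arrivals, so that $q_{t+1}\le[q_t+a_t-d_t^{\min}]^+$ for all $t\in[T-1]$. Call $s$ a \emph{shock time} if $d_s^{\min}\ge\rho$. The key claim is the following: if $s$ is a shock time and the queue just before service satisfies $q_s+a_s\le\rho$, then $q_{s+1}=0$.

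We first show that every $q_t$ is a sum of recent arrivals over a window of length at most $B$. Fix $t\ge2$ and let $s<t$ be the most recent shock time before $t$, if one exists. Suppose we already know $q_{s+1}=0$ whenever $s$ is a shock time. Dropping the service terms, which are nonnegative, and iterating gives
\[
q_t\le\sum_{r=s+1}^{t-1}a_r .
\]
If no shock occurred before $t$, then using $q_1=0$ gives $q_t\le\sum_{r=1}^{t-1}a_r$. By Assumption~\ref{ass:window_clearing}, no run of $B$ consecutive rounds avoids shocks. So in either case the sum ranges over $r\in[t-B+1,t-1]\cap[1,T-1]$, a window of fewer than $B$ indices ending at $t-1$. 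Summing over $t$, each $a_r$ appears only for $t\in\{r+1,\dots,r+B-1\}$, that is, at most $B-1\le B$ times. This yields $\tarimpgap\le B\tarmvment$.

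It remains to justify that every shock clears the queue, which we do by induction on time. At a shock time $s$, the induction hypothesis gives $q_s\le\sum_{r=s'+1}^{s-1}a_r$, where $s'$ is the previous shock time, or the sum starts at $r=1$ if there is none. Hence
\[
q_s+a_s\le\sum_{r=\max(s',0)+1}^{s}a_r .
\]
This is a sum of arrivals over consecutive indices. Because consecutive shocks are at most $B$ apart, it spans at most $B$ indices lying inside $[1,T-1]$. It is therefore contained in some window $\{t,\dots,t+B-1\}$ with $t\in[T-B]$ (when $B<T$). By \eqref{eq:learner_window_motion} the sum is at most $\rho\le d_s^{\min}$, so $q_{s+1}=0$. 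When $B\ge T$, the window condition is not needed, because the counting argument alone already gives each $a_r$ multiplicity at most $T-1<B$.

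I expect the main obstacle to be the index bookkeeping at the boundaries, in two places. First, the window from the previous shock (or from time $1$) through $s$ must have length at most $B$ and must fit inside an admissible interval $[t,t+B-1]$ with $t\le T-B$; near $T$ this may require shifting the window left, which is valid because the arrivals are nonnegative. Second, the multiplicity count must be at most $B$, not $B+1$. The first place to check this is the case where the shock occurs exactly $B$ rounds after the previous one.
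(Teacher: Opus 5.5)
Your proof is correct and follows essentially the paper's route: unroll the Lindley recursion from an empty queue, use the window condition to show that a shock of size $\rho$ empties it, and sum the resulting bounds with multiplicity at most $B$. The differences are only bookkeeping. You use every shock time as a renewal point, which requires the left-shifted window $t=\min\{s'+1,T-B\}$ near the horizon. The paper instead builds recursive zero times $\tau_j$, invokes the window condition only from $\tau_j\le T-B$, and handles the final partial excursion separately.
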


\begin{theorem}[OIO regret as target regret plus movement]
\label{thm:regret_reduction}
Suppose $q_1=0$, Assumption~\ref{ass:window_clearing} holds, and if $B<T$, \eqref{eq:learner_window_motion} is satisfied. Then for every comparator sequence $u_{1:T}\in\Y^T$,
\begin{equation}
\label{eq:reduction}
  R_T(u_{1:T})\le \baseregret(u_{1:T})+G_*B\tarmvment.
\end{equation}
\end{theorem}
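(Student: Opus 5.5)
The plan is to combine Lemma~\ref{lem:regret_decomp} with the pathwise bound of Theorem~\ref{thm:target_implementation_gap_bound}. The statement is deterministic: it holds for every realization satisfying the hypotheses, so no probabilistic argument is needed here.

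First, Lemma~\ref{lem:regret_decomp} applies to any target sequence $(z_t)\subset\Y$ with $y_t=\Projnorm{J_t}{z_t}$. It yields, for every comparator sequence $u_{1:T}\in\Y^T$,
\[
  R_T(u_{1:T})\le \baseregret(u_{1:T})+G_*\tarimpgap .
\]
If one wished to re-derive it, the argument is short. Convexity and $g_t\in\partial\ell_t(y_t)$ give $\ell_t(y_t)-\ell_t(u_t)\le\ip{g_t}{y_t-u_t}$. We then split this as
\[
  \ip{g_t}{y_t-u_t}=\ip{g_t}{z_t-u_t}+\ip{g_t}{y_t-z_t},
\]
and bound the second term by Hölder's inequality, $\ip{g_t}{y_t-z_t}\le\dnorm{g_t}\norm{y_t-z_t}\le G_*q_t$. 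Summing over $t$ gives the decomposition.

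Second, the hypotheses of the theorem are exactly those of Theorem~\ref{thm:target_implementation_gap_bound}: $q_1=0$, Assumption~\ref{ass:window_clearing}, and the window motion condition \eqref{eq:learner_window_motion} when $B<T$. That theorem therefore gives $\tarimpgap\le B\tarmvment$. Multiplying by $G_*\ge0$ and substituting into the decomposition yields \eqref{eq:reduction}. Nothing in this step depends on $u_{1:T}$, since $\tarimpgap$ and $\tarmvment$ are determined by the algorithm's path alone. The bound therefore holds simultaneously for all comparator sequences.

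There is no real obstacle in this composition. All the substance sits in Theorem~\ref{thm:target_implementation_gap_bound}, which in turn rests on the queue recursion of Proposition~\ref{prop:target_implementation_gap_queue}. If that theorem had to be proved as well, the hard step would be the clearing argument. The plan there is to take the most recent time $s\le t-1$ with $d_s^{\min}\ge\rho$, and iterate \eqref{eq:target_implementation_gap_queue} forward from $s$. One shows $q_{s+1}\le[q_s+a_s-\rho]^+$, where $a_s$ denotes the movement $\norm{z_{s+1}-z_s}$. One then shows inductively that $q_s$ is at most the movement accumulated since the previous shock, which is at most $\rho$ by \eqref{eq:learner_window_motion}. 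Hence the backlog after each shock is at most the within-window arrivals, and $q_t\le\sum_{r=t-B}^{t-1}a_r$. Summing over $t$, each $a_r$ is counted at most $B$ times, giving $\tarimpgap\le B\tarmvment$.
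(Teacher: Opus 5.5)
Your proposal is correct and follows the paper's route: the theorem is Lemma~\ref{lem:regret_decomp} combined with the pathwise bound $\tarimpgap\le B\tarmvment$ of Theorem~\ref{thm:target_implementation_gap_bound}, which is the same composition the paper carries out in its proof of Theorem~\ref{thm:uppd_reduction}. Your optional sketch of the pathwise bound charges each $q_t$ to the arrivals since the most recent shock, unlike the paper's zero-time/excursion decomposition, and it is also sound, because when $B<T$ the window condition in fact forces $q_{s+1}=0$ after every shock $s\le T-1$.
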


\begin{lemma}[Probabilistic clearing under UPPD]
\label{lem:window_uppd}
Under Assumption~\ref{ass:uppd}, for every $B\in[T]$,
\[
  \mathbb P\Bigl(\exists t\in [T-B+1]\text{ such that }d_s^{\min}<\rho\text{ for all }s=t,\dots,t+B-1\Bigr)
  \le
  Te^{-\mu B}.
\]
In particular, if $B_\delta:=\lceil\mu^{-1}\log(T/\delta)\rceil$ and $B_\delta<T$, then with probability at least $1-\delta$, Assumption~\ref{ass:window_clearing} holds with $B=B_\delta$. When $B_\delta\ge T$, the reduction uses the deterministic loose bound instead.
\end{lemma}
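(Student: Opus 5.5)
The plan is to control the probability that some window of $B$ consecutive rounds contains no clearing shock. I will bound each fixed window by a product of conditional probabilities and then take a union bound over the at most $T$ possible starting points.

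Fix a start time $t\in[T-B+1]$ and let $E_s:=\{d_s^{\min}<\rho\}$. Assumption~\ref{ass:uppd} gives $\mathbb P(E_s\mid\F_{s-1})\le 1-\mu$ almost surely, and $E_s\in\F_s$ because the demand at round $s$ is $\F_s$-measurable. I will peel off the last round by the tower property:
\[
\mathbb P\Bigl(\textstyle\bigcap_{s=t}^{t+B-1}E_s\Bigr)=\mathbb E\Bigl[\1\Bigl\{\textstyle\bigcap_{s=t}^{t+B-2}E_s\Bigr\}\,\mathbb P(E_{t+B-1}\mid\F_{t+B-2})\Bigr]\le(1-\mu)\,\mathbb P\Bigl(\textstyle\bigcap_{s=t}^{t+B-2}E_s\Bigr).
\]
Inducting on the window length gives $(1-\mu)^B\le e^{-\mu B}$. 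A union bound over the at most $T-B+1\le T$ start times then yields $Te^{-\mu B}$.

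For the second claim, I take $B=B_\delta\ge\mu^{-1}\log(T/\delta)$, which gives $Te^{-\mu B_\delta}\le\delta$. On the complementary event, every window of $B_\delta$ consecutive rounds inside $[T]$ contains some $s$ with $d_s^{\min}\ge\rho$, which is exactly Assumption~\ref{ass:window_clearing}. The hypothesis $B_\delta<T$ ensures $B_\delta\in[T]$, so the first part applies. When $B_\delta\ge T$ no probabilistic statement is needed, as the lemma notes.

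The only delicate point is measurability in the conditioning step: the event for the earlier rounds must lie in $\F_{t+B-2}$. This holds because $d_s$ is $\F_s$-measurable by the filtration convention, so UPPD can be applied round by round even against an adaptive adversary.
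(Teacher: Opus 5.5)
Your proposal is correct and follows essentially the same route as the paper's proof. Both use the per-round bound $\mathbb P(d_s^{\min}<\rho\mid\F_{s-1})\le 1-\mu$ from UPPD, induct on the window length by peeling off the last round with the tower property to get $(1-\mu)^B\le e^{-\mu B}$, and take a union bound over at most $T$ start times. Both then note that $\mu B_\delta\ge\log(T/\delta)$ gives $Te^{-\mu B_\delta}\le\delta$.
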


\subsection{Proof of Lemma~\ref{lem:regret_decomp}}
\begin{proof}
\label{proof:regret_decomp}
For any $t \in [T]$, convexity of $\ell_t$ at $y_t$ and the subgradient $g_t \in \partial \ell_t(y_t)$ gives
\[
  \ell_t(y_t)-\ell_t(u_t)
  \le
  \ip{g_t}{y_t-u_t}
  =
  \ip{g_t}{z_t-u_t}
  +
  \ip{g_t}{y_t-z_t}.
\]
Since $\dnorm{g_t} \le G_*$ by Assumption~\ref{ass:convex_and_bounded} and using a generalized version of Hölder's inequality, we have
\[
  \ip{g_t}{y_t-z_t}
  \le
  G_* \norm{y_t-z_t} .
\]
Summing over $t$ proves \eqref{eq:decomp}.
\end{proof}

\subsection{Proof of Proposition~\ref{prop:target_implementation_gap_queue}}
The proof of the target-implementation gap recursion (Proposition~\ref{prop:target_implementation_gap_queue}) relies on the geometric Lemma~\ref{lem:short_move} relating the distance between order levels and the inventory dynamics constraint \eqref{eq:inv_dynamics_constraint}. It also relies on the following projection certificate, which is the precise place where norm alignment enters the queue recursion. We first present these two elementary ingredients and then proceed to the proposition.

\begin{lemma}[Aligned projection certificate]
\label{lem:aligned_projection_certificate}
Fix a norm $\norm{\cdot}$ and a nonempty closed set $J\subseteq\R^n$. If
\[
  y\in\argmin_{u\in J}\norm{z-u},
\]
equivalently $y=\Projnorm{J}{z}$ for one choice of nearest point, then for every $w\in J$,
\begin{equation}
\label{eq:aligned_projection_certificate}
  \norm{z-y}=\distnorm(z,J)\le \norm{z-w}.
\end{equation}
The statement is norm-specific: a projection in a different norm certifies distance only in that different norm.
\end{lemma}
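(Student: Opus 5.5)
The plan is to argue directly from the definitions of the nearest-point distance and of the argmin; no structure of $J$ beyond nonemptiness is needed, and closedness is only used to guarantee that a minimizer $y$ exists. First we record that $\distnorm(z,J):=\inf_{u\in J}\norm{z-u}$. Since $y\in J$ attains the minimum of $u\mapsto\norm{z-u}$ over $J$, the infimum is achieved at $y$. Hence $\norm{z-y}=\distnorm(z,J)$. This gives the equality in \eqref{eq:aligned_projection_certificate}.

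Second, for an arbitrary witness $w\in J$, the defining property of the argmin gives $\norm{z-y}\le\norm{z-w}$, because $w$ is an admissible competitor in the minimization. Equivalently, the infimum is bounded above by the value at any feasible point. Chaining this with the first step yields the full inequality $\norm{z-y}=\distnorm(z,J)\le\norm{z-w}$. Non-uniqueness of the nearest point is harmless: every element of the argmin attains the same value $\distnorm(z,J)$, so the conclusion holds for whichever choice $\Projnorm{J}{z}$ makes.

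Third, to justify the norm-specificity remark, we note that the minimization defining $y$ refers only to $\norm{\cdot}$. If $y'$ minimizes a different norm $\norm{\cdot}'$ over $J$, the argument above certifies $\norm{z-y'}'\le\norm{z-w}'$, but it gives nothing about $\norm{z-y'}$ versus $\norm{z-w}$. A two-dimensional example, such as a segment $J$ where the $\ell_2$- and $\ell_1$-nearest points differ, shows that the inequality can fail, although the remark only requires pointing this out.

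None of these steps poses a real obstacle; the statement is essentially the definition of a nearest point. The only care needed is to keep the argmin and the distance in the same norm, which is exactly what the lemma records for its later use with $w=w_t\in J_{t+1}$ in Proposition~\ref{prop:target_implementation_gap_queue}.
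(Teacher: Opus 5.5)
Your proposal is correct and matches the paper's proof: both take $\norm{z-y}\le\norm{z-w}$ directly from the optimality of $y$ over all competitors $w\in J$, measured in the same norm that defines $\distnorm(z,J)$. Your counterexample for the norm-specificity remark goes beyond the paper, which treats that sentence as commentary. It works provided the $\ell_2$-nearest point is not itself an $\ell_1$-nearest point; for instance, take $z=\0$ and $J$ the part of the line $u_1+2u_2=2$ in $\R_+^2$, where the $\ell_2$-projection $(0.4,0.8)$ has $\ell_1$-distance $1.2>1=\norm{(0,1)}_1$.
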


\begin{proof}
The inequality is the defining optimality property of a nearest point in the norm $\norm{\cdot}$. If $y$ minimizes $\norm{z-u}$ over $u\in J$, then no feasible witness $w\in J$ can be closer to $z$ in the same norm.
\end{proof}

\begin{proof}[Proof of Lemma~\ref{lem:short_move}]
\label{proof:short_move}
For each coordinate $i$, using $\abs{y_i'-y_i}\le\norm{y'-y}$ from admissibility of $\norm{\cdot}$ (Definition~\ref{def:admissible_norm}),
\[
  y_i' \ge y_i - \abs{y_i'-y_i} \ge y_i - \norm{y'-y} \ge y_i-d^{\min} \ge y_i-d_i .
\]
Since $y_i' \ge 0$, this implies
\[
  y_i' \ge \max\{y_i-d_i,0\} = (y_i-d_i)^+.
\]
\end{proof}

\begin{proof}[Proof of Proposition~\ref{prop:target_implementation_gap_queue}]
\label{proof:target_implementation_gap_queue}
Fix $t \in [T-1]$ and set $r_t := \norm{z_{t+1}-y_t}$. All distances in this proof are measured in the projection norm, and $y_{t+1}$ is the nearest point to $z_{t+1}$ in that same norm. This alignment is what will allow us to apply Lemma~\ref{lem:aligned_projection_certificate}. We break the proof into cases dependent on the size of $r_t$ relative to $d_t^{\min}$:

\paragraph{Case 1.} If $r_t \le d_t^{\min}$, then Lemma~\ref{lem:short_move} yields
\[
  z_{t+1} \succeq (y_t-d_t)^+ \succeq x_{t+1},
\]
where the second inequality uses \eqref{eq:inv_dynamics_constraint}. Since $z_{t+1}\in\Y$, we have $z_{t+1} \in J_{t+1}$ and therefore $q_{t+1}=0$.

\paragraph{Case 2.} Now suppose $r_t > d_t^{\min}$. Define
\[
  \alpha_t := \frac{d_t^{\min}}{r_t} \in [0,1)
  \qquad\text{and}\qquad
  w_t := (1-\alpha_t)y_t + \alpha_t z_{t+1}.
\]
Because $\Y$ is convex and $y_t,z_{t+1}\in\Y$, we have $w_t\in\Y$. By positive homogeneity of $\norm{\cdot}$,
\[
  \norm{w_t-y_t} = \norm{\alpha_t(z_{t+1} - y_t)} = \alpha_t r_t = d_t^{\min} .
\]
Then applying Lemma~\ref{lem:short_move} gives
\[
  w_t \succeq (y_t-d_t)^+ \succeq x_{t+1},
\]
so $w_t \in J_{t+1}$. By the aligned projection certificate (Lemma~\ref{lem:aligned_projection_certificate}) applied in the norm $\norm{\cdot}$ to the feasible witness $w_t$,
\begin{equation}
\label{eq:aligned_projection_step}
  q_{t+1}
  =\distnorm(z_{t+1},J_{t+1})
  \le
  \norm{z_{t+1}-w_t}
  = \norm{(1-\alpha_t)(z_{t+1} - y_t)}
  = (1-\alpha_t)r_t
  = r_t-d_t^{\min} .
\end{equation}
Finally by the triangle inequality,
\[
  r_t
  =
  \norm{z_{t+1}-y_t}
  \le
  \norm{z_{t+1}-z_t} + \norm{z_t-y_t}
  =
  \norm{z_{t+1}-z_t} + q_t .
\]
So we may write:
\[
q_{t+1} \leq q_t + \norm{z_{t+1} - z_t} - d^{\min}_t
\]
Combining the two cases proves \eqref{eq:target_implementation_gap_queue}.
\end{proof}

Bounding $q_{t+1}$ by the distance from $z_{t+1}$ to $w_t$ in \eqref{eq:aligned_projection_step} is the crucial step that utilizes norm alignment. The same norm defines $q_{t+1}$, the projection $\Projnorm{J_{t+1}}{z_{t+1}}$, and the distance to the witness $w_t$. The coefficient of the queue's residual in this inequality is exactly one because \eqref{eq:aligned_projection_step} was proved in the same norm that defines $q_{t+1}$. Replacing that step with a different norm via norm conversion would give a multiplicative constant inside the recursion rather than the queue recursion \eqref{eq:target_implementation_gap_queue}.

\subsection{Proof of Theorem~\ref{thm:target_implementation_gap_bound}}

We first provide the proof's outline and analytical approach. We divide the horizon into excursions of the queue away from zero. On each excursion, repeated use of the queue recursion bounds the queue by cumulative hidden movement. The windowed-clearing assumption then forces every full excursion to end within at most $B$ rounds, and the last partial excursion has length at most $B$ as well.

\begin{proof}
\label{proof:target_implementation_gap_bound}
For $t \in [T-1]$, set $a_t := \norm{z_{t+1}-z_t}$. We first record a horizon-free loose bound. If $q_\tau=0$, then repeated use of \eqref{eq:target_implementation_gap_queue} and $d_r^{\min}\ge0$ gives
\begin{equation}
    \label{eq:queue_loose_window_bound}
    q_{t+1} \le \sum_{r=\tau}^{t} a_r
  \qquad \text{for every } t\in\{\tau,\dots,T-1\}.
\end{equation}
Indeed, this follows by induction from $q_\tau=0$ and $q_{r+1}\le q_r+a_r$. If $B\ge T$, then \eqref{eq:queue_loose_window_bound} with $\tau=1$ gives
\[
  \sum_{t=1}^T q_t
  \le
  \sum_{t=1}^T\sum_{r=1}^{t-1}a_r
  \le
  T\sum_{r=1}^{T-1}a_r
  \le
  B\sum_{r=1}^{T-1}a_r,
\]
so the result holds. We therefore assume $B<T$ for the rest of the proof.

We next show that if the queue is empty and at least $B$ rounds remain, then it empties again within the next $B$ rounds. Fix $\tau \le T-B$ with $q_\tau=0$. By Assumption~\ref{ass:window_clearing}, there exists $s\in\{\tau,\dots,\tau+B-1\}$ with $d_s^{\min}\ge \rho$. Applying \eqref{eq:target_implementation_gap_queue} at time $s$, then using $q_s = 0$ if $s = \tau$ or \eqref{eq:queue_loose_window_bound} otherwise, and then finally applying \eqref{eq:learner_window_motion},
\begin{equation}
\label{eq:exact_queue_clearing_step}
  q_{s+1}
  \le
  \left[q_s + a_s - d^{\min}_s\right]^+ \leq
  \Bigl[\sum_{r=\tau}^{s} a_r - d_s^{\min}\Bigr]^+
  \le
  \Bigl[\sum_{r=\tau}^{\tau+B-1} a_r - \rho\Bigr]^+
  = 0.
\end{equation}
Thus, if $q_\tau=0$ and $\tau\le T-B$, then there exists $\tau'\in\{\tau+1,\dots,\tau+B\}$ with $q_{\tau'}=0$. In other words, whenever the queue is empty and a full clearing window remains, it must empty again within the next $B$ rounds. 

Now define zero times recursively. Let $\tau_1=1$. Given $\tau_j$, if $\tau_j\le T-B$, let $\tau_{j+1}$ be the smallest index in $\{\tau_j+1,\dots,\tau_j+B\}$ such that $q_{\tau_{j+1}}=0$; if $\tau_j>T-B$, set $\tau_{j+1}=T+1$ and stop. This yields
\[
  1=\tau_1 < \tau_2 < \cdots < \tau_m < \tau_{m+1}=T+1,
\]
with $q_{\tau_j}=0$ and $\tau_{j+1}-\tau_j \le B$ for every $j \in [m]$. For the last interval this follows because $\tau_m>T-B$, hence $\tau_{m+1}-\tau_m=T+1-\tau_m\le B$.

Now, fix $j\in[m]$. Since $q_{\tau_j}=0$, we may apply \eqref{eq:queue_loose_window_bound} in the first inequality below to bound the sum of the queue lengths between $\tau_j$ and $\tau_{j+1} - 1$:
\[
  \sum_{t=\tau_j}^{\tau_{j+1}-1} q_t
  \le
  \sum_{t=\tau_j}^{\tau_{j+1}-1}\sum_{r=\tau_j}^{t-1} a_r
  =
  \sum_{r=\tau_j}^{\tau_{j+1}-2} (\tau_{j+1}-1-r)a_r
  \le
  B\sum_{r=\tau_j}^{\tau_{j+1}-2} a_r.
\]
where the last inequality used $\tau_{j+1} - \tau_j \leq B$. The index sets $\{\tau_j,\dots,\tau_{j+1}-2\}$ are disjoint subsets of $[T-1]$, so summing over $j$ yields
\[
  \tarimpgap
  =
  \sum_{j=1}^m \sum_{t=\tau_j}^{\tau_{j+1}-1} q_t
  \le
  B\sum_{t=1}^{T-1} a_t,
\]
which is exactly \eqref{eq:tc-pathwise}.
\end{proof}

The clearing argument leverages the exact coefficient of one in the recursion \eqref{eq:target_implementation_gap_queue}. If we utilized norm conversion and instead produced an inequality of the form $q_{t+1}\le C[q_t+a_t-d_t^{\min}]^+$ with $C>1$, the factor $C$ would sit inside the dynamics introducing a factor exponential in the window length $C^B$ in the worst-case. The condition $\sum a_t\le\rho$ over a window would no longer imply that one demand shock of size $\rho$ clears the queue and would instead leave residual backlog after the shock. This helps justify the necessity of norm alignment in the analysis.

\subsection{Proof of Lemma~\ref{lem:window_uppd}}

\begin{proof}
\label{proof:window_uppd}

    For each $s \in [T]$, define the event $A_s := \{d_s^{\min} < \rho\}$. Then since $d_s$ is $\mathcal{F}_s$-measurable, so is $d_s^{\min}$ and hence $A_s$. By UPPD, we have 
    \begin{equation}
        \label{eq:small_demand_bound}
        \mathbb{P}(A_s | \mathcal{F}_{s-1}) = 1-P(d_s^{\min} \geq \rho | \mathcal{F}_{s-1}) \leq 1-\mu \quad \text{a.s.}
    \end{equation}
    Assume $B\le T$; otherwise the event in the statement is empty. Fixing any  $t\in[T-B+1]$, we first show by induction on $k \in [B]$ that $\mathbb{P}(\cap_{s=t}^{t+k-1} A_s) \leq (1-\mu)^k$. The case for $k=1$ follows from \eqref{eq:small_demand_bound} and the tower property:
    \[
    \mathbb{P}(A_t) = \mathbb{E}[\mathbb{P}(A_t | \mathcal{F}_{t-1})] \leq 1-\mu
    \]
    For the inductive step, we have
    \begin{align*}
        \mathbb{P}\left(\bigcap_{s=t}^{t+k} A_s\right) & = \mathbb{E}[\1_{A_t} \cdots \1_{A_{t+k}}]\\
        & =\mathbb{E}\left[\mathbb{E}[\1_{A_t} \cdots \1_{A_{t+k}} | \mathcal{F}_{t+k-1}]\right]\\
        & = \mathbb{E}[\1_{A_t}\cdots \1_{A_{t+k-1}}\cdot \mathbb{P}(A_{t+k} | \mathcal{F}_{t+k-1})]\\
        & \leq \mathbb{E}[\1_{A_t}\cdots \1_{A_{t+k-1}}(1-\mu)]\\
        & = (1-\mu) \mathbb{P}\left(\bigcap_{s=t}^{t+k-1} A_s\right)\\
        & \leq (1-\mu)(1-\mu)^{k} = (1-\mu)^{k+1}
    \end{align*}
    where the second equality is due to the tower property, the first inequality follows from \eqref{eq:small_demand_bound}, and the last bound follows by the inductive hypothesis. This completes the induction. Now setting $k = B$ and using $1-\mu \leq e^{-\mu}$, we have
    \[
    \mathbb{P}\left(\bigcap_{s=t}^{t+B-1} A_s\right) \leq (1-\mu)^B \leq e^{-\mu B}
    \]
    The event that we wish to show has low probability is then $\mathcal{A} := \cup_{t=1}^{T-B+1}\cap_{s=t}^{t+B-1} A_s$. Combining the previous bound with a union bound, we have
    \[
    \mathbb{P}(\mathcal{A}) \leq \sum_{t=1}^{T-B+1} \mathbb{P}\left(\bigcap_{s=t}^{t+B-1} A_s\right) \leq \sum_{t=1}^{T-B+1} e^{- \mu B} \leq Te^{-\mu B}
    \]
    which completes the first part of the proof. 
    
    For the second part, since $\mu B_\delta \geq \log(T/\delta)$, we may bound $Te^{-\mu B_\delta}$ by $Te^{-\log(T/\delta)} = \delta$ thereby proving the second part of the lemma.
\end{proof}

\subsection{Proof of Theorem~\ref{thm:uppd_reduction}}
\begin{proof}
If $B_\delta\ge T$, then the loose bound \eqref{eq:queue_loose_window_bound} from the proof of Theorem~\ref{thm:target_implementation_gap_bound}, applied with $\tau=1$, gives deterministically $\tarimpgap\le T\tarmvment\le B_\delta\tarmvment$. If $B_\delta<T$, Lemma~\ref{lem:window_uppd} gives windowed clearing with $B=B_\delta$ with probability at least $1-\delta$. On this event, the movement condition Assumption~\ref{ass:base_learner}(ii) is exactly \eqref{eq:learner_window_motion} with $B=B_\delta$, so Theorem~\ref{thm:target_implementation_gap_bound} gives $\tarimpgap\le B_\delta\tarmvment$. Combining this with Lemma~\ref{lem:regret_decomp} yields $R_T(u_{1:T})\le \baseregret(u_{1:T})+G_* B_\delta\tarmvment$. If Assumption~\ref{ass:base_learner}(i) additionally holds, substituting $\baseregret(u_{1:T})\le\mathcal{R}_T(u_{1:T})$ gives the stated $\mathcal{R}$-form of the bound.
\end{proof}

\section{Omitted proofs from Section~\ref{sec:instantiations}}
\label{appendix:meta_alg_instantiations_proofs}

\subsection{General convex losses}
\label{appendix:general_convex_losses_proofs}

We first record the standard movement and regret estimates for Euclidean projected OGD:

\begin{lemma}
\label{lem:ht_ogd_proj_grad_error}
For an $\ell_2$ norm $\norm{\cdot} = \norm{\cdot}_2$, the target states $z_t$ and subgradients $g_t$ in HT-OGD (Algorithm~\ref{alg:ht_ogd}), a positive nonincreasing stepsize sequence $\eta_1\ge\eta_2\ge\cdots>0$, and a comparator $u \in \Y$, we bound the following sum of inner products:

\begin{equation}
    \sum_{t=1}^T \ip{g_t}{z_t - u} \leq \frac{D_2^2}{2\eta_T} + \frac{1}{2}\sum_{t=1}^T \eta_t \norm{g_t}_2^2
\end{equation}
    
\end{lemma}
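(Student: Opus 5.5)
The argument is the standard Zinkevich-style analysis of projected OGD on the fixed set $\Y$. The hidden-target layer plays no role here: the update $z_{t+1}=\Proj_{\Y}^{\norm{\cdot}_2}(z_t-\eta_t g_t)$ is ordinary projected OGD run on the linear losses $f_t(z)=\ip{g_t}{z}$. The inventory state enters only through which subgradient $g_t$ is observed, and the claimed bound holds for an arbitrary sequence $(g_t)$.

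\textbf{Per-round inequality.} Since $u\in\Y$ and Euclidean projection onto the closed convex set $\Y$ is nonexpansive, I would write
\[
\norm{z_{t+1}-u}_2^2\le\norm{z_t-\eta_t g_t-u}_2^2=\norm{z_t-u}_2^2-2\eta_t\ip{g_t}{z_t-u}+\eta_t^2\norm{g_t}_2^2.
\]
Rearranging gives
\[
\ip{g_t}{z_t-u}\le\frac{\norm{z_t-u}_2^2-\norm{z_{t+1}-u}_2^2}{2\eta_t}+\frac{\eta_t}{2}\norm{g_t}_2^2 .
\]
This step also covers $t=T$, where $z_{T+1}$ is simply defined by the same update. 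Alternatively, the term $-\norm{z_{T+1}-u}_2^2/(2\eta_T)$ can just be dropped.

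\textbf{Summation.} Summing over $t$, the gradient terms give exactly $\tfrac12\sum_t\eta_t\norm{g_t}_2^2$. For the distance terms, I would apply summation by parts:
\[
\sum_{t=1}^T\frac{\norm{z_t-u}_2^2-\norm{z_{t+1}-u}_2^2}{2\eta_t}\le\frac{\norm{z_1-u}_2^2}{2\eta_1}+\sum_{t=2}^T\norm{z_t-u}_2^2\Bigl(\frac{1}{2\eta_t}-\frac{1}{2\eta_{t-1}}\Bigr).
\]
Here the final term $-\norm{z_{T+1}-u}_2^2/(2\eta_T)$ has been discarded since it is nonpositive. Because the stepsizes are nonincreasing, each coefficient $1/\eta_t-1/\eta_{t-1}$ is nonnegative. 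Because $z_t,u\in\Y$, each $\norm{z_t-u}_2^2\le D_2^2$. The sum then telescopes to $D_2^2/(2\eta_T)$, which gives the stated bound.

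\textbf{Where care is needed.} There is no real obstacle. The one point to get right is the summation by parts: it uses both the monotonicity of the stepsizes and the diameter bound, and it depends on the fact that every iterate $z_t$ lies in $\Y$. That membership is guaranteed because each update ends with a projection onto $\Y$ and the initialization satisfies $z_1\in J_1\subseteq\Y$.
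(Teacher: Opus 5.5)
Your proposal is correct and follows the paper's proof essentially step for step. Both arguments use nonexpansiveness of the Euclidean projection onto $\Y$ to get the per-round inequality. Both then sum by parts, discard the nonpositive final term, and bound each $\norm{z_t-u}_2^2$ by $D_2^2$ using the monotone stepsizes, so the sum telescopes to $D_2^2/(2\eta_T)$. The only cosmetic difference is that the paper writes the inner product via the polarization identity rather than expanding the square.
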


\begin{proof}
    Using the linearity of the inner product, we may write:
    \[
    \ip{g_t}{z_t - u} = \frac{1}{2\eta_t}\left(\norm{z_t - u}_2^2 + \eta_t^2\norm{g_t}_2^2 - \norm{(z_t - u) - \eta_t g_t}_2^2\right)
    \]
    We relate this to $z_{t+1}$ using nonexpansiveness of the Euclidean projection:
    \[
    \norm{z_{t+1} - u}_2^2 = \norm{\Proj_\Y^{\norm{\cdot}_2}(z_t - \eta_t g_t) - \Proj_\Y^{\norm{\cdot}_2}(u)}_2^2 \leq \norm{(z_t - \eta_tg_t) - u}_2^2
    \]
    Combining the previous two lines results in:
    \[
    \ip{g_t}{z_t - u} \leq \frac{1}{2\eta_t}\left(\norm{z_t - u}_2^2 + \eta_t^2\norm{g_t}_2^2 - \norm{z_{t+1} - u}_2^2\right)
    \]
    Summing over $t \in [T]$ yields:

    \begin{align*}
        \sum_{t=1}^T \ip{g_t}{z_t - u} & \leq \sum_{t=1}^T \frac{1}{2\eta_t}\left(\norm{z_t - u}_2^2 - \norm{z_{t+1} - u}_2^2\right) + \frac{1}{2}\sum_{t=1}^T \eta_t \norm{g_t}_2^2\\
        & = \frac{1}{2}\left(\frac{\norm{z_1-u}_2^2}{\eta_1} - \frac{\norm{z_{T+1} - u}_2^2}{\eta_{T}} + \sum_{t=2}^{T} \left(\frac{1}{\eta_t} - \frac{1}{\eta_{t-1}}\right) \norm{z_t - u}_2^2 \right)\\
        & + \frac{1}{2}\sum_{t=1}^T \eta_t \norm{g_t}_2^2\\
        & \leq \frac{1}{2}\left(\frac{D_2^2}{\eta_1} + \sum_{t=2}^T \left(\frac{1}{\eta_t} - \frac{1}{\eta_{t-1}}\right)D_2^2\right) + \frac{1}{2}\sum_{t=1}^T \eta_t \norm{g_t}_2^2\\
        & = \frac{D_2^2}{2\eta_T} + \frac{1}{2}\sum_{t=1}^T \eta_t \norm{g_t}_2^2\\
    \end{align*}
    where the last inequality used the monotonicity of the stepsizes and Assumption~\ref{ass:convex_and_bounded}.
\end{proof}

To build to regret bound under UPPD, we first prove the following regret bound under a window clearing assumption:

\begin{proposition}[OGD verifies the base learner conditions on the Euclidean queue]
\label{prop:ogd_base_learner}
Take the projection norm to be Euclidean, $\norm{\cdot}=\norm{\cdot}_2$, and write $D_2$ and $G_2$ for the capacity-set diameter and dual gradient bound from Assumption~\ref{ass:convex_and_bounded} specialized to the $\ell_2$ queue. Fix $\delta\in(0,1)$, set $B_\delta:=\lceil\mu^{-1}\log(T/\delta)\rceil$, and choose a learning-rate parameter $\gamma$ and step sizes $\eta_t$ according to
\begin{equation}
\label{eq:ht_ogd_lr_choice}
  0 < \gamma \le \frac{\rho}{2D_2\sqrt{B_\delta}},
  \qquad
  \eta_t = \frac{\gamma D_2}{G_2\sqrt{t}}.
\end{equation}
Then HT-OGD (Algorithm~\ref{alg:ht_ogd}), initialized from any $z_1\in J_1$, produces a target sequence $(z_t)_{t=1}^T\subset\Y$ satisfying Assumption~\ref{ass:base_learner}:
\begin{enumerate}[label=(\roman*),leftmargin=2em,itemsep=2pt,topsep=2pt]
  \item Linearized regret bound holds with
  \[
    \mathcal{R}_T(u) := D_2 G_2\Bigl(\tfrac{1}{2\gamma}+\gamma\Bigr)\sqrt{T}
  \]
  for every static comparator $u\in\Y$.
  \item Windowed movement bound holds: for every $t$ with $1\le t\le T-B_\delta$,
  \[
    \sum_{r=t}^{t+B_\delta-1}\norm{z_{r+1}-z_r}_2 \le \rho.
  \]
\end{enumerate}
Moreover, the cumulative target movement satisfies
\begin{equation}
\label{eq:ht_ogd_movement_bound}
  \tarmvment \le 2\gamma D_2\sqrt{T}.
\end{equation}
\end{proposition}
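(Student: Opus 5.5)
The plan is to verify the three claims separately. All of them rest on the elementary per-step movement bound for projected OGD: since $z_t\in\Y$ and the Euclidean projection onto $\Y$ is nonexpansive,
\[
  \norm{z_{t+1}-z_t}_2=\norm{\Proj_\Y^{\norm{\cdot}_2}(z_t-\eta_t g_t)-\Proj_\Y^{\norm{\cdot}_2}(z_t)}_2\le \eta_t\norm{g_t}_2\le \eta_t G_2=\frac{\gamma D_2}{\sqrt t}.
\]

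For (i), I would invoke Lemma~\ref{lem:ht_ogd_proj_grad_error}. The stepsizes $\eta_t=\gamma D_2/(G_2\sqrt t)$ are positive and nonincreasing, so the lemma applies. The first term is $D_2^2/(2\eta_T)=D_2G_2\sqrt T/(2\gamma)$. For the second term, use $\norm{g_t}_2\le G_2$ and the standard estimate $\sum_{t=1}^T t^{-1/2}\le 2\sqrt T$, which gives
\[
  \frac12\sum_t\eta_t G_2^2\le \frac12\,\gamma D_2 G_2\cdot 2\sqrt T=\gamma D_2G_2\sqrt T.
\]
Adding the two terms gives $\mathcal R_T(u)=D_2G_2(\tfrac1{2\gamma}+\gamma)\sqrt T$.

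The cumulative movement bound \eqref{eq:ht_ogd_movement_bound} follows by summing the per-step bound:
\[
  \tarmvment\le\gamma D_2\sum_{t=1}^{T-1}t^{-1/2}\le 2\gamma D_2\sqrt T.
\]

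For (ii), the window starting at $t$ has total movement at most $\gamma D_2\sum_{r=t}^{t+B_\delta-1} r^{-1/2}$. Since $r^{-1/2}$ is decreasing, the worst window is the first one, $t=1$. I would bound that sum with the integral estimate $\sum_{r=1}^{B}r^{-1/2}\le 2\sqrt B$, so every window satisfies
\[
  \sum_{r=t}^{t+B_\delta-1}\norm{z_{r+1}-z_r}_2\le 2\gamma D_2\sqrt{B_\delta}\le\rho,
\]
where the last inequality is exactly the constraint $\gamma\le\rho/(2D_2\sqrt{B_\delta})$. This is the only step with any subtlety. The point is to recognize that the decaying stepsize makes the early windows the binding ones, and that the constraint on $\gamma$ has been calibrated to the $\sqrt{B_\delta}$ scale rather than $B_\delta$. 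Everything else is routine OGD bookkeeping.
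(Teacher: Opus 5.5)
Your proposal is correct and follows essentially the same route as the paper. Nonexpansiveness of the Euclidean projection gives the per-step bound $\norm{z_{t+1}-z_t}_2\le\gamma D_2/\sqrt{t}$; summing it over a window (dominated by the first window, hence $\le 2\gamma D_2\sqrt{B_\delta}\le\rho$) and over the horizon gives (ii) and \eqref{eq:ht_ogd_movement_bound}, and part (i) follows from Lemma~\ref{lem:ht_ogd_proj_grad_error} together with $\sum_{t\le T}t^{-1/2}\le 2\sqrt{T}$.
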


\begin{proof}
We start with a proof that HT-OGD satisfies the windowed movement bound (Assumption~\ref{ass:base_learner}(ii)) and establish a bound on $\tarmvment$. Then, we apply Lemma~\ref{lem:ht_ogd_proj_grad_error} to establish the linearized regret bound (Assumption~\ref{ass:base_learner}(i)).

\textbf{Movement and part (ii).} By nonexpansiveness of the $\ell_2$ projection and \eqref{eq:ht_ogd_lr_choice}, for every $t$,
\[
  \norm{z_{t+1}-z_t}_2
  = \norm{\Proj_{\Y}^{\norm{\cdot}_2}(z_t-\eta_t g_t)-\Proj_{\Y}^{\norm{\cdot}_2}(z_t)}_2
  \le \eta_t\norm{g_t}_2
  \le \eta_t G_2
  = \frac{\gamma D_2}{\sqrt{t}}.
\]
For any $t$ with $1\le t\le T-B_\delta$,
\[
  \sum_{r=t}^{t+B_\delta-1}\norm{z_{r+1}-z_r}_2
  \le \gamma D_2\sum_{r=t}^{t+B_\delta-1}\frac{1}{\sqrt{r}}
  \le \gamma D_2\sum_{r=1}^{B_\delta}\frac{1}{\sqrt{r}}
  \le 2\gamma D_2\sqrt{B_\delta}
  \le \rho,
\]
where the last inequality uses \eqref{eq:ht_ogd_lr_choice}, proving part (ii). Using the identical argument, summing the per-step bound $\norm{z_{t+1} - z_t}_2 = \gamma D_2 / \sqrt{t}$ similarly over $t \in [T-1]$ gives $\tarmvment\le 2\gamma D_2\sqrt{T}$ and proves \eqref{eq:ht_ogd_movement_bound}.

\textbf{Part (i).} By Lemma~\ref{lem:ht_ogd_proj_grad_error} and Assumption~\ref{ass:convex_and_bounded},
\begin{align*}
  \baseregret(u)
  = \sum_{t=1}^T\ip{g_t}{z_t-u}
  & \le \frac{D_2^2}{2\eta_T}+\frac{1}{2}\sum_{t=1}^T\eta_t G_2^2\\
  & = \frac{D_2 G_2\sqrt{T}}{2\gamma}+\frac{\gamma D_2 G_2}{2}\sum_{t=1}^T\frac{1}{\sqrt{t}}\\
  &\le D_2 G_2\Bigl(\tfrac{1}{2\gamma}+\gamma\Bigr)\sqrt{T}\\
\end{align*}
using $\sum_{t=1}^T 1/\sqrt{t}\le 2\sqrt{T}$.
\end{proof}

\begin{proof}[Proof of Theorem~\ref{thm:ht_ogd_convex_uppd}]
\label{proof:ht_ogd_convex_uppd}
By Proposition~\ref{prop:ogd_base_learner}, the OGD target sequence satisfies Assumption~\ref{ass:base_learner} with $\mathcal{R}_T(u)=D_2 G_2\bigl(\tfrac{1}{2\gamma}+\gamma\bigr)\sqrt{T}$, and the cumulative target movement is bounded by $\tarmvment\le 2\gamma D_2\sqrt{T}$ via \eqref{eq:ht_ogd_movement_bound}. Then Theorem~\ref{thm:uppd_reduction} applies and gives with probability at least $1-\delta$ for every $u\in\Y$,
\[
  R_T(u)
  \le \mathcal{R}_T(u) + G_2 B_\delta\,\tarmvment
  \le D_2 G_2\Bigl(\frac{1}{2\gamma}+\gamma+2\gamma B_\delta\Bigr)\sqrt{T}.
\]
Optimizing the inner expression, $f(\gamma):=\tfrac{1}{2\gamma}+\gamma(1+2B_\delta)$ over $\gamma>0$, the unconstrained minimum is at $\gamma^\star=1/\sqrt{2(1+2B_\delta)}$, giving $R_T \leq D_2G_2\sqrt{2(1+2B_\delta)T}$. This $\gamma^\star$ satisfies the condition $\gamma\le\rho/(2D_2\sqrt{B_\delta})$ whenever $D_2\le\rho$ (and more generally whenever $2B_\delta(D_2^2-\rho^2)\le\rho^2$). When the constraint binds, the boundary choice $\gamma=\rho/(2D_2\sqrt{B_\delta})$ gives 

\[
f(\gamma) = \frac{D_2\sqrt{B_\delta}}{\rho} + \frac{\rho (1+2B_\delta)}{2D_2\sqrt{B_\delta}} \leq \left(\frac{D_2}{\rho} + \frac{3\rho}{2D_2}\right)\sqrt{B_\delta}
\]

using $1\le B_{\delta}$ in the second inequality. This gives the stated regret bound:
\[
  R_T(u)\le D_2 G_2\left(\frac{D_2}{\rho} + \frac{3\rho}{2D_2}\right)\sqrt{B_\delta T} =\widetilde O\!\left(D_2 G_2\sqrt{T/\mu}\right).
\]

For the expectation bound, take $\delta=1/T$ and $\gamma=\rho/(2D_2\sqrt{B_{1/T}})$, which satisfies the hypothesis of Proposition~\ref{prop:ogd_base_learner}. Let $E_T$ denote the event that the high-probability bound holds, so $\mathbb{P}(E_T)\ge 1-1/T$. On $E_T$,
\[
  R_T(u)
  \le D_2 G_2\left(\frac{D_2}{\rho}+\frac{3\rho}{2D_2}\right)\sqrt{B_{1/T}T}
\]
 On the complement, we use the crude bound $R_T(u)\le D_2 G_2 T$. Combining,
\[
  \mathbb{E}\bigl[R_T(u)\bigr]
  \le D_2 G_2\Bigl(\frac{D_2}{\rho}+\frac{3\rho}{2D_2}\Bigr)\sqrt{B_{1/T}T}
  + D_2 G_2,
\]
which proves the claim.
\end{proof}

\subsection{A matching lower bound under UPPD}
\label{sec:lower}

\begin{proof}[Proof of Theorem~\ref{thm:convex_uppd_lower_bound}]
As previously observed, a no-demand round on the line segment $\Y_L$ freezes the action. Bernoulli demand creates random blocks on which the learner must commit to a single point of $\Y_L$.

Formally, take $x_1=(0,0)$ and let $\xi_1,\dots,\xi_T$ be i.i.d. Bernoulli$(\mu)$. Set
\[
  d_t := \xi_t(1,1),
  \qquad
  x_{t+1}=(y_t-d_t)^+
\]
Then $d_t^{\min}\in\{0,1\}$ and
\[
  \mathbb P(d_t^{\min}\ge 1 \mid \F_{t-1}) = \mu,
\]
so UPPD (Assumption~\ref{ass:uppd}) holds with $\rho=1$. If $\xi_t=1$, then $x_{t+1}=(0,0)$. If $\xi_t=0$, then $x_{t+1}=y_t$. Then for $y_{t+1}$ to be a feasible action, we must have $y_{t+1}\in\Y_L$ and $y_{t+1}\succeq y_t$. Since both vectors have coordinate sum $1$, this forces $y_{t+1}=y_t$. Thus, the horizon is partitioned into blocks between clearings where $\xi_t = 1$, and the learner plays a constant action on each block. So, we count the number of these blocks:
\[
  N:=\sum_{t=1}^{T-1}\xi_t,
  \qquad
  K:=N+1,
\]
and let $L_1,\dots,L_K$ be the corresponding block lengths, so $\sum_{j=1}^K L_j=T$. Set $t_{K+1}:=T+1$. Conditional on the demand path, we now generate the losses block by block. When block $j$ begins, after the learner has committed to its action for that block and before the loss and feedback from the first round of the block are revealed, draw an independent Rademacher sign $\sigma_j\in\{-1,+1\}$ and keep that sign fixed throughout the block. For every round $t$ in block $j$, define
\[
  \ell_t(y)
  :=
  \frac{1-\sigma_j}{2}\,y_1 + \frac{1+\sigma_j}{2}\,y_2
\]
These losses are linear and convex, with gradient $e_1 = (1, 0)$ when $\sigma_j=-1$ and $e_2 = (0, 1)$ when $\sigma_j=+1$, hence $\|\nabla \ell_t\|_2=1$ so we may choose $G = 1$.

Since the learner is frozen within block $j$, it plays some point
$a_j = (a_{j,1}, a_{j,2}) \in \Y_L$ throughout that block, and
$a_{j,1} + a_{j,2} = 1$. The point $a_j$ is $\mathcal{F}_{t_j - 1}$-measurable:
it depends only on the demand realizations $d_{1:t_j - 1}$ before block $j$,
the earlier block signs $\sigma_{1:j-1}$ (revealed through past feedback),
and the learner's internal randomness. By construction, $\sigma_j$ is drawn
independently of $\mathcal{F}_{t_j - 1}$.

To compute the expected total loss, we condition on the demand path. Let
$\mathcal{G} := \sigma(\xi_{1:T})$, which determines the block count $K$,
the block start times $t_1, \dots, t_K$, and the block lengths
$L_1, \dots, L_K$. By construction, the signs $\sigma_1, \dots, \sigma_K$
are independent of $\mathcal{G}$ and of each other. Within block $j$, every
round $t \in \{t_j, \dots, t_{j+1} - 1\}$ satisfies $y_t = a_j$, so
\[
  \sum_{t = t_j}^{t_{j+1} - 1} \ell_t(y_t)
  \;=\; L_j \!\left( \tfrac{1 - \sigma_j}{2}\, a_{j,1}
                  + \tfrac{1 + \sigma_j}{2}\, a_{j,2} \right)
  \;=\; L_j \!\left( \tfrac{1}{2}
                  + \tfrac{a_{j,2} - a_{j,1}}{2}\, \sigma_j \right)
\]
Conditioning on $\mathcal{G}$ and on $\sigma_{1:j-1}$, the action $a_j$ is
determined (up to learner randomness, which we average over), and $\sigma_j$
remains independent and uniform on $\{-1, +1\}$. Therefore
\[
  \mathbb{E}\!\left[ \sum_{t = t_j}^{t_{j+1} - 1} \ell_t(y_t)
                    \,\Big|\, \mathcal{G}, \sigma_{1:j-1} \right]
  \;=\; \frac{L_j}{2}
  \;+\; \frac{a_{j,2} - a_{j,1}}{2}\, L_j\, \mathbb{E}[\sigma_j \mid \mathcal{G}, \sigma_{1:j-1}]
  \;=\; \frac{L_j}{2}
\]
Since the right-hand side is $\mathcal{G}$-measurable, integrating out
$\sigma_{1:j-1}$ leaves it unchanged. Summing over $j \in [K]$ then gives,
\[
  \mathbb{E} \left[\sum_{t=1}^T \ell_t(y_t)\right] = \mathbb{E}\left[\mathbb{E}\!\left[ \sum_{t=1}^T \ell_t(y_t) \,\Big|\, \mathcal{G} \right]\right]
  \;=\; \mathbb{E}\left[\sum_{j=1}^K \frac{L_j}{2}\right]
  \;=\; \frac{T}{2}
\]
Thus, any learner incurs loss $T/2$ in expectation. To bound the regret, we proceed with evaluating the loss of a comparator. Define
\[
  S := \sum_{j=1}^K L_j\sigma_j
\]
For any fixed comparator $u=(u_1,u_2)\in\Y$, where $u_2=1-u_1$,
\[
  \sum_{t=1}^T \ell_t(u)
  =
  \sum_{j=1}^K L_j\left(\frac{1-\sigma_j}{2}u_1 + \frac{1+\sigma_j}{2}u_2\right)
  =
  \frac{T+S}{2} - u_1S
\]
Hence the best static comparator is $u^\star=(1,0)$ when $S\ge 0$ and $u^\star=(0,1)$ when $S<0$, so
\[
  \min_{u\in\Y} \sum_{t=1}^T \ell_t(u)
  =
  \frac{T}{2} - \frac{|S|}{2}
\]
Therefore, the expected regret is
\[
  \mathbb E\!\left[\sum_{t=1}^T \ell_t(y_t) - \min_{u\in\Y}\sum_{t=1}^T \ell_t(u)\right]
  =
  \frac{1}{2}\mathbb E|S|
\]

It remains to lower bound $\mathbb E|S|$. Conditional on the block lengths $L_{1:K}$, $S$ is a Rademacher sum with
\[
  \mathbb E[S^2 \mid L_{1:K}] = \sum_{j=1}^K L_j^2 =: V
\]
and
\[
  \mathbb E[S^4 \mid L_{1:K}]
  =
  3V^2 - 2\sum_{j=1}^K L_j^4
  \le 3V^2
\]
Applying Paley--Zygmund to $S^2$ with $\theta = 1/2$ yields
\[
  \mathbb P\!\left(S^2 \ge \frac{V}{2} \middle| L_{1:K}\right)
  \ge
  (1/2)^2\cdot\frac{V^2}{\mathbb E[S^4|L_{1:K}]}
  \ge
  \frac{1}{4}\cdot\frac{V^2}{3V^2}
  = \frac{1}{12}
\]
Now on the event $\{S^2 \geq V/2\}$, we have $\abs{S} \geq \sqrt{V/2}$ so
\[
  \mathbb E[|S| \mid L_{1:K}]
  \ge
  \mathbb E[|S| \cdot \1_{\{S^2 \geq V/2\}} \mid L_{1:K}] \geq \mathbb P(S^2 \ge V/2\mid L_{1:K})\sqrt{\frac{V}{2}} \geq  
  \frac{1}{12}\sqrt{\frac{V}{2}}
\]

Now we have a lower bound on the conditional expectation $\mathbb E[|S| \mid L_{1:K}]$. To turn this into a bound on the unconditional expectation $\mathbb{E}|S|$, we restrict to an event on which $V$ is easy to control. To start, note that $\mathbb E[N]=(T-1)\mu\le \mu T$, so Markov's inequality gives $\mathbb P (N \geq 2\mu T) \leq 1/2$, which implies
\[
  \mathbb P(N\le 2\mu T) \ge \frac{1}{2}
\]
Equivalently, the event $\{K \leq 2\mu T +1\}$ has probability at least 1/2. Because $T\ge \mu^{-1}$, then the event $E := \{K \le 3\mu T\}$ also has probability at least $1/2$. On $E$, Cauchy--Schwarz gives
\[
  V = \sum_{j=1}^K L_j^2 \ge \frac{T^2}{K} \ge \frac{T}{3\mu}
\]
Therefore
\[
  \mathbb E|S|
  =
  \mathbb E\!\left[\mathbb E[\abs{S}\mid L_{1:K}]\right]
  \ge
  \frac{1}{12\sqrt2}\,\mathbb E[\sqrt{V}]
  \ge
  \frac{1}{12\sqrt2}\,\mathbb E[\sqrt{V}\,\1_E]
  \ge
  \frac{1}{24\sqrt6}\sqrt{\frac{T}{\mu}}
\]
Substituting into the regret identity proves the claim with $c=1/(48\sqrt6)$.
\end{proof}

Theorem~\ref{thm:convex_uppd_lower_bound} already applies on a line segment, namely the one-dimensional simplex $\{y\in\R_+^2 \mid y_1+y_2=1\}$. We use this geometry for a reason: on the interval $[0,1]$, a no-demand round only enforces $y_{t+1}\ge y_t$, whereas on the simplex it forces equality and hence true frozen blocks. The theorem is also genuinely different from Theorem~5 of \citet{ichikawa2026nonstationary}. Their construction fixes deterministic cycles of length $L_{\max}$ and proves an $\Omega(\sqrt{L_{\max}T})$ barrier through that fixed-cycle geometry. Here the cycle lengths are random and induced by Bernoulli clearings, and the proof uses the random signed block sum $S$ tied directly to UPPD. Appendix~\ref{app:lmax} explains how UPPD relates to the sell-out window $L_{\max}$; the two lower bounds address different primitives.

\subsection{Strongly convex losses}
\label{appendix:strong_convex_losses_proofs}

\begin{lemma}[Per-step inequality under strong convexity]
\label{lem:per_step_sc}
Suppose $\ell_t$ is $\alpha$-strongly convex on $\Y$ with respect to $\norm{\cdot}$. For $y_t = \Projnorm{J_t}{z_t}$, $q_t = \norm{z_t - y_t}$, and every $u \in \Y$,
\begin{equation}
\label{eq:per_step_sc}
  \ell_t(y_t) - \ell_t(u) \le \ip{g_t}{z_t - u} + G_*q_t - \frac{\alpha}{4}\norm{z_t - u}^2 + \frac{\alpha}{2}q_t^2
\end{equation}
\end{lemma}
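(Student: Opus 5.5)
The plan is to start from the strong-convexity inequality at the implemented point and then move every quantity from $y_t$ to the hidden target $z_t$, paying for each move with $q_t$. Since $\ell_t$ is $\alpha$-strongly convex with respect to $\norm{\cdot}$ and $g_t\in\partial\ell_t(y_t)$, for every $u\in\Y$ we have
\[
  \ell_t(y_t)-\ell_t(u)\le \ip{g_t}{y_t-u}-\frac{\alpha}{2}\norm{y_t-u}^2 .
\]
This is the standard first-order characterization of strong convexity; $y_t\in J_t\subseteq\Y$, so it applies. The linear term splits exactly as in the proof of Lemma~\ref{lem:regret_decomp}: $\ip{g_t}{y_t-u}=\ip{g_t}{z_t-u}+\ip{g_t}{y_t-z_t}$. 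The generalized H\"older inequality with $\dnorm{g_t}\le G_*$ bounds the second piece by $G_*\norm{y_t-z_t}=G_*q_t$.

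The remaining task, and the only step needing care, is to lower bound $\norm{y_t-u}^2$ by the target quantity $\norm{z_t-u}^2$ with the constant $1/2$, up to a $q_t^2$ correction. By the triangle inequality, $\norm{z_t-u}\le \norm{y_t-u}+q_t$. Squaring and applying $(a+b)^2\le 2a^2+2b^2$ gives $\norm{z_t-u}^2\le 2\norm{y_t-u}^2+2q_t^2$, that is,
\[
  \norm{y_t-u}^2\ge \tfrac12\norm{z_t-u}^2-q_t^2 .
\]
Multiplying by $-\alpha/2$ yields
\[
  -\frac{\alpha}{2}\norm{y_t-u}^2\le -\frac{\alpha}{4}\norm{z_t-u}^2+\frac{\alpha}{2}q_t^2 ,
\]
which is precisely the pair of quadratic terms in \eqref{eq:per_step_sc}. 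Note that the factor $1/4$ in \eqref{eq:sc_regret_condition} is an artifact of this loss of a factor $2$.

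Combining the three displays gives \eqref{eq:per_step_sc}. Two observations support this. First, no inventory structure is used: the only property of $y_t$ needed is $y_t\in\Y$ and $q_t=\norm{z_t-y_t}$, the latter because the projection is taken in the same norm. Second, the strong convexity being with respect to the aligned norm $\norm{\cdot}$ is what lets the triangle-inequality step be carried out without norm-conversion constants. I expect no real obstacle. The only point to watch is using a subgradient-based strong convexity inequality, which holds for any $g_t\in\partial\ell_t(y_t)$ when $\ell_t$ is $\alpha$-strongly convex on the convex set $\Y$.
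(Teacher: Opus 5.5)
Your proposal is correct and follows essentially the same argument as the paper. Both use the first-order strong-convexity inequality at $y_t$, split $\ip{g_t}{y_t-u}$ and apply H\"older to get $G_*q_t$, and convert $-\frac{\alpha}{2}\norm{y_t-u}^2$ into $-\frac{\alpha}{4}\norm{z_t-u}^2+\frac{\alpha}{2}q_t^2$ via the triangle inequality and $(a+b)^2\le 2a^2+2b^2$.
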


\begin{proof}
By $\alpha$-strong convexity of $\ell_t$ at $y_t$, 
\[
\ell_t(y_t) - \ell_t(u) \le \ip{g_t}{y_t - u} - \frac{\alpha}{2}\norm{y_t - u}^2
\]
Then we may split the inner product $\ip{g_t}{y_t - u} = \ip{g_t}{z_t - u} + \ip{g_t}{y_t - z_t}$. Then by the generalized Hölder's inequality, we may bound 
\[
\ip{g_t}{y_t - z_t} \le \dnorm{g_t}\norm{y_t - z_t} \le G_*q_t
\]
where the last inequality used the subgradient dual bound. The triangle inequality combined with $(a+b)^2 \le 2a^2 + 2b^2$ gives 
\[
\norm{z_t - u}^2 \le 2\norm{y_t - u}^2 + 2q_t^2
\]
Hence $-\frac{\alpha}{2}\norm{y_t - u}^2 \le -\frac{\alpha}{4}\norm{z_t - u}^2 + \frac{\alpha}{2}q_t^2$. Combining the previous inequalities gives the claim.
\end{proof}

\begin{proof}[Proof of Corollary~\ref{cor:strong_convex_reduction}]
Summing~\eqref{eq:per_step_sc} over $t = 1, \ldots, T$ gives
\[
  R_T(u_{1:T}) \le \sum_{t=1}^T \Bigl(\ip{g_t}{z_t - u_t} - \frac{\alpha}{4}\norm{z_t - u_t}^2\Bigr) + G_*\tarimpgap + \frac{\alpha}{2}\sum_{t=1}^T q_t^2
\]
The first term is at most $\mathcal{R}_T^{\textnormal{sc}}(u_{1:T}; \alpha)$ by~\eqref{eq:sc_regret_condition}. Since $q_t \le D$ by Assumption~\ref{ass:convex_and_bounded}, then
\[
\sum_{t=1}^T q_t^2 \le D \sum_{t=1}^T q_t = D\tarimpgap
\]
Combining, we rewrite the regret bound as:

\[
R_T(u_{1:T}) \le \mathcal{R}_T^{\textnormal{sc}}(u_{1:T}; \alpha) + \left(G_* + \frac{\alpha D}{2}\right)\tarimpgap
\]
Then \eqref{eq:tc-uppd} of Theorem~\ref{thm:uppd_reduction} applies under Assumption~\ref{ass:base_learner}(ii) and gives $\tarimpgap \le B_\delta \tarmvment$ with probability at least $1-\delta$. Applying this bound yields~\eqref{eq:sc_reduction_bound}.
\end{proof}

\begin{proposition}[HT-OGD base learner under strong convexity]
\label{prop:ht_ogd_sc_base_learner}
Suppose Assumption~\ref{ass:convex_and_bounded} holds and each $\ell_t$ is $\alpha$-strongly convex on $\Y$ with respect to $\norm{\cdot}_2$. Fix $\delta \in (0, 1)$, set $B_\delta := \lceil \mu^{-1}\log(T/\delta) \rceil$, and define
\[
  s_\delta := \max\!\left\{1, \left\lceil \frac{B_\delta}{e^{\alpha\rho/(2G_2)} - 1}\right\rceil\right\}, \qquad \eta_t := \frac{2}{\alpha(t + s_\delta)}.
\]
Then HT-OGD (Algorithm~\ref{alg:ht_ogd}) initialized from any $z_1 \in J_1$ produces a target sequence $(z_t)_{t=1}^T \subset \Y$ satisfying Assumption~\ref{ass:base_learner}(ii) and the strongly convex linearized regret bound~\eqref{eq:sc_regret_condition} with
\begin{equation}
\label{eq:ht_ogd_sc_regret_functional}
  \mathcal{R}_T^{\textnormal{sc}}(u; \alpha) := \frac{\alpha s_\delta D_2^2}{4} + \frac{G_2^2}{\alpha}\log\!\Bigl(1 + \frac{T}{s_\delta}\Bigr).
\end{equation}
Moreover, the cumulative target movement is bounded by
\begin{equation}
\label{eq:ht_ogd_sc_movement_bound}
  \tarmvment \le \frac{2G_2}{\alpha} \log\!\Bigl(1 + \frac{T}{s_\delta}\Bigr).
\end{equation}
\end{proposition}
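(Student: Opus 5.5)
The proof has three parts: a per-step movement bound, a windowed sum using the offset $s_\delta$, and the standard strongly convex OGD telescoping carrying the $-\tfrac{\alpha}{4}\norm{z_t-u}^2$ term.

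\textbf{Per-step movement and cumulative movement.} Nonexpansiveness of the Euclidean projection onto $\Y$, exactly as in Proposition~\ref{prop:ogd_base_learner}, gives
\[
\norm{z_{t+1}-z_t}_2\le \eta_t G_2=\frac{2G_2}{\alpha(t+s_\delta)}.
\]
Summing over $t\in[T-1]$ and comparing with an integral gives
\[
\sum_{t=1}^{T-1}\frac{1}{t+s_\delta}\le \int_{s_\delta}^{T+s_\delta}\frac{dx}{x}=\log\bigl(1+T/s_\delta\bigr),
\]
which is \eqref{eq:ht_ogd_sc_movement_bound}.

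\textbf{Windowed movement bound.} The same integral comparison bounds the window sum starting at $t\ge1$:
\[
\sum_{r=t}^{t+B_\delta-1}\frac{2G_2}{\alpha(r+s_\delta)}\le \frac{2G_2}{\alpha}\log\Bigl(\frac{t+s_\delta+B_\delta-1}{t+s_\delta-1}\Bigr)\le \frac{2G_2}{\alpha}\log\Bigl(1+\frac{B_\delta}{s_\delta}\Bigr).
\]
The worst case is $t=1$. By construction, $s_\delta\ge B_\delta/(e^{\alpha\rho/(2G_2)}-1)$, so $1+B_\delta/s_\delta\le e^{\alpha\rho/(2G_2)}$ and the window sum is at most $\rho$. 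This verifies Assumption~\ref{ass:base_learner}(ii). The offset $s_\delta$ exists precisely to make this step work. Without it, the $1/t$ schedule moves too far in the first window. With it, the regret pays an extra $\alpha s_\delta D_2^2/4$ term, and since $s_\delta=O(B_\delta)$ this stays within the claimed rate.

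\textbf{Strongly convex linearized regret.} I will run the argument of Lemma~\ref{lem:ht_ogd_proj_grad_error} with a fixed $u$. Nonexpansiveness gives, for each $t$,
\[
\ip{g_t}{z_t-u}\le \frac{\norm{z_t-u}_2^2-\norm{z_{t+1}-u}_2^2}{2\eta_t}+\frac{\eta_t}{2}G_2^2.
\]
I subtract $\tfrac{\alpha}{4}\norm{z_t-u}_2^2$ from each term and sum over $t$. Abel summation then leaves coefficients of $\norm{z_t-u}_2^2$ equal to $\tfrac{1}{2\eta_t}-\tfrac{1}{2\eta_{t-1}}-\tfrac{\alpha}{4}$ for $t\ge2$. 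Since $1/\eta_t=\alpha(t+s_\delta)/2$, this coefficient is $\tfrac{\alpha}{4}-\tfrac{\alpha}{4}=0$. The factor $4$ in \eqref{eq:sc_regret_condition} is matched by the $2/\alpha$ stepsize constant, which is why this choice of stepsize is natural.

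The first coefficient is $\tfrac{1}{2\eta_1}-\tfrac{\alpha}{4}=\tfrac{\alpha(1+s_\delta)}{4}-\tfrac{\alpha}{4}=\tfrac{\alpha s_\delta}{4}$. Multiplied by $\norm{z_1-u}_2^2\le D_2^2$, this gives $\alpha s_\delta D_2^2/4$. I drop the final term $-\norm{z_{T+1}-u}_2^2/(2\eta_T)$ since it is nonpositive. The gradient terms sum to
\[
\frac{G_2^2}{2}\sum_{t=1}^T\eta_t=\frac{G_2^2}{\alpha}\sum_{t=1}^T\frac{1}{t+s_\delta}\le\frac{G_2^2}{\alpha}\log\bigl(1+T/s_\delta\bigr),
\]
using the same integral comparison as above. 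Together these give \eqref{eq:ht_ogd_sc_regret_functional}.

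The only delicate point is the windowed bound, specifically getting the integral comparison sharp enough to match the exact form of $s_\delta$. Everything else is routine.
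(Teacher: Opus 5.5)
Your proposal is correct and follows essentially the same argument as the paper. In both, nonexpansiveness gives the per-step bound $\eta_t G_2$, and the window sum is bounded by $\log(1+B_\delta/s_\delta)\le \alpha\rho/(2G_2)$; the paper gets this by shifting the window to start at $r=1$ rather than by your explicit integral comparison, but it is the same estimate. The strongly convex telescoping is also the same: $\frac{1}{2\eta_t}-\frac{1}{2\eta_{t-1}}=\frac{\alpha}{4}$ exactly cancels the curvature term and leaves only $\frac{\alpha s_\delta}{4}\norm{z_1-u}_2^2$.
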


\begin{proof}
We first verify the windowed movement bound (Assumption~\ref{ass:base_learner}(ii)) and the cumulative movement bound~\eqref{eq:ht_ogd_sc_movement_bound}. By nonexpansiveness of the Euclidean projection,
\[
  \norm{z_{t+1} - z_t}_2 = \norm{\Proj_\Y^{\norm{\cdot}_2}(z_t - \eta_t g_t) - \Proj_\Y^{\norm{\cdot}_2}(z_t)}_2 \le \norm{\eta_t g_t}_2 \le \eta_t G_2 = \frac{2G_2}{\alpha(t + s_\delta)}.
\]

For any window of length $B_\delta$ starting at $t \in [T-B_\delta]$, using the bound $\sum_{n=1}^N \frac{1}{n + A} \le \log(1 + N/A)$ gives
\[
  \sum_{r=t}^{t+B_\delta-1} \norm{z_{r+1} - z_r}_2 \le \frac{2G_2}{\alpha} \sum_{r=t}^{t+B_\delta-1} \frac{1}{r+s_\delta} \le \frac{2G_2}{\alpha} \sum_{r=1}^{B_\delta} \frac{1}{r+s_\delta} \le \frac{2G_2}{\alpha} \log\!\Bigl(1 + \frac{B_\delta}{s_\delta}\Bigr) \le \rho,
\]
where the last inequality follows from the fact that $s_\delta \ge B_\delta/(e^{\alpha\rho/(2G_2)} - 1)$ and rearranging for $\rho$. This proves Assumption~\ref{ass:base_learner}(ii). Using the same argument but summing over $t \in [T-1]$ rather than a window of length $B_\delta$ proves~\eqref{eq:ht_ogd_sc_movement_bound}.

It remains to establish the strongly convex linearized regret bound~\eqref{eq:sc_regret_condition} with $\mathcal{R}_T^{\textnormal{sc}}(u; \alpha)$ defined in \eqref{eq:ht_ogd_sc_regret_functional}. By the OGD update and non-expansiveness of the Euclidean projection,
\[
  \norm{z_{t+1} - u}_2^2 \le \norm{z_t - \eta_t g_t - u}_2^2 = \norm{z_t - u}_2^2 - 2\eta_t \ip{g_t}{z_t - u} + \eta_t^2 \norm{g_t}_2^2,
\]
and by bounding $\norm{g_t}_2$ with its gradient bound, this rearranges to
\[
  \ip{g_t}{z_t - u} \le \frac{\norm{z_t - u}_2^2 - \norm{z_{t+1} - u}_2^2}{2\eta_t} + \frac{\eta_t G_2^2}{2}.
\]
It remains to bound the sum of $\ip{g_t}{z_t - u}$ over $t \in [T]$. By our choice of stepsize $\eta_t = 2/(\alpha(t+s_\delta))$, we have the identity
\[
  \frac{1}{2\eta_t} - \frac{1}{2\eta_{t-1}} = \frac{\alpha(t+s_\delta) - \alpha(t-1+s_\delta)}{4} = \frac{\alpha}{4},
\]
which lets us telescope the sum of the first term in the previous bound:
\begin{align*}
  \sum_{t=1}^T \frac{\norm{z_t - u}_2^2 - \norm{z_{t+1} - u}_2^2}{2\eta_t} & = \frac{\norm{z_1 - u}_2^2}{2\eta_1} - \frac{\norm{z_{T+1} - u}_2^2}{2\eta_T} + \sum_{t=2}^T \left(\frac{1}{2\eta_t} - \frac{1}{2\eta_{t-1}}\right)\norm{z_t - u}_2^2\\
  & = \frac{\norm{z_1 - u}_2^2}{2\eta_1} - \frac{\norm{z_{T+1} - u}_2^2}{2\eta_T} + \frac{\alpha}{4}\sum_{t=2}^T \norm{z_t - u}_2^2
\end{align*}
Splitting the leading coefficient as $1/(2\eta_1) = \alpha s_\delta/4 + \alpha/4$ lets us merge the leftover $\frac{\alpha}{4}\norm{z_1 - u}_2^2$ piece into the running sum:
\[
  \sum_{t=1}^T \frac{\norm{z_t - u}_2^2 - \norm{z_{t+1} - u}_2^2}{2\eta_t} = \frac{\alpha s_\delta}{4}\norm{z_1 - u}_2^2 + \frac{\alpha}{4}\sum_{t=1}^T \norm{z_t - u}_2^2 - \frac{\norm{z_{T+1} - u}_2^2}{2\eta_T}.
\]
Dropping the nonpositive last term and using $\norm{z_1 - u}_2 \le D_2$ from Assumption~\ref{ass:convex_and_bounded},
\[
  \sum_{t=1}^T \frac{\norm{z_t - u}_2^2 - \norm{z_{t+1} - u}_2^2}{2\eta_t} \le \frac{\alpha s_\delta D_2^2}{4} + \frac{\alpha}{4}\sum_{t=1}^T \norm{z_t - u}_2^2.
\]
Combining with the per-step OGD inequality and rearranging,
\[
  \sum_{t=1}^T \ip{g_t}{z_t - u} - \frac{\alpha}{4}\sum_{t=1}^T \norm{z_t - u}_2^2 \le \frac{\alpha s_\delta D_2^2}{4} + \frac{G_2^2}{2}\sum_{t=1}^T \eta_t.
\]
The stepsize sum satisfies $\sum_{t=1}^T \eta_t = (2/\alpha)\sum_{t=1}^T 1/(t+s_\delta) \le (2/\alpha)\log(1 + T/s_\delta)$. Substituting yields~\eqref{eq:sc_regret_condition} with $\mathcal{R}_T^{\textnormal{sc}}$ as in~\eqref{eq:ht_ogd_sc_regret_functional}.
\end{proof}

\begin{proof}[Proof of Theorem~\ref{thm:ht_ogd_strong_uppd}]
\label{proof:ht_ogd_strong_uppd}
By Proposition~\ref{prop:ht_ogd_sc_base_learner}, HT-OGD with stepsize $\eta_t = 2/(\alpha(t+s_\delta))$ satisfies Assumption~\ref{ass:base_learner}(ii) and the strongly convex linearized regret bound~\eqref{eq:sc_regret_condition} with $\mathcal{R}_T^{\textnormal{sc}}(u; \alpha)$ given in~\eqref{eq:ht_ogd_sc_regret_functional}, and the cumulative target movement is bounded by~\eqref{eq:ht_ogd_sc_movement_bound}. Applying Corollary~\ref{cor:strong_convex_reduction} yields, with probability at least $1-\delta$,
\begin{align*}
  R_T(u) & \le \frac{\alpha s_\delta D_2^2}{4} + \frac{G_2^2}{\alpha}\log\!\Bigl(1 + \frac{T}{s_\delta}\Bigr) + \Bigl(G_2 + \frac{\alpha D_2}{2}\Bigr) B_\delta \cdot \frac{2G_2}{\alpha}\log\!\Bigl(1 + \frac{T}{s_\delta}\Bigr)\\
  & = \frac{\alpha s_\delta D_2^2}{4} + G_2\left(\frac{G_2}{\alpha}(1+2B_\delta) + D_2B_\delta\right)\log\!\Bigl(1 + \frac{T}{s_\delta}\Bigr)\\
  & \le \frac{\alpha s_\delta D_2^2}{4} + G_2B_\delta \left(\frac{3G_2}{\alpha} + D_2\right)\log\!\Bigl(1 + \frac{T}{s_\delta}\Bigr)\\
  & = O\!\Bigl(s_\delta + B_\delta\log(1+T/s_\delta)\Bigr)
\end{align*}
Since $s_\delta = O(B_\delta)$ and $B_\delta = \Theta(\mu^{-1}\log(T/\delta))$, this simplifies to $R_T(u) = O\bigl((\log(T/\delta)\log T)/\mu\bigr)$, as claimed.
\end{proof}

\subsection{Dynamic regret}

Throughout, we extend the comparator sequence as necessary by setting $u_{T+1} := u_T$.

\begin{proposition}[Constant-stepsize OGD verifies the base learner conditions for dynamic comparators]
\label{prop:ogd_dynamic_base_learner}
Take the projection norm to be Euclidean, $\norm{\cdot}=\norm{\cdot}_2$. Fix $\delta\in(0,1)$, set $B_\delta:=\lceil\mu^{-1}\log(T/\delta)\rceil$, and choose a constant stepsize $\eta\in(0,\rho/(B_\delta G_2)]$. Run HT-OGD (Algorithm~\ref{alg:ht_ogd}) with $\eta_t\equiv\eta$ from $z_1\in J_1$. Then for every comparator sequence $u_{1:T}\in\Y^T$ with path variation $P_{T,2}$, the target sequence $(z_t)_{t=1}^T\subset\Y$ satisfies Assumption~\ref{ass:base_learner}:
\begin{enumerate}[label=(\roman*),leftmargin=2em,itemsep=2pt,topsep=2pt]
  \item Linearized regret bound holds with
  \[
    \mathcal{R}_T(u_{1:T}) := \frac{D_2^2+2D_2 P_{T,2}}{2\eta} + \frac{\eta G_2^2 T}{2}.
  \]
  \item Windowed movement bound holds: for every $t$ with $1\le t\le T-B_\delta$,
  \[
    \sum_{r=t}^{t+B_\delta-1}\norm{z_{r+1}-z_r}_2 \le \rho.
  \]
\end{enumerate}
Moreover, the cumulative target movement satisfies
\begin{equation}
\label{eq:ht_ogd_dynamic_movement_bound}
  \tarmvment \le \eta G_2 T.
\end{equation}
\end{proposition}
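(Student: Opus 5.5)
The proof follows the template of Proposition~\ref{prop:ogd_base_learner}, with a constant stepsize and a dynamic comparator. I would first handle movement, which is purely mechanical. By nonexpansiveness of the Euclidean projection onto $\Y$, each step satisfies $\norm{z_{r+1}-z_r}_2\le\eta\norm{g_r}_2\le\eta G_2$. Summing over any window of $B_\delta$ consecutive rounds gives at most $B_\delta\eta G_2\le\rho$, by the constraint $\eta\le\rho/(B_\delta G_2)$; this is part (ii). Summing over $t\in[T-1]$ gives $\tarmvment\le\eta G_2(T-1)\le\eta G_2T$, which is \eqref{eq:ht_ogd_dynamic_movement_bound}.

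For part (i), I would begin with the same per-step OGD inequality as in Lemma~\ref{lem:ht_ogd_proj_grad_error}, now applied to the moving comparator $u_t$:
\[
\ip{g_t}{z_t-u_t}\le\frac{\norm{z_t-u_t}_2^2-\norm{z_{t+1}-u_t}_2^2}{2\eta}+\frac{\eta G_2^2}{2}.
\]
Summing the second term gives $\eta G_2^2T/2$. The first sum does not telescope directly, because the subtracted term involves $u_t$ rather than $u_{t+1}$. I would rewrite it as
\[
\norm{z_t-u_t}_2^2-\norm{z_{t+1}-u_{t+1}}_2^2+\bigl(\norm{z_{t+1}-u_{t+1}}_2^2-\norm{z_{t+1}-u_t}_2^2\bigr),
\]
using the convention $u_{T+1}:=u_T$.

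The correction term is the main (though standard) obstacle. I would bound it by the difference of squares,
\[
\norm{a}^2-\norm{b}^2=(\norm a-\norm b)(\norm a+\norm b)\le\norm{u_{t+1}-u_t}_2\cdot 2D_2,
\]
where the first factor is controlled by the reverse triangle inequality and the second by the diameter, since all points lie in $\Y$. Summing these corrections yields $2D_2P_{T,2}$ (the final term vanishes because $u_{T+1}=u_T$). The telescoped part is at most $\norm{z_1-u_1}_2^2\le D_2^2$.

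Dividing by $2\eta$ then gives exactly $\frac{D_2^2+2D_2P_{T,2}}{2\eta}+\frac{\eta G_2^2T}{2}$, which is the claimed $\mathcal R_T(u_{1:T})$. No stepsize monotonicity argument is needed, since $\eta$ is constant.
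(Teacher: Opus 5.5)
Your proposal is correct and follows the paper's proof essentially step for step. Both use nonexpansiveness of the Euclidean projection to get the per-step $\eta G_2$ movement bound for part (ii) and for $\tarmvment$. For part (i), both insert and subtract $\norm{z_{t+1}-u_{t+1}}_2^2$ with $u_{T+1}:=u_T$, telescope, and bound the correction by $2D_2\norm{u_{t+1}-u_t}_2$ via the reverse triangle inequality and the diameter.
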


\begin{proof}
\textbf{Movement and part (ii).} By nonexpansiveness of the $\ell_2$ projection and the hypothesis $\eta\le\rho/(B_\delta G_2)$, for every $t$,
\[
  \norm{z_{t+1}-z_t}_2
  = \norm{\Proj_{\Y}^{\norm{\cdot}_2}(z_t-\eta g_t)-\Proj_{\Y}^{\norm{\cdot}_2}(z_t)}_2
  \le \eta\norm{g_t}_2
  \le \eta G_2.
\]
Therefore for every $t$ with $1\le t\le T-B_\delta$,
\[
  \sum_{r=t}^{t+B_\delta-1}\norm{z_{r+1}-z_r}_2 \le B_\delta\,\eta G_2 \le \rho,
\]
which is part (ii). Summing $\norm{z_{t+1}-z_t}_2\le\eta G_2$ over $t \in [T-1]$ gives \eqref{eq:ht_ogd_dynamic_movement_bound}.

\textbf{Part (i).} By the one-step projection argument used in the proof of Lemma~\ref{lem:ht_ogd_proj_grad_error}, applied with comparator $u_t$, the projected OGD update with constant stepsize gives
\[
  \ip{g_t}{z_t-u_t}
  \le
  \frac{\norm{z_t-u_t}_2^2-\norm{z_{t+1}-u_t}_2^2}{2\eta}
  +
  \frac{\eta G_2^2}{2}.
\]
By inserting and subtracting $\tfrac{1}{2\eta}\norm{z_{t+1}-u_{t+1}}_2^2$ then summing over $t$, we may telescope the series:
\begin{align*}
    \sum_{t=1}^T \ip{g_t}{z_t-u_t} & \leq \frac{1}{2\eta} \sum_{t=1}^T \left(\norm{z_{t+1} - u_{t+1}}_2^2 - \norm{z_{t+1} - u_t}_2^2\right)\\
    & \quad + \frac{1}{2\eta} \sum_{t=1}^T  \left(\norm{z_t - u_t}_2^2 - \norm{z_{t+1} - u_{t+1}}_2^2\right) + \frac{\eta G_2^2 T}{2}\\
    & = \frac{1}{2\eta} \sum_{t=1}^T \left(\norm{z_{t+1} - u_{t+1}}_2^2 - \norm{z_{t+1} - u_t}_2^2\right)\\
    & \quad + \frac{1}{2\eta}\left(\norm{z_1-u_1}_2^2 - \norm{z_{T+1} - u_{T+1}}_2^2\right) + \frac{\eta G_2^2 T}{2}\\
    & \leq \frac{1}{2\eta}\norm{z_1-u_1}_2^2 + \frac{1}{2\eta} \sum_{t=1}^T \left(\norm{z_{t+1} - u_{t+1}}_2^2 - \norm{z_{t+1} - u_t}_2^2\right) + \frac{\eta G_2^2 T}{2}.
\end{align*}
Since $\norm{z_1-u_1}_2\le D_2$ and
\[
  \norm{z_{t+1}-u_{t+1}}_2^2-\norm{z_{t+1}-u_t}_2^2 \le 2 D_2\,\norm{u_{t+1}-u_t}_2
\]
using the reverse triangle inequality and the diameter bound on $\Y$, we obtain
\[
  \baseregret(u_{1:T}) = \sum_{t=1}^T \ip{g_t}{z_t-u_t} \le \frac{D_2^2+2 D_2 P_{T,2}}{2\eta} + \frac{\eta G_2^2 T}{2} = \mathcal{R}_T(u_{1:T}).
\]
\end{proof}

\begin{proof}[Proof of Theorem~\ref{thm:ht_ogd_dynamic_uppd}]
By Proposition~\ref{prop:ogd_dynamic_base_learner}, the OGD target sequence satisfies Assumption~\ref{ass:base_learner} with $\mathcal{R}_T(u_{1:T}) = (D_2^2 + 2 D_2 P_{T,2})/(2\eta) + \eta G_2^2 T/2$ and $\tarmvment\le \eta G_2 T$ via \eqref{eq:ht_ogd_dynamic_movement_bound}. Theorem~\ref{thm:uppd_reduction} then gives with probability at least $1-\delta$ for every comparator sequence $u_{1:T}\in\Y^T$,
\[
  R_T(u_{1:T}) \le \mathcal{R}_T(u_{1:T}) + G_2 B_\delta\,\tarmvment
  \le \frac{D_2^2+2 D_2 P_{T,2}}{2\eta} + \Bigl(B_\delta+\frac{1}{2}\Bigr)\eta G_2^2 T.
\]
Balancing the two terms, the unconstrained minimizer is $\eta^*=\tfrac{1}{G_2}\sqrt{\tfrac{D_2(D_2+2P_{T,2})}{(2B_\delta+1)T}}$. Provided $\eta^*\le\rho/(B_\delta G_2)$, substituting $\eta^*$ gives the stated bound
\[
  R_T(u_{1:T}) \le G_2\sqrt{(2B_\delta+1)\,D_2(D_2+2P_{T,2})\,T}.
\]
\end{proof}

\subsubsection{The SOGD base learner and proof of Theorem~\ref{thm:ht_sogd_regret_uppd}}
\label{sec:sogd_proof}

The SOGD base learner is the smoothed OCO algorithm of \citet{zhang2022soco}. It runs a grid of OGD experts with different time scales and combines them with a meta-learner that penalizes switching. \citet{ichikawa2026nonstationary} used this learner for hidden-target inventory under a single linear capacity constraint. We use the same base learning primitive, but the inventory layer is different: the queue reduction only asks for a regret-plus-switching-cost bound and a local movement bound.

\begin{algorithm}[H]
\caption{Hidden-target method with an SOGD base learner}
\label{alg:sogd}
\KwInput{Domain $\Y$, switching parameter $\lambda>0$, horizon $T$, initial target $z_1\in J_1$}
Initialize the SOGD base learner of \citet{zhang2022soco} on $\Y$ at $z_1$\;
\For{$t=1,2,\dots,T$}{
  Observe $x_t$ and define $J_t=\Y\cap\{y\in\R^n\mid y\succeq x_t\}$\;
  Let $z_t$ be the current target produced by SOGD and implement $y_t=\Proj_{J_t}^{\norm{\cdot}_2}(z_t)$\;
  Observe $g_t\in\partial\ell_t(y_t)$ and the next state $x_{t+1}$\;
  Feed the shifted linear loss $\widetilde f_t(z)=\ip{g_t}{z}-\min_{w\in\Y}\ip{g_t}{w}$ to SOGD and receive $z_{t+1}$\;
}
\end{algorithm}

The shifted loss is nonnegative and has the same linear regret difference, since $\widetilde f_t(z_t)-\widetilde f_t(u_t)=\ip{g_t}{z_t-u_t}$ and is required by the SOGD algorithm. The hidden-target reduction (Theorem~\ref{thm:uppd_reduction}) does not require a separate abstract interface for switching-aware base learners. As we show below, the native SOGD bound can be rearranged into the form of Assumption~\ref{ass:base_learner}(i), with the switching-cost coefficient producing exactly the cancellation needed in the reduction.

We now verify that SOGD satisfies Assumption~\ref{ass:base_learner}. The base learner architecture again follows \citet{ichikawa2026nonstationary}: a grid of OGD experts is combined by a switching-aware smoothed OGD meta-learner. What must be re-established here is that this learner satisfies not only a switching-aware regret bound, but also the local hidden target movement condition required by the queue reduction on arbitrary convex sets.

For the analysis, we use the concrete SOGD parameterization of \citet{zhang2022soco} with the projection norm taken to be Euclidean: let
\[
  K := \Bigl\lfloor \log_2\!\Bigl(\frac{T}{32\max\{\lambda,1\}\log T}\Bigr) \Bigr\rfloor + 1,
  \qquad
  n^{(k)} := T2^{1-k},
  \qquad
  \eta^{(k)} := \frac{D_2}{G_2\sqrt{(1+2\lambda)n^{(k)}}}
\]
For each $k \in [K]$, we run OGD expert $A_k$ with step size $\eta^{(k)}$ and aggregate the experts with the discounted-normal-predictor combiner using $M=2$ and $Z=1/T$. With an additional assumption on the size of the horizon $T$, we prove that SOGD verifies the base learner conditions:

\begin{proposition}
\label{prop:dynamic_regret_sogd_transfer}
Assume $\lambda \ge 1$, $T \ge \max\{32\lambda\log T,e\}$, and run Algorithm~\ref{alg:sogd} with the SOGD parameterization above. Then the target sequence $(z_t)_{t=1}^T$ satisfies the per-step movement bound
\begin{equation}
\label{eq:sogd_local_tgt_movement}
  \norm{z_{t+1}-z_t}_2 \le \frac{2D_2}{\lambda}
  \qquad \text{for all } t \in [T-1],
\end{equation}
and the linearized regret bound
\begin{align}
\label{eq:sogd_regret_interface}
\begin{split}
  \baseregret(u_{1:T}) \le \mathcal R_T^{\mathrm{SOGD}}(u_{1:T};\lambda) - G_2\lambda\tarmvment
\end{split}
\end{align}
holds for every comparator sequence $u_{1:T}\in\Y^T$ with path variation $P_{T,2}$, where $\mathcal R_T^{\mathrm{SOGD}}(u_{1:T};\lambda)$ is defined by \eqref{eq:sogd_native_regret}.
\end{proposition}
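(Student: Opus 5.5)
The proof treats SOGD as a black box described by its native guarantee in \citet{zhang2022soco}. We extract two facts from that paper, adapted to the shifted linear losses $\widetilde f_t$ fed in Algorithm~\ref{alg:sogd}.

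The first fact is the smoothed OCO bound. For the parameterization above (grid of $K$ OGD experts with step sizes $\eta^{(k)}$, discounted-normal-predictor combiner with $M=2$, $Z=1/T$), SOGD guarantees, for every comparator sequence,
\[
\sum_{t=1}^T\bigl(\widetilde f_t(z_t)-\widetilde f_t(u_t)\bigr)+\lambda G_2\sum_{t=1}^{T-1}\norm{z_{t+1}-z_t}_2\le \mathcal R_T^{\mathrm{SOGD}}(u_{1:T};\lambda).
\]
Here the switching cost is weighted by $\lambda$ times the Lipschitz scale $G_2$. Since $\widetilde f_t(z_t)-\widetilde f_t(u_t)=\ip{g_t}{z_t-u_t}$, moving the switching term to the right-hand side gives \eqref{eq:sogd_regret_interface} directly.

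The main work is checking that the constants in \eqref{eq:sogd_native_regret} match the Zhang et al.\ bound under our normalization. The shifted losses are nonnegative and $G_2$-Lipschitz in $\ell_2$. Their range is at most $G_2D_2$, and this is the scale the DNP combiner needs. The first term $2G_2D_2\sqrt{(1+\lambda)T(1+2P_{T,2}/D_2)}$ comes from the best expert in the grid, obtained by the usual doubling argument over $n^{(k)}$. For each expert, the dynamic-regret-plus-switching bound of constant-step OGD is $\frac{D_2^2+2D_2P_{T,2}}{2\eta}+\frac{\eta G_2^2T}{2}+\lambda G_2\eta G_2T$, by Proposition~\ref{prop:ogd_dynamic_base_learner}'s telescoping together with $\norm{z_{t+1}-z_t}_2\le\eta G_2$. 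The second term, $120G_2D_2\sqrt{\lambda T(1+2P_{T,2}/D_2)\log T}$, is the meta-learner's regret-plus-switching overhead. The condition $T\ge 32\lambda\log T$ ensures $K\ge1$, so the grid is nonempty and the finest expert covers the needed scale. I expect this constant bookkeeping, and confirming that the meta-combiner's switching is charged at the same weight $\lambda G_2$, to be the main obstacle. If the cited bound is stated with a slightly different normalization, I would rescale losses by $G_2D_2$ and rescale back.

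For the per-step movement \eqref{eq:sogd_local_tgt_movement}, write $z_t=\sum_k p_t^{(k)}z_t^{(k)}$ as a convex combination of expert iterates. Then split
\[
z_{t+1}-z_t=\sum_k p_{t+1}^{(k)}(z_{t+1}^{(k)}-z_t^{(k)})+\sum_k(p_{t+1}^{(k)}-p_t^{(k)})z_t^{(k)}.
\]
The first sum has norm at most $\max_k\eta^{(k)}G_2=D_2/\sqrt{(1+2\lambda)n^{(K)}}$. Since $n^{(K)}\ge 16\lambda\log T\ge\lambda^2$ is not generally true, I would instead use $n^{(K)}\ge 16\lambda\log T$ together with $\log T\ge1$, and bound this piece by $D_2/\lambda$ whenever $(1+2\lambda)\cdot16\lambda\ge\lambda^2$, which always holds. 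For the second sum, I would center the $z_t^{(k)}$ at a common point of $\Y$ and bound its norm by $D_2\norm{p_{t+1}-p_t}_1$. The DNP combiner's per-step weight change is controlled by its learning rate of order $1/\sqrt{\lambda T}$ times the loss range, giving at most $1/\lambda$ (again using $T\ge32\lambda\log T$). Adding the two pieces gives $2D_2/\lambda$.

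Combining these gives both claims of the proposition. This is exactly what Theorem~\ref{thm:ht_sogd_regret_uppd} needs. With $\lambda_\delta\ge 2D_2B_\delta/\rho$, the window movement is at most $B_\delta\cdot 2D_2/\lambda_\delta\le\rho$. With $\lambda_\delta\ge B_\delta$, the term $-G_2\lambda\tarmvment$ cancels the reduction's $G_2B_\delta\tarmvment$.
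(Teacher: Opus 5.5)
Your treatment of the regret bound \eqref{eq:sogd_regret_interface} is the same as the paper's. You invoke Theorem~4 of \citet{zhang2022soco} for the shifted losses, note that $\max\{\sqrt\lambda,1\}=\sqrt\lambda$ when $\lambda\ge1$, and move the switching term to the right-hand side. Re-deriving the constants is unnecessary. Flattening $z_t=v_t^{(K)}$ into a mixture $\sum_k p_t^{(k)}a_t^{(k)}$ of expert iterates, with $p_t^{(k)}=\omega_t^{(k)}\prod_{j>k}(1-\omega_t^{(j)})$ and $\omega^{(1)}:=1$, is a valid alternative to the paper's level-by-level recursion $M_k\le M_{k-1}+m_k+D_2\kappa_k$. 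Your bound $\max_k\eta^{(k)}G_2$ on the expert-motion piece is also correct.

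The gap is the weight-change piece, which is where the paper's proof actually does its work. SOGD is not a single combiner with learning rate of order $1/\sqrt{\lambda T}$. It is a cascade of $K-1$ discounted-normal-predictor combiners, the $k$-th running at time scale $n^{(k)}=T2^{1-k}$. The only available estimate is Eq.~(67) of \citet{zhang2022soco}: $\abs{\omega_{t+1}^{(k)}-\omega_t^{(k)}}\le\frac{1}{\sqrt\lambda}\bigl(\sqrt{\log T/n^{(k)}}+\frac{1}{4T}\bigr)$. At the finest level, $n^{(K)}$ lies between $32\lambda\log T$ and $64\lambda\log T$, so that one weight's bound is already of order $1/\lambda$, not $1/\sqrt{\lambda T}$. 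Your claim $\norm{p_{t+1}-p_t}_1\le1/\lambda$ is therefore asserted rather than derived.

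Two steps are missing:
\begin{enumerate}
\item[(a)] Control the aggregate mixture change by the per-level changes, for example $\norm{p_{t+1}-p_t}_1\le 2\sum_{k\ge2}\abs{\omega_{t+1}^{(k)}-\omega_t^{(k)}}$. This holds because changing a single $\omega^{(k)}$ by $\varepsilon$ moves the mixture by at most $2\varepsilon$ in $\ell_1$.
\item[(b)] Sum the geometric series $\sum_k (n^{(k)})^{-1/2}\le(2+\sqrt2)/\sqrt{n^{(K)}}\le 1/\sqrt{2\lambda\log T}$. This is where the choice of $K$, i.e.\ $n^{(K)}\ge32\lambda\log T$, is really used, and where the $\sqrt{\log T}$ coming from $Z=1/T$ cancels. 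The hypothesis $T\ge32\lambda\log T$ is not merely there to make $K\ge1$.
\end{enumerate}

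Together these give $\sum_{k\ge2}\kappa_k\le 5/(4\sqrt2\,\lambda)$. So the best available bound on $\norm{p_{t+1}-p_t}_1$ is about $1.77/\lambda$, not $1/\lambda$. Combined with your crude estimates ($D_2/\lambda$ for the first piece, $D_2\norm{p_{t+1}-p_t}_1$ for the second), this exceeds $2D_2/\lambda$. To close the constant you need one of two sharpenings. You can split $p_{t+1}-p_t$ into positive and negative parts to get the weight-change term $\le\frac{D_2}{2}\norm{p_{t+1}-p_t}_1$. Alternatively, you can use the sharper first-piece bound $D_2/\sqrt{(1+2\lambda)\,32\lambda}\le D_2/(8\lambda)$.
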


\begin{proof}[Proof of Proposition~\ref{prop:dynamic_regret_sogd_transfer}]
\label{proof:dynamic_regret_sogd_transfer}
Rearranging \eqref{eq:sogd_regret_interface} into the equivalent form $\baseregret(u_{1:T}) + G_2\lambda\,\tarmvment \le \mathcal R_T^{\mathrm{SOGD}}(u_{1:T};\lambda)$, this is the $\lambda\ge1$ specialization of Theorem~4 of \citet{zhang2022soco} on the interval $[1,T]$, with $G=G_2$, $D=D_2$, and $P_{1,T}=P_{T,2}$. The source theorem's second term is $120GD\max\{\sqrt\lambda,1\}\sqrt{T(1+2P_{1,T}/D)\log T}$, which equals the second term in \eqref{eq:sogd_native_regret} because $\lambda\ge1$. The theorem applies since each $\widetilde f_t$ is convex, takes values in $[0,G_2 D_2]$ on $\Y$, has gradient norm at most $G_2$, and satisfies
\[
  \widetilde f_t(z_t)-\widetilde f_t(u_t)=\ip{g_t}{z_t-u_t}.
\]

It remains to prove the local movement bound \eqref{eq:sogd_local_tgt_movement}. Following the framework of Algorithm~4 in \citet{zhang2022soco}, write $a_t^{(k)}$ for the output of expert $A_k$, $v_t^{(k)}$ for the output of the $k$th aggregate, and $\omega_t^{(k)}\in[0,1]$ for the combiner weight that forms $v_t^{(k)}$ from $v_t^{(k-1)}$ and $a_t^{(k)}$. Thus $z_t=v_t^{(K)}$. The proof uses two facts: each OGD expert moves by at most its step size times $G_2$, and each combiner weight changes slowly by Eq.~(67) of \citet{zhang2022soco}.

For the experts, nonexpansiveness of Euclidean projection gives
\[
  m_k
  :=
  \sup_t \norm{a_{t+1}^{(k)}-a_t^{(k)}}_2
  \le
  \eta^{(k)} G_2
  =
  \frac{D_2}{\sqrt{(1+2\lambda)n^{(k)}}}.
\]
For the combiner weights, Eq.~(67) of \citet{zhang2022soco} with $Z=1/T$ and $\lambda\ge1$ yields
\[
  \kappa_k
  :=
  \sup_t \abs{\omega_{t+1}^{(k)}-\omega_t^{(k)}}
  \le
  \frac{1}{\sqrt{\lambda}}\left(
  \sqrt{\frac{\log T}{n^{(k)}}}+ \frac{1}{4T}\right)
\]
Since $n^{(k)}\le T$ and $T\ge e$, we have
\[
  \frac{1}{4T}
  \le
  \frac{1}{4\sqrt{T}}
  \le
  \frac{1}{4}\sqrt{\frac{\log T}{n^{(k)}}}
\]
so we may loosen the bound on $\kappa_k$ as:
\[
  \kappa_k
  \le
  \frac{1}{\sqrt{\lambda}}\left(\sqrt{\frac{\log T}{n^{(k)}}} + \frac{1}{4}\sqrt{\frac{\log T}{n^{(k)}}}\right)
  \le
  \frac{5}{4\sqrt{\lambda}}
  \sqrt{\frac{\log T}{n^{(k)}}}
\]

Now define
\[
  M_k := \sup_t \norm{v_{t+1}^{(k)}-v_t^{(k)}}_2
\]
Since $v_t^{(1)}=a_t^{(1)}$, we have $M_1=m_1$. For $k\ge2$,
\[
  v_t^{(k)}=(1-\omega_t^{(k)})v_t^{(k-1)}+\omega_t^{(k)}a_t^{(k)}
\]
Taking the difference of $v_{t+1}^{(k)}$ and $v_t^{(k)}$, we have
\[
v_{t+1}^{(k)} - v_{t}^{(k)} = \left((1-\omega_{t+1}^{(k)})v_{t+1}^{(k-1)}+\omega_{t+1}^{(k)}a_{t+1}^{(k)}\right) - \left((1-\omega_t^{(k)})v_t^{(k-1)}+\omega_t^{(k)}a_t^{(k)}\right)
\]
Add and subtract $(1-\omega_{t+1}^{(k)})v_t^{(k-1)}$ and $\omega_{t+1}^{(k)}a_t^{(k)}$. Then using triangle inequality,
\begin{align*}
  \norm{v_{t+1}^{(k)}-v_t^{(k)}}_2
  \le{}&
  (1-\omega_{t+1}^{(k)})\norm{v_{t+1}^{(k-1)}-v_t^{(k-1)}}_2 \\
  &\;+ \omega_{t+1}^{(k)}\norm{a_{t+1}^{(k)}-a_t^{(k)}}_2 \\
  &\;+ \abs{\omega_{t+1}^{(k)}-\omega_t^{(k)}}\,
  \norm{a_t^{(k)}-v_t^{(k-1)}}_2
\end{align*}
Both $a_t^{(k)}$ and $v_t^{(k-1)}$ belong to $\Y$, whose diameter is $D_2$, and since the weights $\omega^{(k)}_t, \omega^{(k)}_{t+1} \in [0, 1]$, we can bound these distances as
\[
  M_k \le M_{k-1}+m_k+D_2\kappa_k
\]
Summing this recursion from $k=2$ to $k=K$ telescopes the $M_k$ terms and gives
\[
  M_K \le \sum_{k=1}^K m_k + D_2\sum_{k=2}^K \kappa_k
\]

where the sum for the $m_k$ terms starts at 1 since $M_1 = m_1$. Now, we will bound the sum $\sum_{k=1}^K \frac{1}{\sqrt{n^{(k)}}}$ since this will be used to bound each of the two sums above. Because $n^{(k)}=T2^{1-k}$ and our choice of $K$ is
\[
  K = \Bigl\lfloor \log_2\!\Bigl(\frac{T}{32\lambda\log T}\Bigr) \Bigr\rfloor + 1
\]
we have $n^{(K)} \ge 32\lambda\log T$. Therefore
\[
  \sum_{k=1}^K \frac{1}{\sqrt{n^{(k)}}}
  = \frac{1}{\sqrt{n^{(K)}}}\sum_{k=0}^{K-1} 2^{-k/2}
  \le
  \frac{1}{\sqrt{n^{(K)}}}\sum_{k=0}^\infty 2^{-k/2}
  =
  \frac{2+\sqrt{2}}{\sqrt{n^{(K)}}}
  \le
  \frac{1}{\sqrt{2\lambda\log T}}
\]
Consequently,
\[
  \sum_{k=1}^K m_k
  \le
  \frac{D_2}{\sqrt{1+2\lambda}}
  \sum_{k=1}^K \frac{1}{\sqrt{n^{(k)}}}
  \le
  \frac{D_2}{\sqrt{1+2\lambda}} \cdot \frac{1}{\sqrt{2\lambda \log T}}
  \leq
  \frac{D_2}{2\lambda\sqrt{\log T}}
  \le
  \frac{D_2}{2\lambda}
\]
where we used $\sqrt{2\lambda} \leq \sqrt{2\lambda +1}$ and $T\ge e$. Similarly,
\[
  D_2\sum_{k=2}^K \kappa_k
  \le
  \frac{5D_2\sqrt{\log T}}{4\sqrt{\lambda}}
  \sum_{k=2}^K \frac{1}{\sqrt{n^{(k)}}}
  \le
  \frac{5D_2\sqrt{\log T}}{4\sqrt{\lambda}} \cdot \frac{1}{\sqrt{2\lambda\log T}}
  = \frac{5D_2}{4\lambda\sqrt{2}}
  \leq
  \frac{D_2}{\lambda}
\]
Combining these bounds, we have $M_K\le 2D_2/\lambda$. Since $z_t=v_t^{(K)}$, this proves \eqref{eq:sogd_local_tgt_movement}.
\end{proof}

This proposition lets us show that the SOGD instantiation satisfies Assumption~\ref{ass:base_learner} and invoke the hidden-target reduction.

\begin{proof}[Proof of Theorem~\ref{thm:ht_sogd_regret_uppd}]
Set $\lambda=\lambda_\delta$. Since $\lambda_\delta\ge B_\delta\ge 1$ and $T\ge\max\{32\lambda_\delta\log T,e\}$ by assumption, Proposition~\ref{prop:dynamic_regret_sogd_transfer} applies. The local movement bound \eqref{eq:sogd_local_tgt_movement} gives, for every $t$ with $1\le t\le T-B_\delta$,
\[
  \sum_{r=t}^{t+B_\delta-1}\norm{z_{r+1}-z_r}_2 \le B_\delta\cdot\frac{2D_2}{\lambda_\delta} \le \rho,
\]
because $\lambda_\delta\ge 2 D_2 B_\delta/\rho$ by construction. This verifies Assumption~\ref{ass:base_learner}(ii), and the linearized regret bound \eqref{eq:sogd_regret_interface} verifies Assumption~\ref{ass:base_learner}(i) with $\mathcal R_T(u_{1:T}) = \mathcal R_T^{\mathrm{SOGD}}(u_{1:T};\lambda_\delta) - G_2\lambda_\delta\,\tarmvment$. Applying Theorem~\ref{thm:uppd_reduction}, with probability at least $1-\delta$,
\begin{align*}
  R_T(u_{1:T})
  & \le \mathcal R_T^{\mathrm{SOGD}}(u_{1:T};\lambda_\delta) - G_2\lambda_\delta\tarmvment + G_2 B_\delta\tarmvment\\
  & = \mathcal R_T^{\mathrm{SOGD}}(u_{1:T};\lambda_\delta) - G_2(\lambda_\delta - B_\delta)\tarmvment\\
  & \le \mathcal R_T^{\mathrm{SOGD}}(u_{1:T};\lambda_\delta)\\
\end{align*}
since $\lambda_\delta\ge B_\delta$, proving the regret bound \eqref{eq:sogd_regret_bound_uppd}.

It remains to justify the explicit simplification in
\eqref{eq:sogd_regret_bound_uppd_explicit}. Since
$\lambda_\delta\ge 1$, $\log T\ge 1$, and
$1+\lambda_\delta\le 2\lambda_\delta$, the native SOGD bound gives
\[
\begin{split}
  \mathcal R_T^{\mathrm{SOGD}}(u_{1:T};\lambda_\delta)
  &\le
  2G_2D_2
  \sqrt{2\lambda_\delta T
  \Bigl(1+\frac{2P_{T,2}}{D_2}\Bigr)\log T}  \\
  &\qquad
  +120G_2D_2
  \sqrt{\lambda_\delta T
  \Bigl(1+\frac{2P_{T,2}}{D_2}\Bigr)\log T} \\
  &=
  (120+2\sqrt2)G_2D_2
  \sqrt{\lambda_\delta T
  \Bigl(1+\frac{2P_{T,2}}{D_2}\Bigr)\log T}.
\end{split}
\]
We now remove the ceilings. Since $\mu\le1$, $T\ge e$, and
$\delta\in(0,1)$,
\[
  \frac{1}{\mu}\log\!\Bigl(\frac{T}{\delta}\Bigr)\ge 1 .
\]
Using $\lceil x\rceil\le 2x$ for $x\ge1$, we get
\[
  B_\delta
  \le
  \frac{2}{\mu}\log\!\Bigl(\frac{T}{\delta}\Bigr),
\]
and hence
\[
\begin{split}
  \lambda_\delta
\le
  2B_\delta\max\!\Bigl\{1,\frac{2D_2}{\rho}\Bigr\}  
\le
  \frac{8}{\mu}
  \log\!\Bigl(\frac{T}{\delta}\Bigr)
  \max\!\Bigl\{1,\frac{D_2}{\rho}\Bigr\}.
\end{split}
\]
Also,
\[
  1+\frac{2P_{T,2}}{D_2}
  \le
  2\Bigl(1+\frac{P_{T,2}}{D_2}\Bigr).
\]
Substituting the last two displays into the preceding SOGD bound yields
\[
\begin{split}
  \mathcal R_T^{\mathrm{SOGD}}(u_{1:T};\lambda_\delta)
  &\le
  4(120+2\sqrt2)G_2D_2
  \sqrt{
  \Bigl(1+\frac{P_{T,2}}{D_2}\Bigr)
  \max\!\Bigl\{1,\frac{D_2}{\rho}\Bigr\}}
  \sqrt{
  \frac{T}{\mu}\log\!\Bigl(\frac{T}{\delta}\Bigr)\log T} \\
  &\le
  500\,G_2D_2
  \sqrt{
  \Bigl(1+\frac{P_{T,2}}{D_2}\Bigr)
  \max\!\Bigl\{1,\frac{D_2}{\rho}\Bigr\}}
  \sqrt{
  \frac{T}{\mu}\log\!\Bigl(\frac{T}{\delta}\Bigr)\log T},
\end{split}
\]
because $4(120+2\sqrt2)\le500$. This proves
\eqref{eq:sogd_regret_bound_uppd_explicit}.
\end{proof}

\subsection{Mirror descent base learners}
\label{app:beyond_euclidean_proofs}

This appendix proves Proposition~\ref{prop:md_base_learner} (the general mirror-descent base learner verification) and uses it to derive both the general OMD regret rate of Theorem~\ref{thm:md_oio_regret} and the entropic-OMD instantiation of Theorem~\ref{thm:ht_entropic_md}.

\subsubsection{General mirror descent}
\label{app:md_base_learner_proof}

\begin{proposition}[One-step mirror descent inequality]
Assume $z_t,z_{t+1}\in\mathcal D$ and that $R$ satisfies the domain, differentiability, and Bregman strong-convexity assumptions stated in Section~\ref{sec:beyond_euclidean}. Then under the OMD update \eqref{eq:md_update}, the following one-step inequality holds for every $u\in\Y$:
\begin{equation}
\label{eq:md_one_step}
  \ip{g_t}{z_t-u}\;\le\;\frac{D_R(u,z_t)-D_R(u,z_{t+1})}{\eta_t} + \frac{\eta_t\dnorm{g_t}^2}{2\sigma}
  .
\end{equation}
\end{proposition}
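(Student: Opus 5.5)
The argument is the standard three-point analysis of mirror descent, built on the first-order optimality condition of the update \eqref{eq:md_update}. Since $z_{t+1}\in\mathcal D$ minimizes the convex function $z\mapsto \eta_t\ip{g_t}{z}+D_R(z,z_t)$ over $\Y$, and $R$ is differentiable at $z_{t+1}$, the variational inequality gives, for every $u\in\Y$,
\[
  \ip{\eta_t g_t+\nabla R(z_{t+1})-\nabla R(z_t)}{u-z_{t+1}}\ge 0 .
\]
Rearranged, this reads $\eta_t\ip{g_t}{z_{t+1}-u}\le \ip{\nabla R(z_{t+1})-\nabla R(z_t)}{u-z_{t+1}}$.

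Next I invoke the three-point identity for Bregman divergences, valid because $z_t,z_{t+1}\in\mathcal D$:
\[
  \ip{\nabla R(z_{t+1})-\nabla R(z_t)}{u-z_{t+1}} = D_R(u,z_t)-D_R(u,z_{t+1})-D_R(z_{t+1},z_t).
\]
Checking it is a direct expansion of the definition of $D_R$, where the $R(u)$ and $R(z_t)$ terms cancel. Combining the two displays yields
\[
  \eta_t\ip{g_t}{z_{t+1}-u}\le D_R(u,z_t)-D_R(u,z_{t+1})-D_R(z_{t+1},z_t).
\]

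To return to $z_t$, I split $\ip{g_t}{z_t-u}=\ip{g_t}{z_{t+1}-u}+\ip{g_t}{z_t-z_{t+1}}$. The second piece is bounded by generalized Hölder, as in the proof of Lemma~\ref{lem:regret_decomp}, by $\dnorm{g_t}\norm{z_t-z_{t+1}}$. The leftover negative term is controlled by the Bregman strong convexity assumption applied with $u=z_{t+1}\in\Y$ and $v=z_t\in\mathcal D$, which gives $-D_R(z_{t+1},z_t)\le-\frac{\sigma}{2}\norm{z_{t+1}-z_t}^2$. Writing $a=\norm{z_t-z_{t+1}}$, Young's inequality then gives
\[
  \eta_t\dnorm{g_t}a-\tfrac{\sigma}{2}a^2\le \frac{\eta_t^2\dnorm{g_t}^2}{2\sigma}.
\]
Dividing by $\eta_t$ yields \eqref{eq:md_one_step}.

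The only delicate step is the optimality condition, because $R$ is merely proper and lower-semicontinuous on $\Y$, and differentiable only on $\mathcal D$. I would justify the variational inequality by a directional-derivative argument. For $u\in\Y$, the point $z_{t+1}+s(u-z_{t+1})$ lies in $\Y$ by convexity, so the objective at this point is no smaller than at $z_{t+1}$. The one-sided derivative at $s=0^+$ is therefore nonnegative. Convexity of $R$ gives $R(z_{t+1}+s(u-z_{t+1}))\le (1-s)R(z_{t+1})+sR(u)$, and this also covers the extended-value case where $R(u)=+\infty$, since then the inequality holds trivially. With differentiability of $R$ at $z_{t+1}$, this derivative equals the inner product written above. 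The remaining steps are routine algebra.
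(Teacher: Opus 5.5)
Your proposal is correct and follows the paper's proof essentially step for step. Both arguments use the first-order optimality condition of the OMD update, the three-point Bregman identity with $(a,b,c)=(z_t,z_{t+1},u)$, and the split $z_t-u=(z_t-z_{t+1})+(z_{t+1}-u)$, followed by generalized H\"older, Young, and Bregman strong convexity to absorb the $-D_R(z_{t+1},z_t)$ term. The only difference is that you add a directional-derivative justification of the variational inequality, which the paper simply asserts.
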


The proof is standard; we include it below for completeness.

\begin{proof}
The OMD objective $\phi(z) := \eta_t\ip{g_t}{z} + D_R(z, z_t)$ has gradient $\nabla\phi(z) = \eta_t g_t + \nabla R(z) - \nabla R(z_t)$. First-order optimality of $z_{t+1} = \arg\min_{z\in\Y}\phi(z)$ over the convex set $\Y$ gives $\ip{\nabla\phi(z_{t+1})}{u-z_{t+1}}\ge 0$ for every $u\in\Y$. Substituting the expression for $\nabla \phi(z)$ and rearranging,
\begin{equation}
\label{eq:md_proof_optimality}
  \eta_t\,\ip{g_t}{z_{t+1}-u} \;\le\; \ip{\nabla R(z_{t+1}) - \nabla R(z_t)}{u - z_{t+1}}.
\end{equation}

Before continuing, we prove an identity that for $a, b, c \in \Y$. Expanding the definition of $D_R$ we have:
\begin{align*}
  D_R(c, a) - D_R(c, b) & = -R(a) - \ip{\nabla R(a)}{c-a} + R(b) + \ip{\nabla R(b)}{c-b}\\
  D_R(b, a) & = R(b) - R(a) - \ip{\nabla R(a)}{b-a}.
\end{align*}
Subtracting the second equation from the first,
\[
D_R(c, a) - D_R(c, b) - D_R(b, a) = \ip{\nabla R(b)}{c-b} - \ip{\nabla R(a)}{c-a} + \ip{\nabla R(a)}{b-a}
\]
The last two terms combine into $\ip{\nabla R(a)}{b-c}$, allowing us to write:
\begin{equation}
  \label{eq:bregman_three_point}
\ip{\nabla R(b)-\nabla R(a)}{c-b} \;=\; D_R(c,a) - D_R(c,b) - D_R(b,a)
\end{equation}
Now, apply this identity with $a=z_t$, $b=z_{t+1}$, $c=u$ and substitute into \eqref{eq:md_proof_optimality}:
\begin{equation}
\label{eq:md_proof_postbregman}
  \eta_t\,\ip{g_t}{z_{t+1}-u} \;\le\; D_R(u,z_t) - D_R(u,z_{t+1}) - D_R(z_{t+1}, z_t).
\end{equation}
Rewriting the inner product in \eqref{eq:md_proof_postbregman} with $z_t-u = (z_t - z_{t+1}) + (z_{t+1}-u)$, we write:
\begin{equation}
\label{eq:md_proof_post_rewrite}
  \eta_t\ip{g_t}{z_t-u} \;\le\; \eta_t\ip{g_t}{z_t-z_{t+1}} - D_R(z_{t+1}, z_t) + D_R(u,z_t) - D_R(u,z_{t+1}).
\end{equation}
Now we bound the first two terms on the right hand side. By the generalized Hölder's inequality, $\ip{g_t}{z_t-z_{t+1}} \le \dnorm{g_t}\,\norm{z_t-z_{t+1}}$. Then by Young's inequality with $a=\eta_t\dnorm{g_t}$ and $b=\norm{z_t-z_{t+1}}$,
\[
  \eta_t\dnorm{g_t}\cdot\norm{z_t-z_{t+1}} \;\le\; \frac{\eta_t^2\dnorm{g_t}^2}{2\sigma} + \frac{\sigma}{2}\norm{z_t-z_{t+1}}^2
\]
Finally, by the Bregman strong-convexity assumption, $D_R(z_{t+1},z_t)\ge \tfrac{\sigma}{2}\norm{z_{t+1}-z_t}^2$. Then combining these two bounds, we have
\[
  \eta_t\ip{g_t}{z_t-z_{t+1}} - D_R(z_{t+1}, z_t) \le \frac{\eta_t^2\dnorm{g_t}^2}{2\sigma} + \frac{\sigma}{2}\norm{z_t-z_{t+1}}^2 - \frac{\sigma}{2}\norm{z_t-z_{t+1}}^2 = \frac{\eta_t^2\dnorm{g_t}^2}{2\sigma}
\]

Substituting back into \eqref{eq:md_proof_post_rewrite} and dividing by $\eta_t>0$ proves \eqref{eq:md_one_step}.
\end{proof}

Using the one-step inequality, we now verify that the OMD iterates satisfy the base learner conditions of Assumption~\ref{ass:base_learner}.

\begin{proposition}[Mirror descent verifies the base learner conditions]
\label{prop:md_base_learner}
Take an admissible norm $\norm{\cdot}$ with dual $\dnorm{\cdot}$ and a regularizer $R$ satisfying the domain, differentiability, and Bregman strong-convexity assumptions stated in Section~\ref{sec:beyond_euclidean}. Write $G_*:=\sup_t\dnorm{g_t}$. Fix $\delta\in(0,1)$, set $B_\delta:=\lceil\mu^{-1}\log(T/\delta)\rceil$, and pick a constant stepsize $\eta\in(0,\rho\sigma/(B_\delta G_*)]$. Run hidden-target projection with the OMD update \eqref{eq:md_update} from any $z_1\in J_1\cap\mathcal D$, and assume the update returns points in $\mathcal D$. Then the target sequence $(z_t)_{t=1}^T\subset\Y$ satisfies Assumption~\ref{ass:base_learner}:
\begin{enumerate}[label=(\roman*),leftmargin=2em,itemsep=2pt,topsep=2pt]
  \item Linearized regret bound holds with $\mathcal{R}_T(u):=D_R(u,z_1)/\eta + \eta T G_*^2/(2\sigma)$ for every $u\in\Y$.
  \item Windowed movement bound: for every $t$ with $1\le t\le T-B_\delta$,
  \[
    \sum_{r=t}^{t+B_\delta-1}\norm{z_{r+1}-z_r} \le B_\delta\,\eta G_*/\sigma \le \rho.
  \]
\end{enumerate}
Moreover, $\tarmvment\le\eta G_* T/\sigma$.
\end{proposition}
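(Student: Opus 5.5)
The plan is to derive both parts from the one-step mirror descent inequality \eqref{eq:md_one_step}, together with a per-step movement bound read off from the optimality condition of the OMD update.

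\textbf{Part (i).} With constant stepsize $\eta_t\equiv\eta$, sum \eqref{eq:md_one_step} over $t\in[T]$ for a fixed $u\in\Y$. The Bregman terms telescope to $\bigl(D_R(u,z_1)-D_R(u,z_{T+1})\bigr)/\eta$. Drop $D_R(u,z_{T+1})\ge0$, which holds because the strong-convexity assumption gives $D_R(u,z_{T+1})\ge\frac{\sigma}{2}\norm{u-z_{T+1}}^2$ and $z_{T+1}\in\mathcal D$. Bound each $\dnorm{g_t}^2$ by $G_*^2$. This yields $\baseregret(u)\le D_R(u,z_1)/\eta+\eta TG_*^2/(2\sigma)$, which is exactly $\mathcal R_T(u)$.

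\textbf{Per-step movement and part (ii).} I will show $\norm{z_{t+1}-z_t}\le\eta G_*/\sigma$ for every $t$. Start from the first-order optimality inequality \eqref{eq:md_proof_optimality} in the proof of the one-step inequality, and take the comparator $u=z_t\in\Y$:
\[
\eta\ip{g_t}{z_{t+1}-z_t}\le\ip{\nabla R(z_{t+1})-\nabla R(z_t)}{z_t-z_{t+1}}.
\]
Rearranging gives
\[
\ip{\nabla R(z_{t+1})-\nabla R(z_t)}{z_{t+1}-z_t}\le\eta\ip{g_t}{z_t-z_{t+1}}.
\]
The left side equals $D_R(z_{t+1},z_t)+D_R(z_t,z_{t+1})$. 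Both points lie in $\mathcal D$, so strong convexity makes this sum at least $\sigma\norm{z_{t+1}-z_t}^2$. By the generalized Hölder inequality, the right side is at most $\eta G_*\norm{z_{t+1}-z_t}$. Dividing through by $\norm{z_{t+1}-z_t}$, and treating the case where it is zero as trivial, gives the claim.

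Summing this per-step bound over any window of $B_\delta$ rounds gives at most $B_\delta\eta G_*/\sigma$. The hypothesis $\eta\le\rho\sigma/(B_\delta G_*)$ makes this at most $\rho$, which is part (ii). Summing the same bound over $t\in[T-1]$ gives $\tarmvment\le\eta G_*T/\sigma$.

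\textbf{Main obstacle.} The only delicate step is the symmetric Bregman lower bound. It needs strong convexity in both argument orders, $D_R(z_t,z_{t+1})$ as well as $D_R(z_{t+1},z_t)$. Both are covered by the stated assumption, since each point lies in $\Y$ and in $\mathcal D$. It also relies on the assumption that the iterates remain in $\mathcal D$, so that $\nabla R$ exists at both points.
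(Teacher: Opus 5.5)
Your proposal is correct and follows the paper's proof: part (i) is the same telescoping of the one-step inequality \eqref{eq:md_one_step}, and part (ii) rests on the same per-step bound $\norm{z_{t+1}-z_t}\le\eta G_*/\sigma$, summed over windows and over $[T-1]$. The only difference is cosmetic. The paper obtains the per-step bound by setting $u=z_t$ in the one-step inequality, which gives $D_R(z_t,z_{t+1})\le\eta^2G_*^2/(2\sigma)$, and then applies strong convexity once. You instead return to the optimality condition and lower-bound the symmetrized Bregman divergence by $\sigma\norm{z_{t+1}-z_t}^2$, which avoids Young's inequality and gives the identical constant.
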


\begin{proof}[Proof of Proposition~\ref{prop:md_base_learner}]
\textbf{Part (i).} Summing \eqref{eq:md_one_step} over $t\in[T]$ with constant stepsize $\eta$, the Bregman terms telescope and since the Bregman divergence is always non-negative, $D_R(u,z_{T+1})\ge 0$ gives
\[
  \baseregret(u) = \sum_{t=1}^T \ip{g_t}{z_t-u}
  \;\le\;\frac{D_R(u,z_1)-D_R(u,z_{T+1})}{\eta} + \frac{\eta T G_*^2}{2\sigma}
  \;\le\;\frac{D_R(u,z_1)}{\eta} + \frac{\eta T G_*^2}{2\sigma}
\]

where the right-hand side is $\mathcal R_T(u)$, finishing the proof of part (i).

\textbf{Part (ii) and movement.} By strong convexity of $R$, we have $D_R(z_{t}, z_{t+1}) \geq \tfrac{\sigma}{2}\norm{z_{t+1}-z_t}^2$. Then note that applying the one-step inequality \eqref{eq:md_one_step} with $u=z_t$, we have $D_R(z_{t}, z_{t+1}) \leq \frac{\eta^2\norm{g_t}^2_*}{2\sigma}$. Combining, we have
\[
  \norm{z_{t+1}-z_t} \le \sqrt{\frac{2}{\sigma} D_R(z_t, z_{t+1})} \le \frac{\eta \norm{g_t}_*}{\sigma} \le \frac{\eta G_*}{\sigma}
\]

Summing over a window of $B_\delta$ consecutive rounds:
\[
  \sum_{r=t}^{t+B_\delta-1}\norm{z_{r+1}-z_r}\;\le\;\frac{B_\delta\,\eta G_*}{\sigma}\;\le\;\rho,
\]
using the stepsize hypothesis. Summing over $t \in [T-1]$ instead gives $\tarmvment\le\eta G_* T/\sigma$.
\end{proof}

\begin{proof}[Proof of Theorem~\ref{thm:md_oio_regret}]
If $G_*=0$, then all linearized losses are constant on $\Y$, the target movement bound is zero, and the claim is immediate. Assume $G_*>0$. Applying Theorem~\ref{thm:uppd_reduction} with $\mathcal{R}_T(u)$ and $\tarmvment$ from Proposition~\ref{prop:md_base_learner} gives
\begin{equation}
  \label{eq:md_base_regret_bound}
R_T(u)\le\frac{D_R(u,z_1)}{\eta} + \frac{\eta T G_*^2}{2\sigma} + G_* B_\delta\cdot\frac{\eta G_* T}{\sigma}
  \;=\;\frac{D_R(u,z_1)}{\eta} + \frac{\eta T G_*^2}{\sigma}\Bigl(\tfrac{1}{2} + B_\delta\Bigr).
\end{equation}

Bounding $D_R(u,z_1)\le\mathcal{B}_R$ gives \eqref{eq:md_master_bound_constrained} for every admissible stepsize. The right-hand side is convex in $\eta$, and its unconstrained minimizer is
\[
\eta^*=\sqrt{\frac{\mathcal{B}_R\sigma}{G_*^2 T(1/2+B_\delta)}}.
\]
The minimizer over the feasible interval $\eta\le \rho\sigma/(B_\delta G_*)$ is therefore $\min\{\eta^*,\rho\sigma/(B_\delta G_*)\}$. If $\eta^*$ satisfies the boundary constraint, substituting $\eta=\eta^*$ into \eqref{eq:md_master_bound_constrained} gives
\[
  R_T(u)\le2 G_*\sqrt{\frac{\mathcal{B}_RT(1/2+B_\delta)}{\sigma}}=\widetilde O\!\left(G_*\sqrt{\frac{\mathcal{B}_RT}{\sigma\mu}}\right),
\]
which is \eqref{eq:md_master_rate}.
\end{proof}

\subsubsection{Entropic OMD on bounded $\R_+^n$}
\label{app:entropic_md}

Take $\Y\subseteq\R_+^n$ bounded with $M:=\sup_{x\in\Y}\norm{x}_1$. The projection norm is $\ell_1$, the dual is $\ell_\infty$, and we write $G_\infty:=\sup_t\norm{g_t}_\infty$. The base learner is mirror descent with the negative-entropy regularizer $R(x):=\sum_{i=1}^n x_i\log x_i$, started from a strictly positive feasible target; Theorem~\ref{thm:ht_entropic_md} uses the uniform target and explicitly assumes it lies in $\Y$. The Bregman divergence of the negative-entropy regularizer $R$ on $\R_+^n$ is the \emph{unnormalized} Kullback--Leibler divergence:
\begin{equation}
\label{eq:unnormalized_kl}
  D_R(u, v) = \sum_{i=1}^n u_i \log\!\frac{u_i}{v_i} - \norm{u}_1 + \norm{v}_1
\end{equation}
which reduces to the classical Kullback-Leibler divergence $D_{KL}$ when $u$ and $v$ are probability distributions since we would have $\norm{u}_1 = \norm{v}_1 = 1$.

\paragraph{Strong convexity of the entropy on $\Y$.}
To prove $R$ is strongly convex with respect to $\ell_1$, we use an equivalent definition based on $R$'s Hessian. On the positive orthant, the Hessian of $R$ at $x\in\R_{++}^n$ is $\nabla^2 R(x)=\mathrm{diag}(1/x_i)$. By Cauchy--Schwarz, for any $v\in\R^n$ and $x\in\R_{++}^n$,
\[
  \norm{v}_1^2 =\left(\sum_{i=1}^n \frac{|v_i|}{\sqrt{x_i}}\cdot\sqrt{x_i}\right)^2
  \le\left(\sum_{i=1}^n \frac{v_i^2}{x_i}\right)\left(\sum_{i=1}^n x_i\right)
  =\ip{v}{\nabla^2 R(x)v}\cdot\norm{x}_1
\]
Hence $\ip{v}{\nabla^2 R(x)v}\ge\norm{v}_1^2/\norm{x}_1\ge\norm{v}_1^2/M$ for every $x\in\Y\cap\R_{++}^n$. The associated Bregman inequality extends to boundary comparators by the lower-semicontinuity of the unnormalized KL divergence, so $R$ is $(1/M)$-strongly convex with respect to $\ell_1$ in the sense used above.

\begin{proof}[Proof of Theorem~\ref{thm:ht_entropic_md}]
We apply Theorem~\ref{thm:md_oio_regret} with $\norm{\cdot}=\ell_1$, $\dnorm{\cdot}=\ell_\infty$, the entropy regularizer $R(x)=\sum_i x_i\log x_i$, $\sigma=1/M$ (from the strong convexity argument above), and $G_*=G_\infty$. The boundary stepsize condition reads $\eta\le\rho/(M B_\delta G_\infty)$. By assumption the uniform initialization $z_1 = (M/n, \ldots, M/n)$ lies in $\Y$; since $x_1=\0$, we have $J_1=\Y$ and hence $z_1\in J_1$.

It remains to justify the domain condition in Theorem~\ref{thm:md_oio_regret}. Suppose $z_t\in\Y\cap\R_{++}^n$ and let $\phi_t(z)=\eta\ip{g_t}{z}+D_R(z,z_t)$. Since $\Y$ is compact and $\phi_t$ is lower semicontinuous on $\Y$, a minimizer exists. No minimizer can have a zero coordinate: if $z_i=0$ for some $i$, then the feasible points $z^\varepsilon=(1-\varepsilon)z+\varepsilon z_t$ satisfy $\phi_t(z^\varepsilon)-\phi_t(z)=O(\varepsilon)+\sum_{i:z_i=0}\varepsilon z_{t,i}\log\varepsilon<0$ for all sufficiently small $\varepsilon>0$. Hence every minimizer lies in $\Y\cap\R_{++}^n$. Induction from the strictly positive $z_1$ proves that all entropy iterates remain in $\mathcal D:=\Y\cap\R_{++}^n$.

Now we must bound $D_R(u, z_1)$. Using the explicit form of $D_R$ in \eqref{eq:unnormalized_kl} with $\norm{z_1}_1 = M$,
\[
  D_R(u, z_1) = \sum_i u_i \log\!\frac{u_i n}{M} - \norm{u}_1 + M
\]
Write $s := \norm{u}_1 \in [0, M]$ and parametrize $u_i = s p_i$ with $p$ a probability vector with $\norm{p}_1 = 1$ (taking $p$ arbitrary if $s = 0$). Substituting,
\[
  D_R(u, z_1) = s \log\!\frac{sn}{M} - sH(p) - s + M
  \qquad 
\]
where $H(p) := -\sum_i p_i \log p_i$ is the Shannon entropy of $p$ and takes values in $[0, \log n]$. So we have $-sH(p) \le 0$, giving
\[
  D_R(u, z_1) \le g(s) := s \log\!\frac{sn}{M} - s + M
\]
On $[0, M]$, the function $g$ has derivative $g'(s) = \log(sn/M)$, so $g$ is decreasing on $[0, M/n]$ and increasing on $[M/n, M]$. Thus, $g$ attains its maximum at either endpoint of the interval: $g(0) = M$ or $g(M) = M\log n$. Therefore
\[
  \mathcal{B}_R \le \max\{M, M\log n\} \le M(1 + \log n)
\]

Substituting $\mathcal{B}_R \le M(1+\log n)$ and $\sigma=1/M$ in the constrained OMD bound \eqref{eq:md_master_bound_constrained} gives \eqref{eq:entropic_constrained_bound}. In the square-root branch, substituting the same quantities in \eqref{eq:md_master_rate} gives
\[
  R_T(u) \le 2 M G_\infty \sqrt{T(1 + \log n)(1/2 + B_\delta)} = \widetilde O\!\left(M G_\infty \sqrt{\frac{T\log n}{\mu}}\right).
\]
\end{proof}

\end{document}